\def\isarxiv{1} 
\renewcommand*{\citet}{\cite} 
\renewcommand*{\citep}{\cite}
\theoremstyle{plain}
\newtheorem{theorem}{Theorem}[section]
\newtheorem{lemma}[theorem]{Lemma}
\newtheorem{definition}[theorem]{Definition}
\newtheorem{assumption}[theorem]{Assumption}
\newtheorem{fact}[theorem]{Fact}
\newtheorem{remark}[theorem]{Remark}
\newcommand{\wt}{\widetilde}
\newcommand{\N}{\mathcal{N}}
\newcommand{\R}{\mathbb{R}}
\newcommand{\RHS}{\mathrm{RHS}}
\newcommand{\LHS}{\mathrm{LHS}}
\renewcommand{\d}{\mathrm{d}}
\newcommand{\attn}{\mathrm{attn}}
\newcommand{\reg}{\mathrm{reg}}
\DeclareMathOperator{\poly}{poly}
\DeclareMathOperator{\diag}{diag}
\DeclareMathOperator{\vect}{vec}
\DeclareMathOperator{\tr}{tr}
\newcommand*{\RN}[1]{\expandafter\@slowromancap\romannumeral #1@}
\begin{document}

\ifdefined\isarxiv

\date{}

\title{Beyond Linear Approximations: A Novel Pruning Approach for Attention Matrix}
\author{
Yingyu Liang\thanks{\texttt{
yingyul@hku.hk}. The University of Hong Kong. \texttt{
yliang@cs.wisc.edu}. University of Wisconsin-Madison.} 
\and
Jiangxuan Long\thanks{\texttt{ lungchianghsuan@gmail.com}. South China University of Technology.}
\and
Zhenmei Shi\thanks{\texttt{
zhmeishi@cs.wisc.edu}. University of Wisconsin-Madison.}
\and 
Zhao Song\thanks{\texttt{ magic.linuxkde@gmail.com}. The Simons Institute for the Theory of Computing at the UC, Berkeley.}
\and 
Yufa Zhou\thanks{\texttt{ yufazhou@seas.upenn.edu}. University of Pennsylvania.}
}

\else

\title{Beyond Linear Approximations: A Novel Pruning Approach for Attention Matrix}

\author{
Yingyu Liang\thanks{\texttt{
yingyul@hku.hk}. The University of Hong Kong. \texttt{
yliang@cs.wisc.edu}. University of Wisconsin-Madison.} 
\And
Jiangxuan Long\thanks{\texttt{ lungchianghsuan@gmail.com}. South China University of Technology.}
\And
Zhenmei Shi\thanks{\texttt{
zhmeishi@cs.wisc.edu}. University of Wisconsin-Madison.}
\And
Zhao Song\thanks{\texttt{ magic.linuxkde@gmail.com}. The Simons Institute for the Theory of Computing at UC, Berkeley.}
\And
Yufa Zhou\thanks{\texttt{ yufazhou@seas.upenn.edu}. University of Pennsylvania.}
}

\fi

\ifdefined\isarxiv
\begin{titlepage}
  \maketitle
  \begin{abstract}
Large Language Models (LLMs) have shown immense potential in enhancing various aspects of our daily lives, from conversational AI to search and AI assistants. However, their growing capabilities come at the cost of extremely large model sizes, making deployment on edge devices challenging due to memory and computational constraints. This paper introduces a novel approach to LLM weight pruning that directly optimizes for approximating the attention matrix, a core component of transformer architectures. Unlike existing methods that focus on linear approximations, our approach accounts for the non-linear nature of the Softmax attention mechanism. We provide theoretical guarantees for the convergence of our Gradient Descent-based optimization method to a near-optimal pruning mask solution. Our empirical results demonstrate the effectiveness of our non-linear pruning approach in maintaining model performance while significantly reducing computational costs, which is beyond the current state-of-the-art methods, i.e., SparseGPT and Wanda, by a large margin. This work establishes a new theoretical foundation for pruning algorithm design in LLMs, potentially paving the way for more efficient LLM inference on resource-constrained devices.

  \end{abstract}
  \thispagestyle{empty}
\end{titlepage}

{\hypersetup{linkcolor=black}
\tableofcontents
}
\newpage

\else

\maketitle

\begin{abstract}

\end{abstract}

\fi

\section{Introduction}\label{sec:intro}
Large Language Models (LLMs) based on the transformer architecture~\citep{vsp+17}, including GPT-4o~\citep{gpt4o}, Claude~\citep{a24}, and OpenAI's recent o1~\citep{o24}, have shown immense potential to enhance our daily lives. 
They revolutionize fields like conversational AI~\citep{lct+24}, AI agents~\citep{xcg+23, cyl+24}, search AI~\citep{o24}, and AI assistants~\citep{mwy+23, zkaw23, khc+24, fjl+24}.
With their growing capabilities, LLMs are powerful tools shaping the future of technology.
However, the current state-of-the-art LLM weights number is extremely large. For instance, the smallest version of Llama 3.1~\citep{m24} needs 8 billion parameters, which takes more than 16GB GPU memory with half float precision and requires significant inference time. 
Deploying such models on edge devices such as smartphones becomes challenging due to their large memory and high computational cost.  
To reduce the LLM model size, many studies work on pruning the LLMs model weights to relax the device memory constraint and minimize response latency. 
The classical pruning problem in LLMs can be formulated as follows. Given a weight matrix $W\in \R^{d\times d}$ and some calibration data $X \in \R^{n \times d}$, where $n$ is input token length, and $d$ is feature dimension, the goal is to find a matrix $\wt{W}$ under some sparse constraint such that 
$
 \|XW -X\wt{W}\|   
$ 
being small under some norm function. 
The above formulation has been widely used in many state-of-the-art pruning methods, such as SparseGPT~\citep{fa23} and Wanda~\citep{slbk24}.

However, the current object functions only focus on the approximation of a linear function $XW$. 
Their optimal solutions do not have a good approximation to the attention matrix (see Figure~\ref{fig:experiment} for details). Note that the attention mechanism is the kernel module of the transformer architecture. 
The high-level insight of their bad performance is that the $\mathsf{Softmax}$ function is very sensitive to the large positive values of the input due to its $\exp$ scaling effect, while pruning mask based on linear approximation cannot capture this sensitivity. 
Thus, in this work, we directly compute the pruning mask on weights to approximate the attention matrix, which is a highly non-linear function, $\mathsf{Softmax}(X W X^\top) \in \R^{n \times n}$. 
To the best of our knowledge, this paper is the first work studying attention weight pruning to directly approximate the attention matrix. 
We provide a theoretical guarantee that optimization based on Gradient Descent (GD) on our loss function can converge to a good pruning mask solution (Theorem~\ref{thm:our_convergence:informal}). 
Furthermore, we preliminarily verified the effectiveness of our method with empirical support (Section~\ref{sec:experiment}).   
Our theoretical foundation may pave the way for more efficient LLM inference on resource-constrained devices.

\begin{figure}
    \centering
    \includegraphics[width=0.98\linewidth]{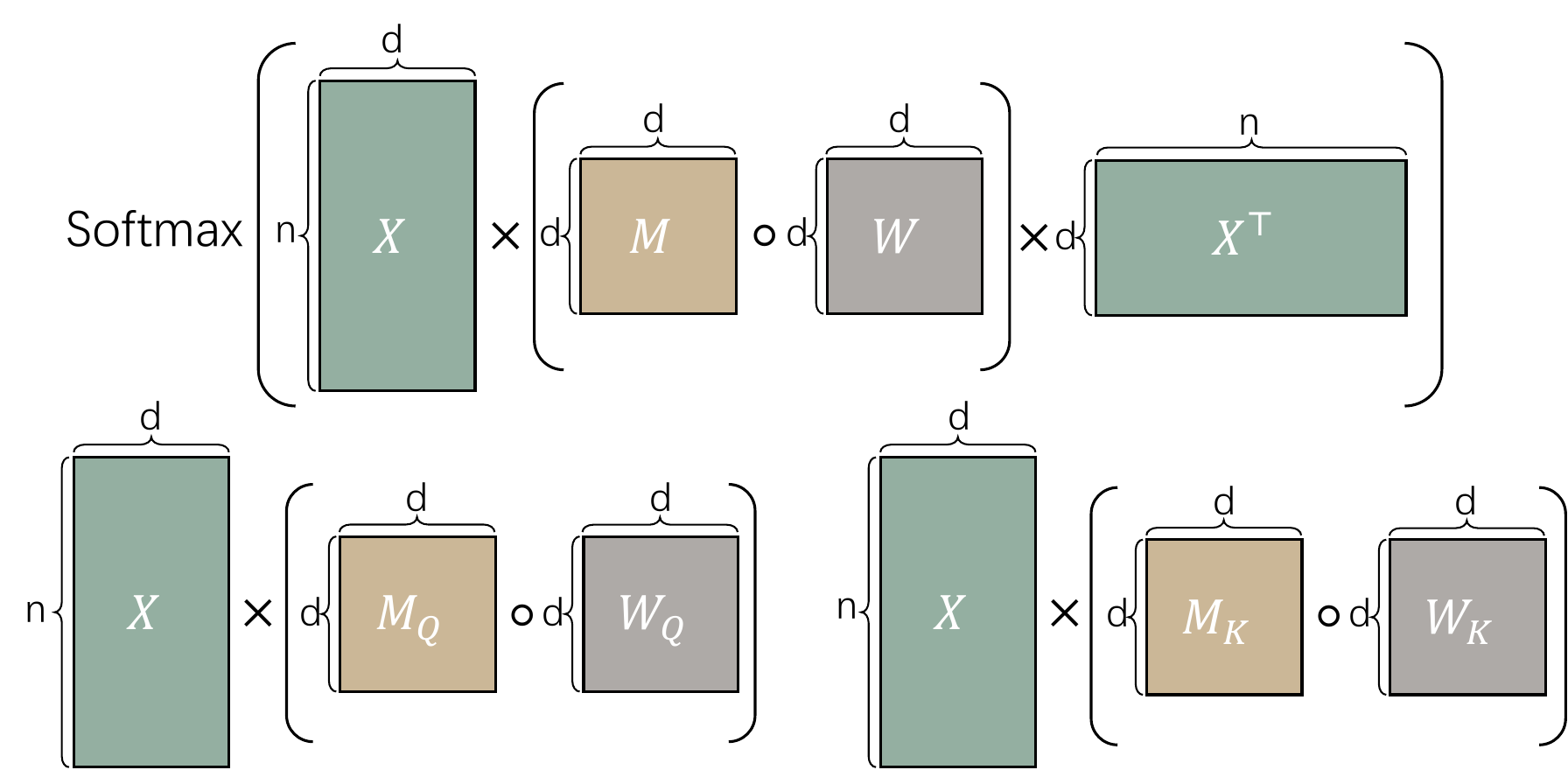}
    \caption{Comparison of our Attention Weights Pruning method and Linear Pruning method such as Wanda and SparseGPT. 
    The top figure illustrates our proposed method of the attention matrix approximation, where pruning is applied directly to the fused attention weight matrix $W$, using only one pruning mask $M$. 
    The bottom figure describes the Linear Pruning method of the linear function approximation, where pruning is applied separately to the query weight matrix $W_Q$ and key weight matrix $W_K$, using two different pruning masks $M_Q$ and $M_K$, respectively. }
    \label{fig:att_prune}
\end{figure}

\textbf{Key background.}
We introduce some key backgrounds. We define the attention matrix in the self-attention mechanism as below:
\begin{definition}[Attention Matrix]\label{def:attn_matrix}

Let $X\in \R^{n \times d}$ be the input.
Given query and key weights matrix $W_Q, W_K \in \R^{d \times d}$, we define $W := W_Q W_K^\top$.
Then, we have the Softmax attention matrix being
\begin{align*}
   \mathsf{Softmax}(XWX^\top) =  D^{-1} \exp(X W X^\top),
\end{align*}
where (1) $D:= \diag( \exp(X W X^\top) \cdot {\bf 1}_n )$,
(2) $\exp$ denotes the exponential function and is applied entry-wisely,
(3) $\diag()$ operation takes a vector and outputs a diagonal matrix with the entries of that vector,
and (4) ${\bf 1}_n$ denotes the length-$n$ all ones vector.
\end{definition}

Further, we introduce the problem setup of our Attention Weights Pruning. 
By selectively reducing the number of non-zero elements in the attention weight matrix $W$ in Definition~\ref{def:attn_matrix}, we can preserve model performance while lowering computational cost and GPU memory usage. 
Below, we formally define the Attention Weights Pruning problem and the corresponding loss function:
\begin{definition}[Attention Weights Pruning]\label{def:attn_prun}
Let $M \in [0,1]^{d \times d}$ be the pruning mask.
Let $X,W$ be defined in Definition~\ref{def:attn_matrix}. 
Let $A := \exp(XWX^\top)$ and $\wt{A} := \exp(X(M \circ W)X^\top)$, where $\circ$ is the Hadamard product.
Let $D := \diag(A \cdot {\bf 1}_n)$ and $\wt{D} := \diag(\wt{A} \cdot {\bf 1}_n)$. 
Let $\lambda \in \R_+$ be the regularization parameter. 
We define the Attention Weights Pruning loss function to be
\begin{align*}
    \mathcal{L}(M) := \frac{1}{2} \|D^{-1} A -  \wt{D}^{-1} \wt{A}\|_F^2 + \frac{1}{2} \lambda \|M\|_F^2.
\end{align*}

Thus, the Attention Weights Pruning optimization problem is $\min_M ~ \mathcal{L}(M)$.
\end{definition}

\textbf{Our contributions.}
This is the first work studying the Attention Weights Pruning problem, which is an approximation problem to a non-linear function.   
We provide an algorithm for obtaining the near-optimal pruning mask based on Gradient Descent (GD) with a convergence guarantee.
\begin{theorem}[Main result, informal version of Theorem~\ref{thm:our_convergence}]\label{thm:our_convergence:informal}
For any $\epsilon > 0$, our Algorithm~\ref{alg:mask_gd} can converge to the near-optimal pruning mask for the Attention Weights Pruning problem (Definition~\ref{def:attn_prun}) in $O(d \poly(n)/\epsilon)$ time with $O(\xi + \epsilon)$ error, where $\xi$ is a small term depending on intrinsic property of the data and weights.
\end{theorem}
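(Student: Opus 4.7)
The plan is to reduce the problem to a standard gradient-descent convergence argument on a bounded, gradient-Lipschitz (smooth) objective, and then exploit the regularizer $\tfrac{\lambda}{2}\|M\|_F^2$ to control the geometry near the mask iterates. First, I would derive a closed-form expression for $\nabla_M \mathcal{L}(M)$ by applying the chain rule through (i) the entrywise Hadamard product $M\circ W$, (ii) the bilinear form $X(M\circ W)X^\top$, (iii) the entrywise $\exp$, and (iv) the softmax row-normalization $\wt{D}^{-1}\wt{A}$. The last step is the delicate piece: since $\wt{D}$ itself depends on $M$ through the row-sums of $\wt{A}$, each row of $\wt{D}^{-1}\wt{A}$ contributes the standard softmax Jacobian $\diag(s)-ss^\top$, which must then be composed with the derivatives of $\wt{A}_{ij} = \exp((X(M\circ W)X^\top)_{ij})$ with respect to the entries $M_{kl}$.

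Second, I would establish smoothness by uniformly bounding the operator norm of the Hessian $\nabla^2 \mathcal{L}(M)$ on the cube $[0,1]^{d\times d}$. Since $M$ is bounded and $X,W$ are fixed, each entry of $X(M\circ W)X^\top$ lies in a compact interval, so $\wt{A}$, $\wt{D}^{-1}$, and all row-sums of $\wt{A}$ have uniform upper and lower bounds; composing these bounds through the chain rule yields a Lipschitz constant $L = \poly(n,d)\cdot \exp(O(R^2 B))$, where $R,B$ bound the norms of $X$ and $W$. This uniform smoothness is the main technical obstacle, because it requires carefully propagating the exponential scaling of the softmax through all the cascaded derivatives (including the softmax-Jacobian terms) without producing vacuous bounds. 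Combined with the $\lambda I$ contribution from the regularizer, the Hessian is $L$-smooth and admits a PL-type inequality in a neighborhood of any critical point, which is the geometric ingredient needed for GD to drive the objective to the value at the optimum at a $1/T$ rate.

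Finally, I would assemble the convergence bound and runtime count. A standard smooth-objective gradient-descent analysis with step size $\eta = 1/L$ yields $\mathcal{L}(M_T) - \mathcal{L}(M^*) \le \epsilon$ after $T = O(L/\epsilon) = O(\poly(n)/\epsilon)$ iterations, and the residual $\xi := \mathcal{L}(M^*)$ at the optimum captures the intrinsic (data- and weight-dependent) mismatch between the softmax attention matrix and its best sparsely-masked counterpart, giving total error $O(\xi + \epsilon)$. Each iteration costs $O(n^2 d + n d^2)$ to form $X(M\circ W)X^\top$, exponentiate entrywise, row-normalize, and backpropagate the softmax-Jacobian through to the mask; aggregating over $T$ steps gives an overall runtime of $O(d\cdot \poly(n)/\epsilon)$, matching the statement of Theorem~\ref{thm:our_convergence:informal}.
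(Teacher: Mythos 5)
Your first two steps (closed-form gradient through the Hadamard product, bilinear form, entrywise $\exp$, and softmax Jacobian $\diag(s)-ss^\top$; then a smoothness bound) match the paper's Theorems~\ref{thm:mat_view_l_informal} and~\ref{thm:lip_grad_l_informal} in spirit, though the paper's Lipschitz constant is polynomial, $L=\lambda+30dn^{7/2}R^6$, not $\exp(O(R^2B))$: the softmax normalization keeps $\wt{f}(M)$ row-stochastic, so all the cascaded factors are bounded by $\poly(n,d,R)$ and the exponentials never appear in the final bound.

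The genuine gap is in your last step. For a nonconvex $L$-smooth objective, GD with step size $1/L$ only guarantees $\min_{t<T}\|\nabla\mathcal{L}(M^{(t)})\|_F^2 = O(L\,\mathcal{L}(M^{(0)})/T)$, i.e.\ convergence to an approximate stationary point; it does \emph{not} give $\mathcal{L}(M_T)-\mathcal{L}(M^*)\le\epsilon$, and a PL inequality that holds only ``in a neighborhood of any critical point'' cannot repair this, since a critical point may be a bad local minimum with large attention error. The paper instead proves a \emph{global} proxy-PL inequality (Lemma~\ref{lem:PL_ineq_informal}): under the structural assumptions $XX^\top\succeq\beta I$ and $\min_{i,j}(\wt{D}^{-1}\wt{A})_{i,j}\ge\delta>0$, one shows $\|\nabla_M\mathcal{L}(M)\|_F^2\ge\frac{1}{2}\mu\bigl(2\mathcal{L}_{\attn}(M)+\frac{2\lambda^2}{\mu}\|M\|_F^2-\xi\bigr)$ for \emph{all} $M$, by chaining the lower bounds $\|W\circ B\|_F\ge\min_{i,j}|W_{i,j}|\,\|B\|_F$, $\|X^\top BX\|_F\ge\beta\|B\|_F$, and a $\delta$-lower bound on the softmax-Jacobian map applied to the residual $c(M)$. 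Plugging this into the proxy-PL convergence theorem of \cite{fg21} converts ``small gradient'' into ``small attention error,'' which is what the theorem actually claims. Your proposal identifies neither these assumptions nor the gradient-norm lower bound, so the error guarantee does not follow. Relatedly, your identification $\xi:=\mathcal{L}(M^*)$ is not the paper's $\xi$: there, $\xi\propto\lambda$ arises from the cross term $2\langle W\circ(X^\top pX),\lambda M\rangle$ in $\|\nabla_M\mathcal{L}(M)\|_F^2$ (a regularization artifact that vanishes as $\lambda\to 0$), and the final bound is an absolute bound on $\mathcal{L}_{\attn}$, not a bound relative to the optimal loss.
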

In the above theorem, $\xi$ can be arbitrarily small as $\xi \rightarrow 0$ when the regularization coefficient $\lambda \rightarrow 0$. 
Thus, our analysis shows that although the objective function is highly non-linear, the GD training can converge to a near-optimal pruning mask solution. 

Our experiments on synthetic clearly align and support our theoretical analysis (Section~\ref{sec:sub:synthetic}). Furthermore, we evaluate our non-linear pruning method and show it beyond SparseGPT and Wanda by a large margin on real data (C4 Dataset~\cite{c4}) under real LLM (Llama 3.2-1B~\cite{Llama32}) in Section~\ref{sec:exp:real} and Section~\ref{sec:sub:perplexity}.

Our contributions are as follows:

\begin{itemize}
    \item This is the first work that analyzes the weights pruning problem based on $\mathsf{Softmax}$ attention, which is a non-linear function. 
    \item We provide the closed form of the gradient of Attention Weights Pruning loss function (Theorem~\ref{thm:mat_view_l_informal}), and Lipschitz of that gradient (Theorem~\ref{thm:lip_grad_l_informal}), 
    \item We provide Gradient Descent based Algorithm~\ref{alg:mask_gd} to obtain the near-optimal pruning mask and its convergence guarantee (Theorem~\ref{thm:our_convergence}).
    \item We conduct experiments to verify the effectiveness of our method (Section~\ref{sec:experiment}), showing our non-linear pruning method is beyond SparseGPT and Wanda by a large margin. 
\end{itemize}

\paragraph{Roadmap.}
In Section~\ref{sec:related_work}, we review the related work. 
Section~\ref{sec:preli} introduces key concepts and definitions essential for the subsequent sections.
In Section~\ref{sec:main_result}, we present our main result.
Section~\ref{sec:techni_overview} offers a technical overview of the methods employed.
Experimental results are discussed in Section~\ref{sec:experiment}.
Finally, Section~\ref{sec:conclusion} summarizes our findings and offers concluding remarks.


\section{Related Work}\label{sec:related_work}
\subsection{Pruning and Compression for LLMs}
Model compression plays a critical role in improving the efficiency and deployment of large language models (LLMs)~\citep{zll+23} for its effectiveness in reducing computational overhead while preserving performance. Common compression techniques include quantization~\citep{pkl+24,xls+24,hkm+24}, pruning~\citep{cjd+21, habn+21, hci+21, jcr+22, fa22, fa23, slbk24, llss24_sparse, zdh24,zzl+24,xzl+23,acd+24, cls+24}, and knowledge distillation~\citep{hly+23,sss23,jccw23,wkm+23}. Specifically, pruning techniques have been developed extensively, such as unstructured pruning, which removes individual weights~\citep{llss24_sparse,slbk24}, and structured pruning, which eliminates entire components like neurons or attention heads~\citep{mln19,acd+24,xgzc24}. \cite{slbk24} proposed Wanda, a novel unstructured pruning technique that uses weight-activation products to induce up to 50\% sparsity in LLMs without retraining, achieving competitive results with significantly lower computational cost.
SparseGPT~\citep{fa23} introduced a one-shot pruning method that achieves up to 60\% sparsity in large GPT-family models with minimal impact on performance.
A follow-up work~\citep{llss24_sparse} improved the complexity analysis of SparseGPT, reducing the running time from $O(d^3)$ to $O(d^{2.53})$, enabling faster pruning on LLMs.
Together, these techniques contribute to more scalable and resource-efficient LLMs, maintaining competitive performance while substantially reducing computational resources.

\subsection{Attention Acceleration}
The attention mechanism has faced criticism due to its quadratic time complexity with respect to context length~\citep{vsp+17}. Addressing this criticism, a variety of approaches are employed, including sparse attention~\citep{hci+21,kkf+23,fa23,llss24_sparse}, low-rank approximations~\citep{rsw16,llr16,eyp+22,zk24,hsk+24,llss25}, and kernel-based methods~\citep{ckns20,lz20,dls+25_pca,zhdk23,lsss24}, to reduce computational overhead and improve scalability. \cite{aa22} enable the derivation of a low-rank representation of the attention matrix, which accelerates both the training and inference processes of single attention layer, tensor attention, and multi-layer transformer, achieving almost linear time complexity~\citep{as23,as24,as24_iclr,lssz24_tat, lss+24,kll+25_var,hwsl24}. 
Other approaches like Mamba~\citep{gd23, dg24}, Linearizing Transformers~\citep{zbkr24, mvk+24}, Hopfield Models~\citep{hyw+23,whl+24,hlsl24,xhh+24,whhl24,hcl+24,hcw+24,hwl24}, and PolySketchFormer \citep{kmz23} focus on architectural modifications and implementation optimizations to enhance performance. System-level optimizations such as FlashAttention~\citep{dfe+22, d23, sbz+24} and block-wise parallel decoding~\citep{ssu18} further improve efficiency. Collectively, these innovations have significantly augmented transformer models' ability to handle longer input sequences, unlocking broader applications across multiple sectors~\citep{cqt+23, pqfs23, dzz+24, myx+24, xsw+24, ahz+24, jhy+24,lls+24_io, lssy24, smn+24}.

\section{Preliminary}\label{sec:preli}
{\bf Notations.}
For any positive integer $n$, we use $[n]$ to denote set $\{1,2,\cdots, n\}$.  
For each $a, b \in \R^n$, we use $a \circ b \in \R^n$ to denote the Hadamard product, i.e., the $i$-th entry of $(a\circ b)$ is $a_i b_i$ for all $i \in [n]$.
For $A \in \R^{m \times n}$, let $A_i \in \R^n$ denote the $i$-th row and $A_{*,j} \in \R^m$ denote the $j$-th column of $A$, where $i \in [m]$ and $j \in [n]$.
We use $\exp(A)$ to denote a matrix where $\exp(A)_{i,j} := \exp(A_{i,j})$ for a matrix $A \in \R^{n \times d}$.
We use $\|A\|_F$ to denote the Frobenius norm of a matrix $A \in \R^{n \times d}$, i.e., $\|A\|_F := \sqrt{\sum_{i \in [n]} \sum_{j \in [d]} |A_{i,j}|^2}$.
For a symmetric matrix $A \in \R^{n \times n}$, $A \succeq 0$ means that $A$ is positive semidefinite (PSD), i.e., for all $x \in \R^n$, we have $x^\top A x \geq 0$.

\begin{algorithm}[!ht]
    \caption{Gradient Descent for Pruning Mask (Theorem~\ref{thm:our_convergence}). Let $X_1, X_2, \dots, X_k \in \R^{n \times d}$ be our calibration dataset of size $k$.
We iteratively run our GD method on this dataset.
}\label{alg:mask_gd}
    \begin{algorithmic}[1]
    \Procedure{PruneMaskGD}{ $X_1, X_2, \dots, X_k \in \R^{n \times d}$, $W \in \R^{d \times d}$, $M_c \in \{0,1\}^{d \times d}$, $\rho \in [0,1], \lambda \in [0,1]$, $\epsilon \in (0,0.1)$}
    \Comment{Theorem~\ref{thm:our_convergence}}
        \State Initialize $M \in \{1\}^{d \times d}$
        \State Initialize $\eta, T$ by Theorem~\ref{thm:our_convergence}
        \For{$j = 1 \to k$} \Comment{Iterate over the dataset}
            \State $u_j \gets \exp(X_j W X_j^\top)$
            \State $f_j \gets \diag((u_j \circ M_c) \cdot {\bf 1}_n)^{-1} (u_j \circ M_c)$
        \EndFor
        \For{$i = 1 \to T$} \Comment{Iterate over $T$ steps}
            \For{$j = 1 \to k$} \Comment{Iterate over the dataset}
                \State $\wt{u}_j \gets \exp(X_j(M \circ W) X_j^\top)$ \label{line:u}
                \State $\wt{f}_j \gets \diag((\wt{u}_j \circ M_c) \cdot {\bf 1}_n)^{-1} (\wt{u}_j \circ M_c)$
                \State $c_j \gets \wt{f}_j - f_j$
                \State $p_{1,j} \gets c_j \circ \wt{f}_j$
                \State $p_{2,j} \gets \diag(p_{1,j} \cdot {\bf 1}_n) \wt{f}_j$
                \State $p_j \gets p_{1,j} - p_{2,j}$ \label{line:p}
            \EndFor
            \State $M \gets M - (\eta/k) \cdot ( W \circ ( \sum_{j=1}^k X_j^\top p_j X_j ) + \lambda M )$ \Comment{Gradient Descent}
        \EndFor
        \State $m \gets \mathrm{vec}(M)$ \Comment{Flatten $M$ into a vector}
        \State $m_{\mathrm{sorted}} \gets \mathrm{sort}(m)$ \Comment{Sort the elements of $M$}
        \State $\tau \gets m_{\mathrm{sorted}}[ \lfloor \rho \cdot d^2 \rfloor]$ \Comment{Get the $\rho$-th largest element}
        \State $M_{ij} \gets 
            \begin{cases} 
              1 & \mathrm{if}~M_{ij} > \tau \\
              0 & \mathrm{otherwise}
            \end{cases}$
        \Comment{Set the top $\rho$ entries to $1$, others to $0$}
        \State \Return $M$
    \EndProcedure
    \end{algorithmic}
\end{algorithm}

{\bf Attention weights pruning.}
We aim to determine a near-optimal pruning mask $M$ for the attention weights in a self-attention mechanism. Furthermore, we incorporate causal attention masking\footnote{In this paper, we always use {\it pruning mask} to refer $M\in \R^{d \times d}$, which is our target, and use {\it causal attention mask} to refer $M_c \in \R^{n \times n}$, which is a fixed mask for standard self-attention.} into our method to be more aligned with the current decoder-only LLM architecture, while our analysis can be applied to any general attention mask, e.g., block-wise attention mask.
To formalize this, we provide the formal definition of causal attention mask and attention weights pruning in this section.

The causal attention mask ensures that each token in the sequence can attend only to itself and the preceding tokens. Here, we provide the formal definition of the causal attention mask:
\begin{definition}[Causal attention mask, \cite{lss+24}]\label{def:mask}
    We define the causal attention mask as $M_c \in \{0,1\}^{n\times n}$, where $(M_c)_{i,j} = 1$ if $i \ge j$ and $(M_c)_{i,j} = 0$ otherwise.
\end{definition}

Now, we incorporate Attention Weights Pruning (see Definition~\ref{def:attn_prun})  with causal attention mask $M_c$.
\begin{definition}[Attention Weights Pruning with Causal Attention Mask]\label{def:attn_prun_causal}
Let $M_c \in \{0,1\}^{n\times n}$ be the causal attention mask defined in Definition~\ref{def:mask}.  Let $A := \exp(XWX^\top)\circ M_c$ and $\wt{A} := \exp(X(M \circ W)X^\top)\circ M_c$. Let $D := \diag(A \cdot {\bf 1}_n)$ and $\wt{D} := \diag(\wt{A} \cdot {\bf 1}_n)$. 
We define Attention Weights Pruning with Causal Attention Mask loss function to be
$
    \mathcal{L}(M) := \mathcal{L}_{\mathrm{attn}}(M) + \mathcal{L}_{\mathrm{reg}}(M)
$
where 
$
    \mathcal{L}_{\mathrm{attn}}(M) := \frac{1}{2} \|D^{-1} A -  \wt{D}^{-1} \wt{A}\|_F^2 ~~\mathrm{and}~~ \mathcal{L}_{\mathrm{reg}}(M) := \frac{1}{2} \lambda \|M\|_F^2.
$

\end{definition}
\section{Main Results}\label{sec:main_result} We provide an Algorithm~\ref{alg:mask_gd} for Attention Weights Pruning problem based on Gradient Descent (GD). We also prove the convergence for our GD algorithm in Theorem~\ref{thm:our_convergence}.

\begin{theorem}[Main result, formal version of Theorem~\ref{thm:our_convergence:informal}]\label{thm:our_convergence}
Let $M,~X,~W,\wt{D},\wt{A}~\lambda,~\mathcal{L},~\mathcal{L}_{\mathrm{attn}}$ be defined in Definition~\ref{def:attn_prun_causal}. Assume $XX^\top \succeq \beta I$ and $\min_{i,j \in [n]} (\wt{D}^{-1}\wt{A})_{i,j} \ge \delta > 0$. Furthermore,
Let $\mu = 2 \min_{i,j \in [d]} \{|W_{i,j}|\} \cdot \beta \cdot \delta$. Let $\xi = 12 n^{-1.5} \max_{i,j \in [d]} \{|W_{i,j}|\} \cdot \|X\|_F^2 \cdot  \lambda d / \mu$.
Then, for any $\epsilon > 0$, provided $\eta < 1/L$ where $L$ is the Lipschitz constant for $ \nabla_M \mathcal{L}(M)$ (see Theorem~\ref{thm:lip_grad_l_informal}), GD (Algorithm~\ref{alg:mask_gd}) with fixed step size $\eta$ and run for $T = 4 \mathcal{L}(M^{(0)})/ (\eta\mu \epsilon n^2)$ iterations results in the following guarantee,
\begin{align*}
    \frac{1}{n^2} \min_{t<T} \mathcal{L}_{\mathrm{attn}}(M^{(t)})  +  \frac{\lambda^2}{\mu n^2} \|M^{(t)}\|_F^2 \leq (\xi + \epsilon)/2.
\end{align*}
\end{theorem}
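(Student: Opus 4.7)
The plan is to combine the standard descent lemma for $L$-smooth objectives with a problem-specific Polyak--{\L}ojasiewicz (PL) type lower bound on the squared gradient, then telescope over $T$ iterations. By Theorem~\ref{thm:lip_grad_l_informal}, $\nabla_M\mathcal{L}$ is $L$-Lipschitz; with $\eta<1/L$, the standard smoothness descent lemma gives $\mathcal{L}(M^{(t+1)}) \leq \mathcal{L}(M^{(t)}) - (\eta/2)\|\nabla_M \mathcal{L}(M^{(t)})\|_F^2$. Telescoping and using $\mathcal{L}\geq 0$ yields $\sum_{t<T}\|\nabla_M\mathcal{L}(M^{(t)})\|_F^2 \leq 2\mathcal{L}(M^{(0)})/\eta$, so with $T = 4\mathcal{L}(M^{(0)})/(\eta\mu\epsilon n^2)$ at least one iterate $t^\ast<T$ satisfies $\|\nabla_M\mathcal{L}(M^{(t^\ast)})\|_F^2 \leq \mu\epsilon n^2/2$.

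The main technical ingredient I would then establish is a PL-type lower bound of the form
\begin{align*}
\|\nabla_M \mathcal{L}(M)\|_F^2 \;\geq\; \mu\,\mathcal{L}_{\mathrm{attn}}(M) + \lambda^2\|M\|_F^2 - \mu\xi n^2/2,
\end{align*}
derived using the closed-form gradient from Theorem~\ref{thm:mat_view_l_informal}. Writing $\nabla_M\mathcal{L}(M) = W\circ(X^\top p X) + \lambda M$ with $p$ as defined in lines~\ref{line:u}--\ref{line:p} of Algorithm~\ref{alg:mask_gd}, the leading term factorizes cleanly: the factor $\min_{i,j}|W_{i,j}|$ pulls out of the Hadamard product, $XX^\top\succeq\beta I$ gives $\|X^\top p X\|_F^2 \geq \beta^2\|p\|_F^2$, and the entrywise lower bound $\min(\wt{D}^{-1}\wt{A})_{i,j}\geq\delta$ propagates through the definition $p = c\circ\wt{f} - \diag(c\circ\wt{f}\cdot\mathbf{1}_n)\wt{f}$ to convert $\|p\|_F^2$ into a constant multiple of $\|c\|_F^2 = \|\wt{f}-f\|_F^2 = 2\mathcal{L}_{\mathrm{attn}}(M)$; these chain together to give the coefficient $\mu = 2\min|W_{i,j}|\cdot\beta\cdot\delta$. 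The regularization contributes $\lambda^2\|M\|_F^2$ directly, while the cross term $2\lambda\langle W\circ(X^\top p X), M\rangle$ is absorbed into the error using Cauchy--Schwarz with the bounds $\|p\|_F = O(\sqrt{n})$ (because each row of $\wt{f}, f$ is sub-stochastic under the causal mask, so $\|\wt{f}\|_F,\|f\|_F\leq\sqrt{n}$), $\|X^\top pX\|_F \leq \|X\|_F^2\|p\|_F$, $\|M\|_F \leq d$, and $\|W\circ\cdot\|_F \leq \max|W_{i,j}|\|\cdot\|_F$; the resulting $O(\lambda d\sqrt{n}\max|W_{i,j}|\|X\|_F^2)$ error, when normalized to the form $\mu\xi n^2/2$, produces exactly the stated $\xi = 12n^{-1.5}\max|W_{i,j}|\cdot\|X\|_F^2\cdot\lambda d/\mu$ (the $n^{-1.5}$ coming from $\sqrt{n}/n^2$).

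Combining the two pieces, the iterate $t^\ast$ identified by the descent argument satisfies
\begin{align*}
\mu\,\mathcal{L}_{\mathrm{attn}}(M^{(t^\ast)}) + \lambda^2\|M^{(t^\ast)}\|_F^2 \;\leq\; \|\nabla_M\mathcal{L}(M^{(t^\ast)})\|_F^2 + \mu\xi n^2/2 \;\leq\; \mu n^2(\epsilon+\xi)/2,
\end{align*}
and dividing through by $\mu n^2$ yields the claimed $n^{-2}\mathcal{L}_{\mathrm{attn}}(M^{(t^\ast)}) + (\lambda^2/(\mu n^2))\|M^{(t^\ast)}\|_F^2 \leq (\xi+\epsilon)/2$.

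The hard part will be establishing the PL-type bound in Step~2 with the precise constants: carefully routing the softmax normalizer $\wt{D}^{-1}$ through the chain rule that defines $p$ (this is exactly where the $\delta>0$ assumption is indispensable and explains why $\xi$ inherits a $1/\mu$ factor), and making the Cauchy--Schwarz estimate on the cross term tight enough that the error matches the explicit $\xi$ stated, rather than a looser dimension-dependent bound. A secondary bookkeeping issue is extending the single-sample estimate to the $k$-sample average used in Algorithm~\ref{alg:mask_gd}, but this is immediate by linearity since the gradient and loss both average over $j$.
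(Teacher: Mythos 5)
Your proposal is correct and follows essentially the same route as the paper: the paper simply invokes the proxy-PL convergence theorem of Frei--Gu (Theorem~\ref{thm:convergence}) together with the PL inequality (Lemma~\ref{lem:PL_ineq_informal}) and the Lipschitz-gradient bound, whereas you unpack that black box into the standard descent-lemma-plus-telescoping argument and sketch the PL bound's derivation (Hadamard lower bound via $\min|W_{i,j}|$, the $XX^\top\succeq\beta I$ sandwich bound, the $\delta$-lower bound on $p$, and Cauchy--Schwarz on the cross term), all with constants matching the paper's $\mu$, $\xi$, and $T$. No gaps.
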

\begin{proof}
Let $g(M) = 2\mathcal{L}_{\mathrm{attn}}(M) +  \frac{2\lambda^2}{\mu} \|M\|_F^2$. Note that $\mathcal{L}(M)$ satisfies the $(g(M),n^2 \xi,2,\mu)$-proxy PL inequality (Lemma~\ref{lem:PL_ineq_informal}). 
Also, we have $\mathcal{L}(M)$ is non-negative and has $L$-Lipschitz gradients Theorem~\ref{thm:lip_grad_l_informal}. 
Thus, we finish the proof using Theorem~\ref{thm:convergence}.
\end{proof}

\begin{remark}
The two assumptions in Theorem~\ref{thm:our_convergence} are practical. 
The first assumption of the positive definite matrix is widely used in theoretical deep learning analysis, e.g., \cite{ll18,dzps19,als19_icml,adh+19_icml}. 
The second assumption is natural, as $\wt{D}^{-1} \wt{A}$ is the pruned attention matrix, where each entry is 
\begin{align*}
 \frac{\exp(X(M \circ W)X^\top)_{i,j}}{\sum_{j =1 }^n \exp(X(M \circ W)X^\top)_{i,j}} > 0,   
\end{align*}
which has a natural lower bound. 
\end{remark}

Our error upper bound in Theorem~\ref{thm:our_convergence} is $O(\xi + \epsilon)$, where $\epsilon$ can be arbitrarily small. 
For $\xi$, we can let it be small by choosing a proper $\lambda$, i.e., the $\xi$ error term can be made arbitrarily small by choosing small $\lambda$.
However, if we choose a very small $\lambda$, the algorithm's run time gets larger as $T \propto 1/\eta \propto L \propto (\lambda+\mathrm{other~terms})$.
Thus, although the objective function is highly non-linear, we can show that the Gradient Descent of our Algorithm~\ref{alg:mask_gd} can converge to a good solution of the Attention Weights Pruning problem.

After solving the optimization problem, we obtain a pruning mask with real-valued entries. In practice, however, this pruning mask must be converted into a binary form, specifically $ M \in \{0,1\}^{d \times d} $. We define the pruning ratio $ \rho $ as the percentage of weights to be pruned. We apply this ratio by setting the pruning mask entries to zero for weights that fall below the $ \rho $-th percentile and to one for those above. This ensures that only the specified proportion of weights is pruned.

\section{Technique Overview}\label{sec:techni_overview}
In Section~\ref{sec:tool}, we introduce some useful tools from previous work.
In Section~\ref{sec:grad}, we derive the close form of the gradient of Attention Weights Pruning.
In Section~\ref{sec:lip}, we calculate the Lipschitz constant of that gradient.
In Section~\ref{sec:pl_ineq}, we prove the PL inequality for our loss function.

\subsection{Previous Tools on Convergence of GD}\label{sec:tool}
To analyze the convergence behavior of GD for our optimization problem (Definition~\ref{def:attn_prun_causal}), we first introduce the concept of $g$-proxy, $\xi$-optimal Polyak--\L{}ojasiewicz(PL) inequality \citep{p63,l63,kns16}, under which GD will converge:
\begin{definition}[$g$-proxy, $\xi$-optimal PL inequality, Definition 1.2 in \cite{fg21}]
    We say that a function $f : \R^p \to \R$ satisfies a $g$-proxy, $\xi$-optimal Polyak--\L{}ojasiewicz inequality with parameters $\alpha > 0$ and $\mu > 0$ (in short, $f$satisfies the $(g,\xi,\alpha,\mu)$-PL inequality) if there exists a function $g : \R^p \to \R$ and scalars $\xi \in \R$, $\mu > 0$ such that for all $w \in \R^p$,
    $
        \| \nabla f(w) \|^\alpha \geq \frac{1}{2}\mu(g(w)-\xi).
    $
\end{definition}

PL inequality is a powerful tool for studying non-convex optimization, and it has been used in recent studies on provable guarantees for neural networks trained by gradient
descent~\citep{ll18,all19,als19_icml,als19_neurips,fcg19,cg20,jt20,fcg21,swl21,swl24}. 
It provides a proxy convexity property, although the objective function is non-convex. In detail, for a function with good smoothness property, we can find some proxy functions and show the convergence by utilizing these proxy functions.   

Leveraging this PL inequality, \cite{fg21} derives the following GD convergence guarantees.

\begin{theorem}[Theorem 3.1 in \cite{fg21}]\label{thm:convergence}
    Suppose $f(w)$ satisfies the $(g(\cdot),\xi,\alpha,\mu)$-proxy PL inequality for some function $g(\cdot) : \R^p \to \R$. Assume that $f$ is non-negative and has $L$-Lipschitz gradients. Then for any $\epsilon > 0$, provided $\eta < 1/L$, GD with fixed step size $\eta$ and run for $T = 2 \eta^{-1}(\mu \epsilon / 2)^{-2/\alpha}f(w^{(0)})$ iterations results in the following guarantee,
    $
        \min_{t<T} g(w^{(t)}) \leq \xi + \epsilon.
    $
\end{theorem}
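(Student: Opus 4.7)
).}

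The plan is to combine the standard descent lemma for functions with Lipschitz-continuous gradient with the $g$-proxy PL inequality via a contradiction argument. First I would invoke $L$-smoothness to write the one-step descent bound
\begin{align*}
f(w^{(t+1)}) \leq f(w^{(t)}) + \nabla f(w^{(t)})^\top (w^{(t+1)} - w^{(t)}) + \tfrac{L}{2} \|w^{(t+1)} - w^{(t)}\|^2,
\end{align*}
substitute the gradient-descent update $w^{(t+1)} = w^{(t)} - \eta \nabla f(w^{(t)})$, and simplify to
\begin{align*}
f(w^{(t+1)}) \leq f(w^{(t)}) - \eta\bigl(1 - \tfrac{L\eta}{2}\bigr) \|\nabla f(w^{(t)})\|^2.
\end{align*}
Since the hypothesis gives $\eta < 1/L$, the coefficient $1 - L\eta/2$ is at least $1/2$, so each step decreases $f$ by at least $\tfrac{\eta}{2}\|\nabla f(w^{(t)})\|^2$.

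Next I would telescope this bound over $t = 0, 1, \ldots, T-1$ and use non-negativity of $f$ to obtain
\begin{align*}
\sum_{t=0}^{T-1} \|\nabla f(w^{(t)})\|^2 \leq \tfrac{2}{\eta}\bigl(f(w^{(0)}) - f(w^{(T)})\bigr) \leq \tfrac{2}{\eta} f(w^{(0)}).
\end{align*}
This controls the sum of squared gradient norms along the trajectory purely in terms of the initial function value and the step size, without any convexity.

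The final step is a contradiction argument that transfers the gradient-norm bound into a proxy-value bound using the $(g,\xi,\alpha,\mu)$-PL inequality $\|\nabla f(w)\|^\alpha \geq \tfrac{\mu}{2}(g(w) - \xi)$. Suppose, for contradiction, that $g(w^{(t)}) > \xi + \epsilon$ for every $t < T$. Then the PL inequality forces $\|\nabla f(w^{(t)})\|^2 > (\mu \epsilon / 2)^{2/\alpha}$ for all such $t$, and summing gives
\begin{align*}
T \cdot (\mu \epsilon / 2)^{2/\alpha} < \sum_{t=0}^{T-1} \|\nabla f(w^{(t)})\|^2 \leq \tfrac{2}{\eta} f(w^{(0)}).
\end{align*}
Solving for $T$ yields $T < 2 \eta^{-1}(\mu \epsilon / 2)^{-2/\alpha} f(w^{(0)})$, contradicting the stated choice of $T$. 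Hence some $t < T$ must satisfy $g(w^{(t)}) \leq \xi + \epsilon$, proving the claim.

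The only real subtlety is bookkeeping the exponent $\alpha$ cleanly when raising the PL inequality to the power $2/\alpha$ and tracking constants; the rest is the standard smooth-plus-PL convergence argument, so I do not expect a substantial obstacle.
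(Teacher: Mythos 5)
Your proposal is correct: the descent lemma with $\eta < 1/L$, telescoping with non-negativity of $f$, and then converting the bound $\min_{t<T}\|\nabla f(w^{(t)})\|^2 \leq \tfrac{2}{\eta T}f(w^{(0)})$ into a bound on $g$ via the $(g,\xi,\alpha,\mu)$-PL inequality is exactly the standard argument behind this result. Note that the paper does not prove this statement itself --- it is imported verbatim from \cite{fg21} --- and your proof matches that source's approach; your contradiction framing is just a cosmetic variant of arguing directly that the iterate attaining the minimum gradient norm satisfies $\|\nabla f(w^{(t)})\|^\alpha \leq \mu\epsilon/2$ and hence $g(w^{(t)}) \leq \xi + \epsilon$.
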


The above theorem establishes that under the $(g,\xi,\alpha,\mu)$-PL inequality and Lipschitz continuity of the gradient, GD converges to a point where the proxy function $g(w)$ is within $\epsilon$ of $\xi$. 
To apply this result to our specific problem, we need to verify these conditions for our loss function ${\cal L} (M)$.

\subsection{Closed Form of Gradient}\label{sec:grad}
As a first step, we compute the close form of the gradient $\nabla_M {\cal L}(M)$. The pruning mask $M$ is inside a non-linear function Softmax, which complicates our calculation.
We defer the proof to Section~\ref{sec:app:grad}.
\begin{theorem}[Closed form of gradient, informal version of Theorem~\ref{thm:mat_view_l}]\label{thm:mat_view_l_informal}
    Let ${\cal L}(M)$ be defined in Definition~\ref{def:attn_prun_causal}. 
    Let $p$ be defined in Definition~\ref{def:p}.
    Let $X \in \R^{n \times d}$, $M \in [0,1]^{d \times d}$, $W \in \R^{d \times d}$.
    Then, we have
    \begin{align*}
        \frac{\d {\cal L}(M)}{\d M} = W \circ (X^\top p X) + \lambda M.
    \end{align*}
\end{theorem}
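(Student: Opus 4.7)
The plan is to split $\mathcal{L} = \mathcal{L}_{\mathrm{attn}} + \mathcal{L}_{\mathrm{reg}}$ and compute the two gradients separately. The regularization piece is immediate from $\mathcal{L}_{\mathrm{reg}}(M) = \frac{\lambda}{2}\|M\|_F^2$, giving $\nabla_M \mathcal{L}_{\mathrm{reg}}(M) = \lambda M$, so all the real work lies in $\nabla_M \mathcal{L}_{\mathrm{attn}}$. For this I would introduce the pre-softmax intermediate $Z := X(M \circ W) X^\top$ and view the dependence of $\mathcal{L}_{\mathrm{attn}}$ on $M$ as the composition $M \mapsto Z \mapsto (\wt{A}, \wt{D}) \mapsto \wt{f} := \wt{D}^{-1}\wt{A}$, so that the chain rule splits the computation into two manageable pieces: the row-wise softmax Jacobian $\partial \wt{f}/\partial Z$ and the much simpler trilinear derivative $\partial Z/\partial M$.

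For the inner factor, the observation is that each row $i$ of $\wt{f}$ is the causally-masked softmax of row $i$ of $Z$, so the standard softmax Jacobian gives $\partial \wt{f}_{i,j}/\partial Z_{i,k} = \wt{f}_{i,j}({\bf 1}\{j=k\} - \wt{f}_{i,k})$ whenever $i \ge j$ and $i \ge k$, and zero otherwise. Setting $c := \wt{f} - f$ and substituting into $\partial \mathcal{L}_{\mathrm{attn}}/\partial Z_{i,k} = \sum_j c_{i,j}\, \partial \wt{f}_{i,j}/\partial Z_{i,k}$, the sum collapses to
\begin{align*}
\frac{\partial \mathcal{L}_{\mathrm{attn}}}{\partial Z_{i,k}} = c_{i,k}\, \wt{f}_{i,k} \;-\; \wt{f}_{i,k}\sum_{l} c_{i,l}\, \wt{f}_{i,l},
\end{align*}
which matches entry $(i,k)$ of the matrix $p = c \circ \wt{f} - \diag((c \circ \wt{f}) \cdot {\bf 1}_n)\, \wt{f}$ assembled in lines~\ref{line:u}--\ref{line:p} of Algorithm~\ref{alg:mask_gd}. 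For the outer factor, expanding $Z_{i,k} = \sum_{a,b} X_{i,a}\, M_{a,b}\, W_{a,b}\, X_{k,b}$ gives $\partial Z_{i,k}/\partial M_{a,b} = X_{i,a}\, W_{a,b}\, X_{k,b}$, so chaining the two yields
\begin{align*}
\frac{\partial \mathcal{L}_{\mathrm{attn}}}{\partial M_{a,b}} = W_{a,b}\sum_{i,k} X_{i,a}\, p_{i,k}\, X_{k,b} = W_{a,b} \cdot (X^\top p\, X)_{a,b},
\end{align*}
which in matrix form is $W \circ (X^\top p X)$; adding $\lambda M$ from the regularization piece gives the claim.

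The main obstacle I expect is the bookkeeping around the causal mask $M_c$ together with the normalizer $\wt{D}$: because $M_c$ zeros entries of $\wt{A}$ before the row-sum normalization, one must check carefully that the clean softmax-Jacobian identity above really survives, i.e., that differentiating through the masked positions contributes nothing and that $\wt{D}$'s dependence on $M$ is correctly absorbed into the second term via the quotient rule. A clean way to keep the accounting straight is to differentiate $\wt{A}_{i,k} = \exp(Z_{i,k})(M_c)_{i,k}$ and $\wt{D}_{i,i} = \sum_l \exp(Z_{i,l})(M_c)_{i,l}$ with respect to $Z$ directly, apply the quotient rule to $\wt{f}_{i,k} = \wt{A}_{i,k}/\wt{D}_{i,i}$, and only then collapse the resulting expression into the softmax-Jacobian form used above. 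Everything else is routine matrix calculus.
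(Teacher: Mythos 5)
Your proposal is correct and computes exactly the same two ingredients as the paper: the row-wise softmax Jacobian (including the causal mask and the normalizer $\wt{D}$, which you handle correctly by differentiating $\wt{A}_{i,k}=\exp(Z_{i,k})(M_c)_{i,k}$ and $\wt{D}_{i,i}$ and applying the quotient rule) and the trilinear derivative $\partial Z_{i,k}/\partial M_{a,b}=X_{i,a}W_{a,b}X_{k,b}$. The difference is purely organizational, and your factorization is the cleaner one: by inserting the intermediate $Z=X(M\circ W)X^\top$ and computing $\partial\mathcal{L}_{\attn}/\partial Z$ first, you obtain the matrix $p=c\circ\wt f-\diag((c\circ\wt f)\cdot{\bf 1}_n)\wt f$ in one step, whereas the paper differentiates every intermediate quantity ($\wt u$, $\wt\alpha$, $\wt f$, $c$) entrywise with respect to $M_{i_1,j_1}$ (Lemmas~\ref{lem:grad_exp_func}--\ref{lem:grad_rec}) and then needs a separate reassembly argument (Lemmas~\ref{lem:mat_view_b1} and~\ref{lem:mat_view_b2}) to collapse the resulting double sums $\sum_{i_0,j_0}(B_1+B_2)$ into $W\circ(X^\top p X)$. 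Your route trades that reassembly for a single application of the chain rule through $Z$ and arrives at the identical closed form. One minor remark: the softmax-Jacobian identity $\partial\wt f_{i,j}/\partial Z_{i,k}=\wt f_{i,j}(\mathbf{1}\{j=k\}-\wt f_{i,k})$ in fact holds for all $(j,k)$ without the case split on $i\ge j$, $i\ge k$, since masked positions have $\wt f_{i,j}=0$ or $\wt f_{i,k}=0$ and the formula already evaluates to zero there; this slightly simplifies the bookkeeping you were worried about.
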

Based on Theorem~\ref{thm:mat_view_l_informal}, we calculate the gradient of the pruning mask from Line~\ref{line:u} to Line~\ref{line:p} in our Algorithm~\ref{alg:mask_gd}.

\subsection{Lipschitz of Gradient}\label{sec:lip}
Having obtained the close form of gradient, we proceed to investigate its Lipschitz continuity. We aim to show that the gradient $\nabla_M {\cal L}(M)$ is Lipschitz continuous with respect to $M$. 
\begin{theorem}[Lipschitz of the gradient, informal version of Theorem~\ref{thm:lip_grad_l}]\label{thm:lip_grad_l_informal}
    
    Let $R$ be some fixed constant that satisfies $R > 1$.
    Let $X \in \R^{n \times d}, W \in \R^{d \times d}$. We have $\|X\|_F \leq R$ and $\|W\|_F \leq R$.
    Let ${\cal L}(M)$ be defined in Definition~\ref{def:attn_prun_causal}.
    For $M, \wt{M} \in \R^{d \times d}$, we have
    \begin{align*}
        \| \nabla_M \mathcal{L}(M) - \nabla_M \mathcal{L}(\wt{M}) \|_F \leq (\lambda+30 dn^{7/2}R^6) \cdot \| M - \wt{M} \|_F.
    \end{align*}
\end{theorem}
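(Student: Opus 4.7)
The plan is to split the gradient into the regularization piece $\lambda M$ and the attention piece $W \circ (X^\top p(M) X)$, where $p(M)$ is the matrix defined between Lines~\ref{line:u} and \ref{line:p} of Algorithm~\ref{alg:mask_gd}. The regularization piece is trivially $\lambda$-Lipschitz in $M$ (since it is linear), which contributes the additive $\lambda$ in the final constant. So the entire task reduces to showing that $M \mapsto W \circ (X^\top p(M) X)$ is Lipschitz with constant at most $30 dn^{7/2} R^6$. Peeling off the outer operations via the elementary inequalities $\|A \circ B\|_F \le \max_{ij}|A_{ij}|\cdot \|B\|_F \le \|A\|_F \cdot \|B\|_F$ and $\|X^\top Q X\|_F \le \|X\|_F^2 \|Q\|_F \le R^2 \|Q\|_F$, I reduce the problem to bounding $\|p(M) - p(\wt{M})\|_F$ by a constant (in the right regime) times $\|M - \wt{M}\|_F$.

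\textbf{Chain of Lipschitz estimates.} To control $p(M)-p(\wt M)$, I follow the very pipeline used to construct $p$: first $M \mapsto \wt{u}$, then $\wt{u} \mapsto \wt{f}$, then $\wt{f} \mapsto c = \wt{f}-f$, and finally $c, \wt{f} \mapsto p_1, p_2 \mapsto p$. For the first step, since $M \in [0,1]^{d\times d}$ and $\|W\|_F \le R$, the entries of $X(M\circ W)X^\top$ are uniformly bounded in terms of $R$; combined with the entry-wise mean value theorem $|\exp(x)-\exp(y)| \le e^{\max(|x|,|y|)}|x-y|$ and the elementary bound $\|A\|_F^2 = \sum_{ij}A_{ij}^2 \le n^2 \max_{ij}A_{ij}^2$, this yields a bound $\|\wt u(M)-\wt u(\wt M)\|_F \le c_1 \cdot R^3 \|M-\wt M\|_F$ with $c_1$ polynomial in $n$. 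The second step (softmax-type row normalization with the causal mask $M_c$) is handled by the quotient rule: I would separately bound $\|\wt{u}\|_F$ from above and the row sums of $\wt{u} \circ M_c$ from below, using the uniform upper bound on $|X(M \circ W)X^\top|$ and the fact that $M_c$ keeps the diagonal. The subsequent steps involving Hadamard products and $\diag(\cdot)\wt f$ are handled by the same $\|A\circ B\|_F \le \|A\|_F \|B\|_F$ inequality and by $\|\diag(v\cdot {\bf 1}_n)\|_{\mathrm{op}} \le \|v\|_\infty \le \|v\|_F$. Adding up with the triangle inequality yields the Lipschitz bound on $p$.

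\textbf{Collecting constants.} The final step is to multiply the per-stage constants and the outer $R^3$ factor coming from the $W \circ (X^\top \cdot X)$ reduction, then collect powers of $n$ (arising from the $n^2 \max_{ij}$ estimate, the softmax row sums, and $\|\mathbf{1}_n\|_2 = \sqrt{n}$) and of $d$ (arising when comparing $\|(M-\wt M)\circ W\|_F$ or $\|M\|_F$ against sup-type bounds). A careful bookkeeping then gives the explicit constant $30 d n^{7/2} R^6$ and, combined with the $\lambda$ from the regularization piece, the stated bound.

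\textbf{Main obstacle.} I expect the main difficulty to be the step $\wt u \mapsto \wt f$, since $\wt f$ is a ratio whose denominator is the row sums of the causally-masked exponential and could in principle be very small. The key is to use the a priori upper bound on $\|X(M \circ W)X^\top\|_\infty$ (inherited from $\|X\|_F, \|W\|_F \le R$ and $M \in [0,1]^{d \times d}$) to get a matching a priori lower bound on $\exp$, and hence on the row sums, so that the quotient estimate does not blow up. After this, the rest of the chain is mechanical Frobenius-norm bookkeeping and only contributes polynomial-in-$n$ multiplicative constants.
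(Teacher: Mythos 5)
Your outer skeleton matches the paper's: split off the $\lambda M$ term, pull out $\|W\|_F\cdot\|X\|_F^2\le R^3$, and reduce to a Lipschitz bound on $M\mapsto p(M)=c(M)\circ\wt f(M)-\diag((c(M)\circ\wt f(M))\cdot{\bf 1}_n)\wt f(M)$, handled by triangle inequality and the product rule for Hadamard/diagonal factors. The genuine gap is in your central step, the Lipschitz continuity of $\wt f$. You propose to go through the chain $M\mapsto\wt u\mapsto\wt f$ and to control the quotient by ``separately bounding $\|\wt u\|_F$ from above and the row sums of $\wt u\circ M_c$ from below.'' Under the hypotheses $\|X\|_F,\|W\|_F\le R$ and $M\in[0,1]^{d\times d}$, the entries of $X(M\circ W)X^\top$ are only bounded by $O(R^3)$ in magnitude, so the entrywise mean-value estimate for $\exp$ gives a Lipschitz constant for $\wt u$ containing a factor $e^{O(R^3)}$, and the lower bound on the masked row sums is only $e^{-O(R^3)}$. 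Bounding numerator and denominator separately therefore produces constants of order $e^{O(R^3)}$ that do not cancel and cannot be absorbed into the claimed polynomial constant $30dn^{7/2}R^6$. Your own stage-1 claim that $\|\wt u(M)-\wt u(\wt M)\|_F\le c_1R^3\|M-\wt M\|_F$ with $c_1$ merely polynomial in $n$ is already false for this reason. So as written your route proves a Lipschitz bound, but not the theorem as stated.

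The paper avoids this entirely by never estimating $\wt u$ and $\wt\alpha$ separately: it differentiates the normalized quantity $\wt f$ directly with respect to $M$ (Lemma~\ref{lem:grad_soft_prob}), obtaining
\begin{align*}
\frac{\d\wt f(M)_{i_0,j_0}}{\d M_{i_1,j_1}}=\wt f(M)_{i_0,j_0}W_{i_1,j_1}X_{i_0,i_1}X_{j_0,j_1}-\wt f(M)_{i_0,j_0}\langle\wt f(M)_{i_0},W_{i_1,j_1}X_{i_0,i_1}X_{*,j_1}\rangle,
\end{align*}
in which the exponentials and the normalizing row sums have cancelled exactly; the softmax structure ($\wt f_{i_0,j_0}\in[0,1]$, rows summing to one) then bounds each entry of this derivative by $2R^3$ (Lemma~\ref{lem:bound_grad_f}), and the mean value theorem for matrix functions (Fact~\ref{fac:mean_value}) converts this into the $2dnR^3$ Lipschitz constant for $\wt f$ (Lemma~\ref{lem:lip_f}). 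If you replace your quotient-rule stage with this observation --- that the derivative of the softmax is controlled by the softmax values themselves rather than by the unnormalized exponentials --- the rest of your bookkeeping goes through and recovers the stated constant.
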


We defer the proof to Section~\ref{sec:app:lip}.
Establishing the Lipschitz continuity of the gradient satisfies one of the necessary conditions for applying Theorem~\ref{thm:convergence}. The above theorem implicates that the gradient for $M$ is upper bounded, providing a way to choose step size.

\subsection{PL Inequality of Gradient}\label{sec:pl_ineq}
Next, we need to verify that our loss function satisfies the PL inequality with appropriate parameters.
To complete the verification of the conditions required for convergence, we demonstrate that ${\cal L} (M)$ satisfies the PL inequality.
We show that $\nabla_M \mathcal{L}(M)$ satisfies the PL inequality in this lemma:
\begin{lemma}[PL inequality, informal version of Lemma~\ref{lem:PL_ineq}]\label{lem:PL_ineq_informal}
Let $M,X,W,\wt{D},\wt{A},~\lambda,~\mathcal{L},~\mathcal{L}_{\mathrm{attn}}$ be defined in Definition~\ref{def:attn_prun_causal}.
Assume that $XX^\top \succeq \beta I$ and $\min_{i,j \in [n]} (\wt{D}^{-1}\wt{A})_{i,j} \ge \delta > 0$.
Also,
\begin{itemize}
    \item Let $\mu = 2 \min_{i,j \in [d]} \{|W_{i,j}|\} \cdot \beta \cdot \delta$.
    \item Let $\xi = 12 \sqrt{n} \max_{i,j \in [d]} \{|W_{i,j}|\} \cdot \|X\|_F^2 \cdot  \lambda d / \mu$.
\end{itemize}
We have
$
    \|\nabla_M \mathcal{L}(M)\|_F^2 \geq \frac{1}{2} \mu ( 2\mathcal{L}_{\mathrm{attn}}(M) +  \frac{2\lambda^2}{\mu} \|M\|_F^2 -\xi).
$
\end{lemma}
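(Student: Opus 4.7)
The plan is to start from the closed-form gradient $\nabla_M \mathcal{L}(M) = W \circ (X^\top p X) + \lambda M$ given by Theorem~\ref{thm:mat_view_l_informal} and expand
$\|\nabla_M \mathcal{L}\|_F^2 = \|W \circ (X^\top p X)\|_F^2 + 2\lambda \langle W \circ (X^\top p X), M\rangle + \lambda^2 \|M\|_F^2$.
The $\lambda^2 \|M\|_F^2$ term is exactly the $\tfrac{2\lambda^2}{\mu}\|M\|_F^2$ contribution inside $g(M)$ once we multiply through by the prefactor $\mu/2$. So the only real tasks are (a) showing that the first term dominates $\mu\,\mathcal{L}_{\mathrm{attn}}(M)$ up to constants, and (b) showing that the cross term is bounded in magnitude by $\mu\xi/2$ and hence absorbed into the slack.

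For task (a), I would chain three inequalities. (i) Pull out the weight minimum: $\|W \circ A\|_F^2 \geq (\min_{i,j}|W_{i,j}|)^2 \|A\|_F^2$. (ii) Use the PSD data assumption: $\|X^\top p X\|_F^2 = \tr(pQp^\top Q)$ with $Q := XX^\top \succeq \beta I$, and a double application of the spectral inequality gives $\|X^\top p X\|_F^2 \geq \beta^2 \|p\|_F^2$. (iii) Relate $p$ back to the softmax difference $c := \wt{D}^{-1}\wt{A} - D^{-1}A$. The derivation of $p$ in Theorem~\ref{thm:mat_view_l_informal} shows that row-wise $p_i = J_i c_i$ with $J_i := \diag(\wt{f}_i) - \wt{f}_i \wt{f}_i^\top$, the softmax Jacobian of the pruned row $\wt{f}_i := (\wt{D}^{-1}\wt{A})_i$. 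The crucial observation is that both $\wt{f}_i$ and $(D^{-1}A)_i$ are probability distributions on the causal support $\{1,\ldots,i\}$, so $c_i \perp \mathbf{1}$. On $\mathbf{1}^\perp$ the identity $v^\top J_i v = \mathrm{Var}_{\wt{f}_i}(v) \geq (\min_k \wt{f}_{i,k})\|v\|^2 \geq \delta \|v\|^2$ holds, and hence $J_i^2$ has minimum eigenvalue at least $\delta^2$ on $\mathbf{1}^\perp$, giving $\|p_i\|^2 = c_i^\top J_i^2 c_i \geq \delta^2\|c_i\|^2$. Summing over $i$ and using $\|c\|_F^2 = 2\mathcal{L}_{\mathrm{attn}}(M)$ completes step (iii); chaining (i)--(iii) yields the required lower bound on the first term.

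For task (b), Cauchy--Schwarz gives $|\langle W \circ (X^\top p X), M\rangle| \leq \max_{i,j}|W_{i,j}| \cdot \|X^\top p X\|_F \cdot \|M\|_F \leq \max|W_{i,j}| \cdot \|X\|_F^2 \cdot \|p\|_F \cdot d$, where $\|M\|_F \leq d$ since $M \in [0,1]^{d\times d}$ and $\|p\|_F = O(\sqrt{n})$ because each $J_i$ has operator norm at most $1$ and $\|c_i\| \leq \sqrt{2}$ (as $\|c_i\|_1 \leq 2$). Multiplying by $2\lambda$ yields a bound of the form $C\,\lambda\sqrt{n}\max|W_{i,j}|\,\|X\|_F^2\,d$ for an absolute constant $C$, which matches the prescribed $\mu\xi/2 = 6\sqrt{n}\max|W_{i,j}|\|X\|_F^2\lambda d$ once the constant $C$ is tracked.

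The main obstacle is step (iii). The Jacobian $J_i$ is intrinsically rank-deficient along $\mathbf{1}$ (indeed $J_i \mathbf{1} = 0$), so no unconditional lower bound of the form $\|J_i c_i\| \gtrsim \delta\|c_i\|$ can hold; only the orthogonality $c_i \perp \mathbf{1}$ saves the argument. Establishing this orthogonality requires careful bookkeeping of the causal mask $M_c$ so that $\wt{f}_i$ and $(D^{-1}A)_i$ have support exactly $\{1,\ldots,i\}$ and each sum to one there. A secondary, more tedious point is collecting the constants in the cross-term bound precisely enough to reproduce the stated leading factor $12$ in $\xi = 12\sqrt{n}\max|W_{i,j}|\|X\|_F^2 \lambda d/\mu$.
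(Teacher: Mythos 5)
Your proposal is essentially the paper's own argument: expand $\|W\circ(X^\top pX)+\lambda M\|_F^2$, absorb the cross term via Cauchy--Schwarz with $\|M\|_F\le d$ and $\|p\|_F=O(\sqrt n)$ (the paper's Lemma~\ref{lem:lower_bound_B_plus_M}, with $\alpha_1=6\sqrt n\max_{i,j}|W_{i,j}|\,\|X\|_F^2\lambda d$), then chain the three lower bounds --- Hadamard with $\min_{i,j}|W_{i,j}|$ (Lemma~\ref{lem:lower_bound_hadamard}), $\|X^\top BX\|_F\ge\beta\|B\|_F$ from $XX^\top\succeq\beta I$ (Lemma~\ref{lem:lower_bound_XBX}), and the row-wise bound $\|p_i\|_2\ge\delta\|c_i\|_2$ --- and finish with $\|c\|_F^2=2\mathcal L_{\mathrm{attn}}(M)$. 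The only real deviation is in the last of these: you prove $\|J_ic_i\|_2\ge\delta\|c_i\|_2$ spectrally, via $v^\top J_iv=\mathrm{Var}_{\wt f_i}(v)\ge\delta\|v\|_2^2$ on ${\bf 1}_n^\perp$ together with the facts that $J_i$ is symmetric and $J_i{\bf 1}_n=0$ (so ${\bf 1}_n^\perp$ is $J_i$-invariant and $J_i^2\succeq\delta^2 I$ there); the paper (Lemmas~\ref{lem:lower_bound_p_row} and~\ref{lem:lower_bound_p}) instead bounds $\|(c_i-\langle c_i,\wt f_i\rangle{\bf 1}_n)\circ\wt f_i\|_2$ directly by the Pythagorean theorem. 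Both are correct and deliver the same constant $\delta$, and your worry about establishing $c_i\perp{\bf 1}_n$ is no deeper than the paper's one-line observation that both softmax rows sum to one (though you rightly sense a tension: the assumption $\min_{i,j}(\wt D^{-1}\wt A)_{i,j}\ge\delta$ is incompatible with the masked entries being exactly zero, a defect of the statement itself, not of your proof).

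One point to flag: your honest accounting in steps (i)--(iii) yields $(\min_{i,j}|W_{i,j}|)^2\beta^2\delta^2\|c\|_F^2=(\mu/2)^2\|c\|_F^2$, whereas the lemma needs $(\mu/2)\|c\|_F^2$ with $\mu=2\min_{i,j}|W_{i,j}|\beta\delta$; these coincide only when $\mu\ge2$, so ``chaining (i)--(iii) yields the required lower bound'' is not literally true with the stated $\mu$. The paper's proof reaches the stated constant only by applying the three squared inequalities with the first powers $\alpha_2,\alpha_3,\alpha_4$ rather than their squares, i.e., by silently dropping the exponents; the lemma as written really requires $\mu$ to be $2(\min_{i,j}|W_{i,j}|\beta\delta)^2$ (or the extra hypothesis $\min_{i,j}|W_{i,j}|\beta\delta\ge1$). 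Since your version makes the exponents explicit, you should either adjust $\mu$ accordingly or note this assumption; otherwise the argument is sound.
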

We defer the proof to Section~\ref{sec:app:converge}.
By confirming that ${\cal L}(M)$ satisfies the PL inequality and that its gradient is Lipschitz continuous, we then apply Theorem~\ref{thm:convergence} to conclude that GD will converge to a solution within our desired error tolerance, and further prove Theorem~\ref{thm:our_convergence}.

To prove the PL inequality, we also need the following two key Lemmas, which introduce our two assumptions in our Theorem~\ref{thm:our_convergence}, $XX^\top \succeq \beta I$ and $\min_{i,j \in [n]} (\wt{D}^{-1}\wt{A})_{i,j} \ge \delta > 0$.

\begin{lemma}[Informal version of Lemma~\ref{lem:lower_bound_XBX}]\label{lem:lower_bound_XBX_informal}
Let $B \in \R^{n \times n}$ and $X \in \R^{n \times d}$. Assume that $XX^\top \succeq \beta I$.
Then, we have 
$
    \| X^\top B X \|_F \geq \beta \| B \|_F.
$
\end{lemma}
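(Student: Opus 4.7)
The plan is to exploit the orthogonal invariance of the Frobenius norm together with the spectral decomposition of $Y := XX^\top$, since the hypothesis controls the eigenvalues of $Y$ while the target is a bilinear form in $X$.

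First, I would rewrite $\|X^\top B X\|_F^2$ as a trace and use the cyclic property of the trace to bring it into the compact form
\begin{align*}
    \|X^\top B X\|_F^2 = \tr((X^\top B X)^\top (X^\top B X)) = \tr(B^\top Y B Y).
\end{align*}
Next, taking an eigendecomposition $Y = \sum_{i=1}^n \lambda_i v_i v_i^\top$ with orthonormal $\{v_i\}$ and $\lambda_i \geq \beta$ by the assumption $Y \succeq \beta I$, I would substitute and simplify each resulting term via the rank-one identity $\tr(B^\top v_i v_i^\top B v_j v_j^\top) = (v_i^\top B v_j)^2$ to obtain
\begin{align*}
    \tr(B^\top Y B Y) = \sum_{i,j} \lambda_i \lambda_j (v_i^\top B v_j)^2 \geq \beta^2 \sum_{i,j} (v_i^\top B v_j)^2 = \beta^2 \|B\|_F^2,
\end{align*}
where the last equality uses that the $(i,j)$-entry of $V^\top B V$ is $v_i^\top B v_j$ with $V = [v_1,\dots,v_n]$ orthogonal, so $\sum_{i,j}(v_i^\top B v_j)^2 = \|V^\top B V\|_F^2 = \|B\|_F^2$ by orthogonal invariance. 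Taking square roots gives the claim.

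The main subtlety, rather than a serious obstacle, is that $B$ is not assumed symmetric, so $B$ and $Y$ cannot be simultaneously diagonalized; the resolution is to expand only in the eigenbasis of $Y$ (applied on both sides of $B$), which still produces non-negative summands $\lambda_i\lambda_j(v_i^\top B v_j)^2$ whose prefactors are each bounded below by $\beta^2$ regardless of the structure of $B$. A minor dimension check: $Y \succeq \beta I$ with $\beta > 0$ forces $\rank(Y) = n$, so the bound is non-vacuous only when $d \geq n$, which we may implicitly assume.
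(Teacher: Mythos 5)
Your proposal is correct. It shares with the paper the opening move of writing $\|X^\top B X\|_F^2$ as a trace in which $Y = XX^\top$ appears twice, but the way you extract the factor $\beta^2$ differs. The paper proves and invokes (twice) a sandwich inequality, $\tr[C^\top A C] \geq \beta \tr[C^\top C]$ whenever $A \succeq \beta I$, peeling off the two copies of $XX^\top$ one at a time:
\begin{align*}
    \tr[X^\top B\, (XX^\top)\, B^\top X] \geq \beta\, \tr[B^\top (XX^\top) B] \geq \beta^2\, \tr[B^\top B].
\end{align*}
You instead expand $Y = \sum_i \lambda_i v_i v_i^\top$ and bound both eigenvalue factors simultaneously in the double sum $\sum_{i,j}\lambda_i\lambda_j (v_i^\top B v_j)^2$, closing with orthogonal invariance of the Frobenius norm. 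Both arguments are clean; the paper's is marginally more modular (the sandwich fact is reused as a standalone tool and avoids any diagonalization), while yours makes the mechanism more transparent --- it is immediately visible that the bound is tight exactly when $B$ is supported on the eigendirections of $Y$ with eigenvalue $\beta$, and your explicit handling of non-symmetric $B$ (diagonalizing only $Y$, not $B$) is exactly the right precaution. Your closing remark that $XX^\top \succeq \beta I$ with $\beta > 0$ forces $d \geq n$ is a correct observation about when the hypothesis is non-vacuous, and does not affect the validity of the lemma.
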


\begin{lemma}[Informal version of Lemma~\ref{lem:lower_bound_p}]\label{lem:lower_bound_p_informal}
Let $B \in \R^{n \times n}$ and each row summation is zero, i.e., $B \cdot {\bf 1}_n = {\bf 0}_n$. Let $\wt{B} \in [0,1]^{n \times n}$ and each row summation is 1, i.e., $\wt{B} \cdot {\bf 1}_n = {\bf 1}_n$. Assume that $\min_{i,j \in [n]} \wt{B}_{i,j} \ge \delta > 0$.
Then, we can show
$
    \| B \circ \wt{B} - \diag((B \circ \wt{B}) \cdot {\bf 1}_n) \wt{B} \|_F \geq \delta \cdot \|B\|_F.
$
\end{lemma}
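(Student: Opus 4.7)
The plan is to work entrywise and reduce the claim to a one-dimensional inequality per row, then exploit the zero-row-sum property of $B$ to make a cross term vanish.

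First, I would unfold the matrix $C := B \circ \wt{B} - \diag((B \circ \wt{B}) \cdot {\bf 1}_n) \wt{B}$ entry by entry. Writing $c_i := ((B \circ \wt{B}) \cdot {\bf 1}_n)_i = \sum_{j} B_{ij} \wt{B}_{ij}$, the $(i,j)$-entry of $C$ is
\begin{align*}
    C_{ij} = B_{ij} \wt{B}_{ij} - c_i \wt{B}_{ij} = \wt{B}_{ij} (B_{ij} - c_i).
\end{align*}
Squaring and using the hypothesis $\wt{B}_{ij} \geq \delta$ pointwise gives
\begin{align*}
    \|C\|_F^2 = \sum_{i,j} \wt{B}_{ij}^2 (B_{ij} - c_i)^2 \geq \delta^2 \sum_{i,j} (B_{ij} - c_i)^2.
\end{align*}

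Next, I would expand the row-$i$ sum $\sum_{j} (B_{ij} - c_i)^2 = \sum_{j} B_{ij}^2 - 2 c_i \sum_{j} B_{ij} + n c_i^2$. Here is where the assumption $B \cdot {\bf 1}_n = {\bf 0}_n$ does the work: the cross term $2 c_i \sum_{j} B_{ij}$ vanishes, so
\begin{align*}
    \sum_{j} (B_{ij} - c_i)^2 = \sum_{j} B_{ij}^2 + n c_i^2 \geq \sum_{j} B_{ij}^2.
\end{align*}
Summing this inequality over $i$ yields $\sum_{i,j} (B_{ij} - c_i)^2 \geq \|B\|_F^2$, and combining with the previous display gives $\|C\|_F^2 \geq \delta^2 \|B\|_F^2$. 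Taking square roots finishes the proof.

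There is no real obstacle here: the lemma is essentially a weighted variance-type inequality, and the mean-zero structure of each row of $B$ is precisely what makes replacing each $B_{ij}$ by $B_{ij} - c_i$ harmless from below. The only thing to watch for is that the lower bound $\wt{B}_{ij} \geq \delta$ must hold on the full index set (not just the support of $M_c$); in the application to Lemma~\ref{lem:PL_ineq_informal} this is guaranteed by the hypothesis $\min_{i,j} (\wt{D}^{-1}\wt{A})_{i,j} \geq \delta$ used in Theorem~\ref{thm:our_convergence}.
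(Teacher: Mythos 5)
Your proposal is correct and is essentially the paper's own argument: the paper also reduces to rows, writes the $i$-th row of the difference as $(B_i - \langle B_i, \wt{B}_i\rangle {\bf 1}_n) \circ \wt{B}_i$, pulls out $\delta$ from the entrywise lower bound on $\wt{B}$, and uses orthogonality of $B_i$ to ${\bf 1}_n$ (your vanishing cross term is the paper's Pythagorean-theorem step) to drop the $n c_i^2$ term. No gaps.
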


\begin{figure}[!ht]
    \centering
    {\includegraphics[width=0.333\linewidth]{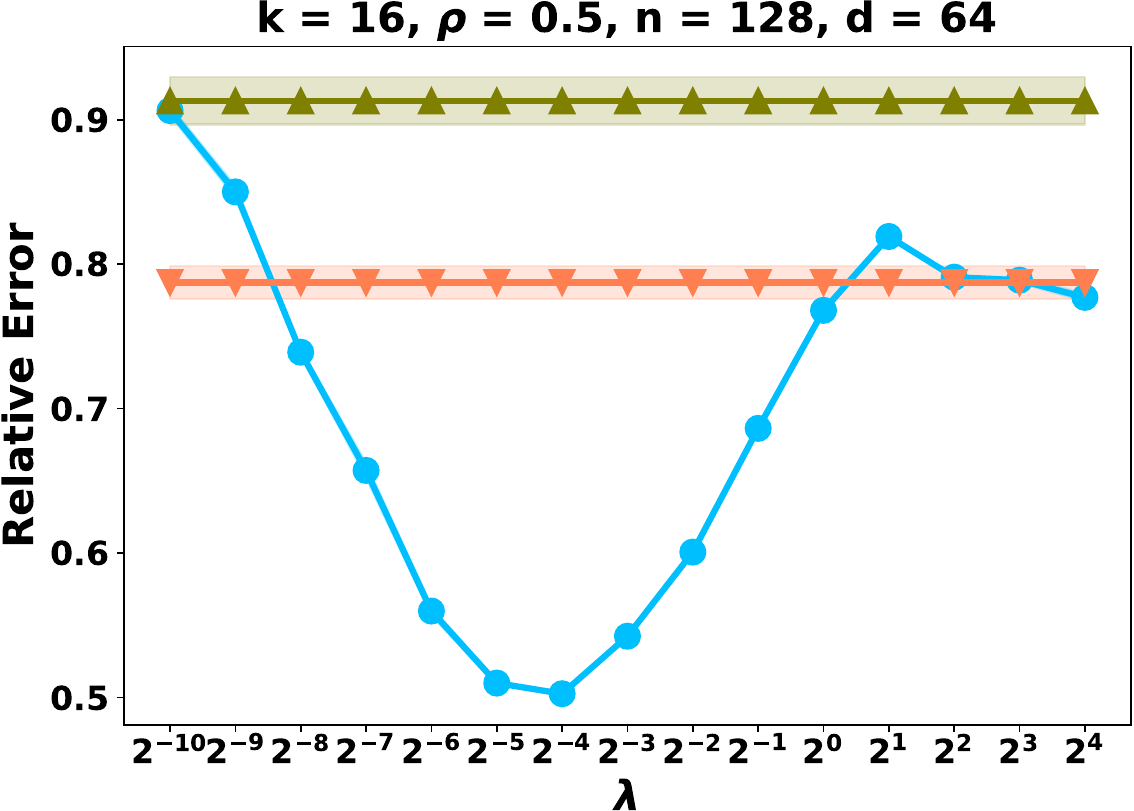}
    \includegraphics[width=0.325\linewidth]{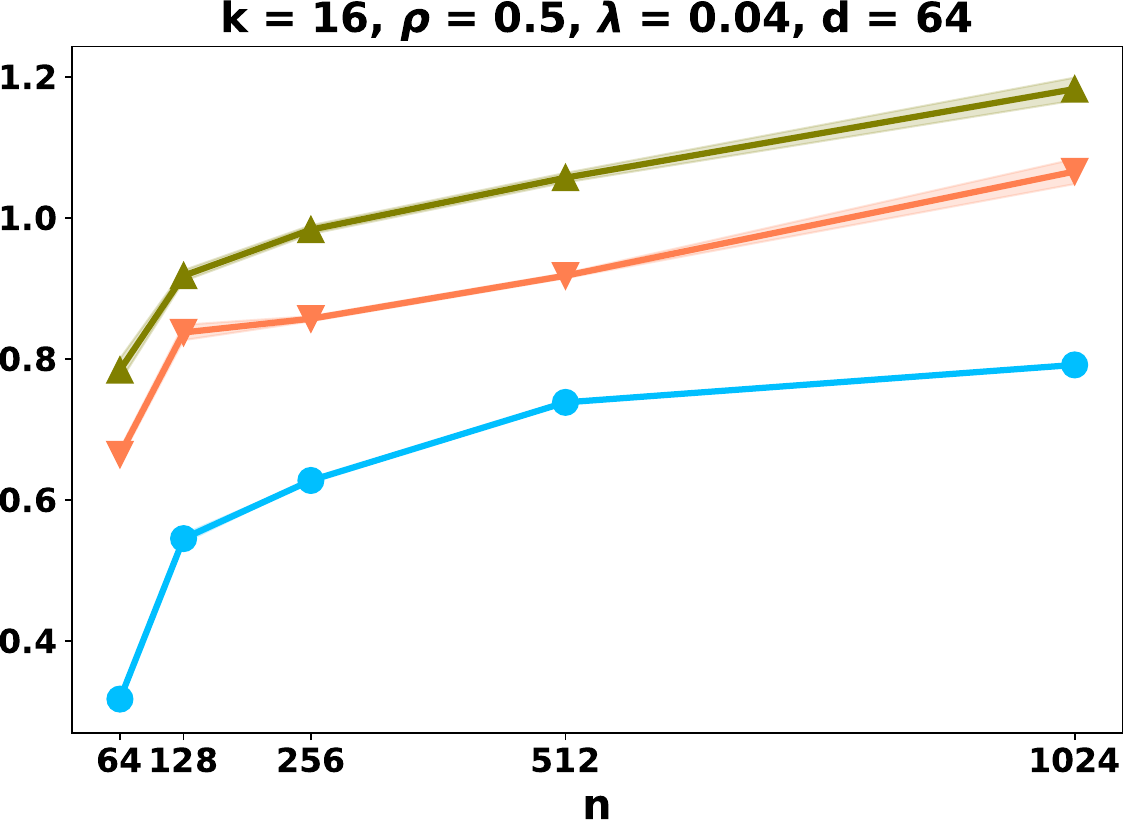}
    \includegraphics[width=0.325\linewidth]{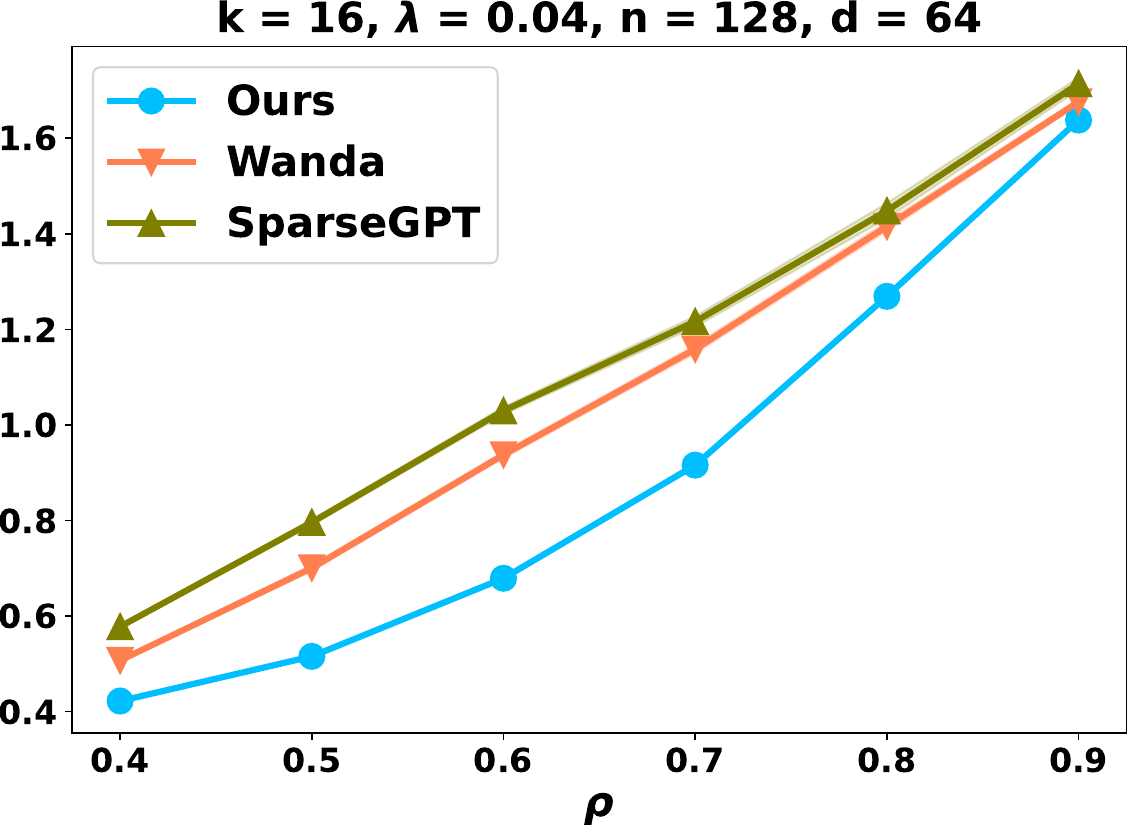}
    }
    \\
    \subfloat[Varying $\lambda$.]
    {\includegraphics[width=0.333\linewidth]{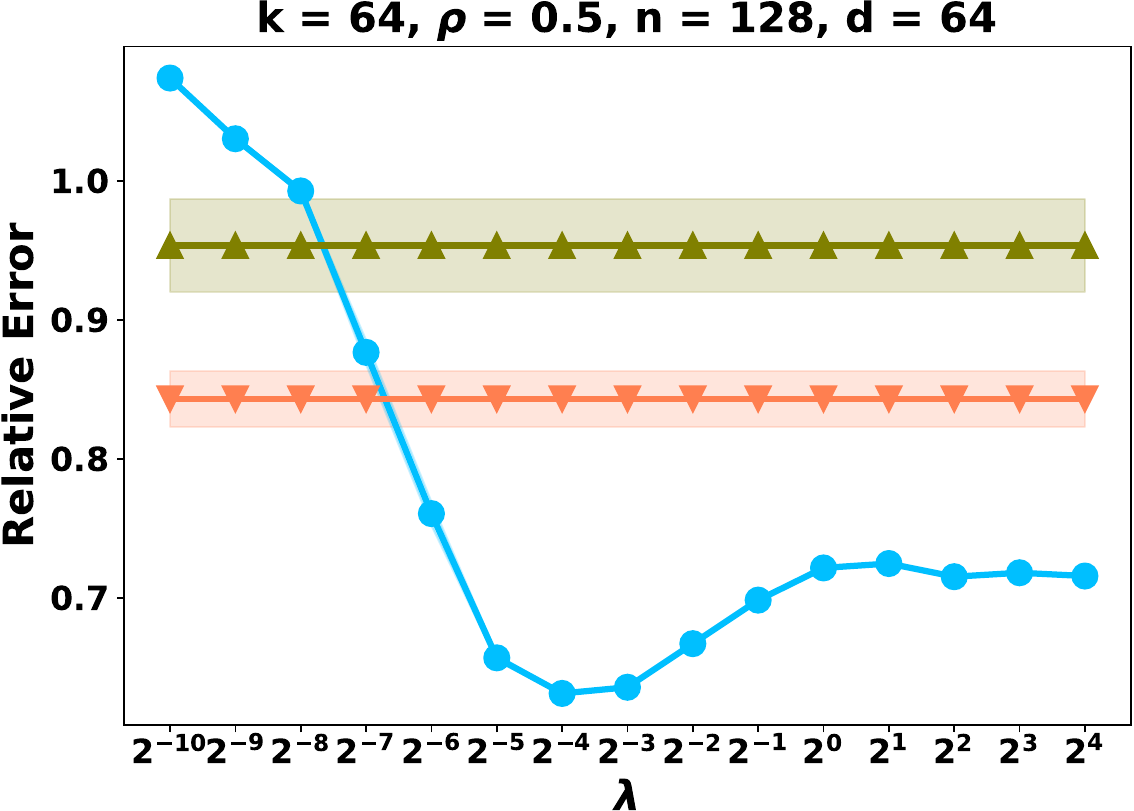}}
    \subfloat[Varying $n$.]
    {\includegraphics[width=0.325\linewidth]{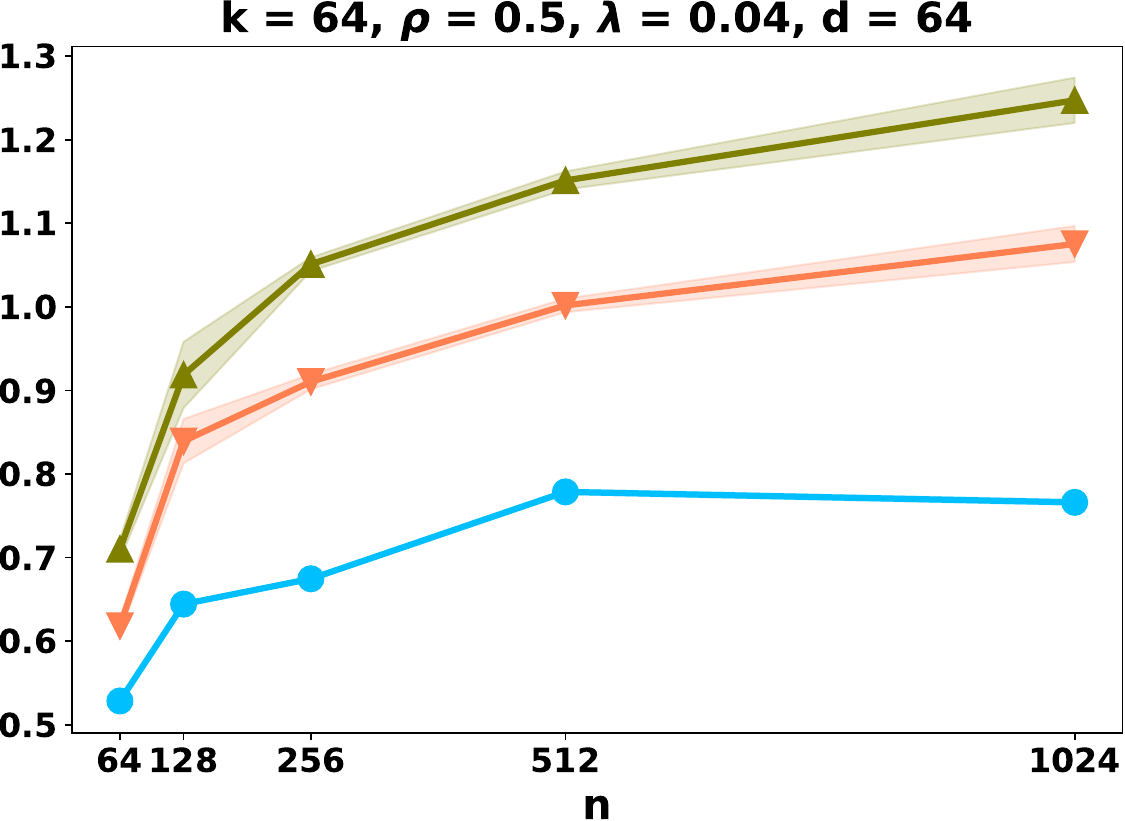}}
    \subfloat[Varying $\rho$.]{
    \includegraphics[width=0.325\linewidth]{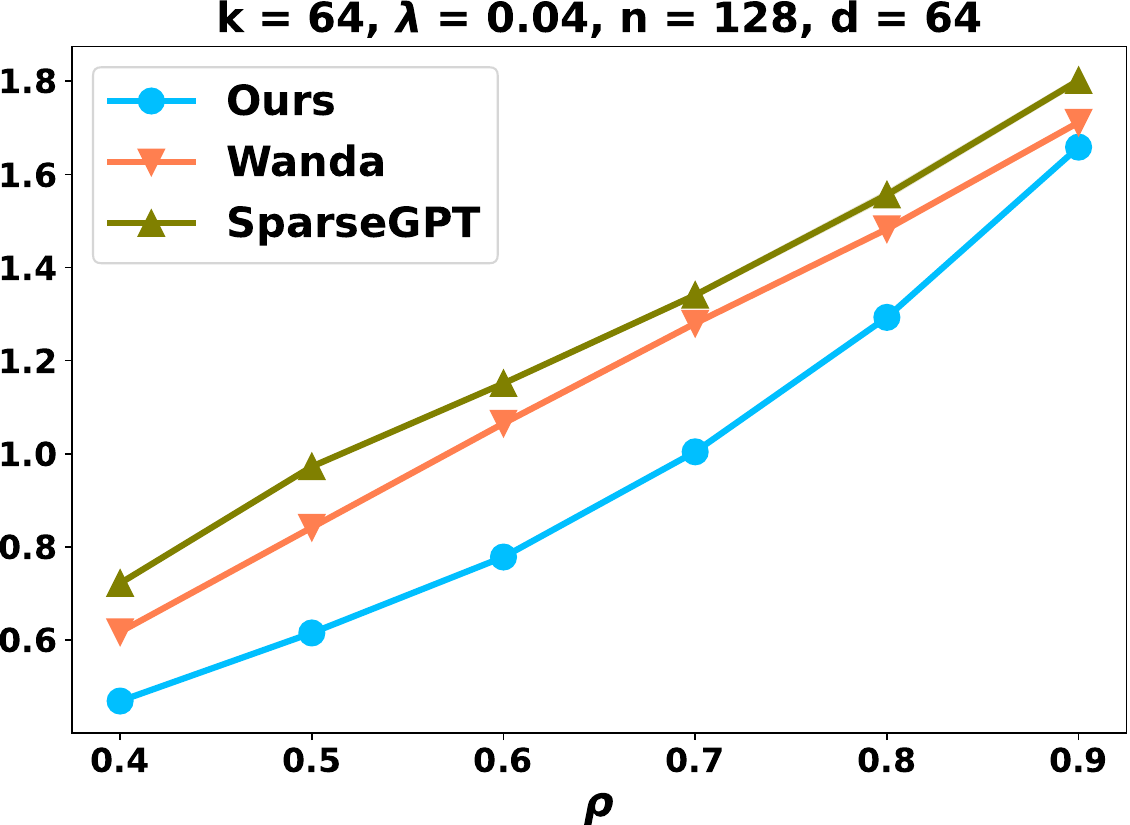}
    }
    \caption{
    The comparison among our Algorithm~\ref{alg:mask_gd}, Wanda, and SparseGPT. The $y$-axis is a relative error, which is defined as $\frac{\|\wt{D}^{-1}\wt{A} - D^{-1}A\|_F^2}{\|D^{-1}A\|_F^2}$, where $D^{-1}A$ is original attention matrix and $\wt{D}^{-1}\wt{A}$ is approximated attention matrix based on three methods. We always use $d=64$. 
    We use $k=16$ for the first row and $k=64$ for the second row. The $x$-axis is (a) regularization coefficient $\lambda$ for the left column; (b) input sequence length $n$ for the middle column; (c) pruning ratio $\rho$ for the right column. 
    }
    \label{fig:experiment}
\end{figure}

\section{Experiment}\label{sec:experiment}

\subsection{Evaluation on Synthetic Data}\label{sec:sub:synthetic}
We discuss the synthetic experiments conducted to illustrate the effectiveness of our Algorithm~\ref{alg:mask_gd}. 

\textbf{Method and evaluation.}
We implement our method following the pseudocode in Algorithm~\ref{alg:mask_gd}, using NumPy and JAX for acceleration. We evaluate our method on unstructured sparsity, meaning that zeros can occur anywhere within the attention weight matrix $W$.
Specifically, we use Definition~\ref{def:attn_prun_causal} as our loss function, optimizing over the pruning mask $M$ using gradient descent based on the closed-form expression derived in Theorem~\ref{thm:mat_view_l_informal}. To accelerate convergence, we leverage momentum into the optimization process and fix the momentum parameter at $0.9$. After obtaining the optimal pruning mask, we convert $M$ to a binary pruning mask to prune $W$, maintaining sparsity at the desired pruning ratio $\rho$. We use the relative error as our evaluation metric, which is defined as 
$
{\|\wt{D}^{-1}\wt{A} - D^{-1}A\|_F^2} / {\|D^{-1}A\|_F^2}   ,
$
where $\wt{D}$, $\wt{A}$, $D$, $A$ are defined in Definition~\ref{def:attn_prun_causal}.

\textbf{Baselines.}
We compare our method with two linear pruning approaches, namely Wanda~\citep{slbk24} and SparseGPT~\citep{fa23}. Wanda is a pruning method that removes weights with the smallest magnitudes multiplied by the corresponding input activations, achieving sparsity without requiring retraining or weight updates. SparseGPT is a second-order pruning method that utilizes the Hessian matrix to prune a portion of the weight matrix while simultaneously updating the remaining parameters.
We implement Wanda and SparseGPT as described in their respective papers. Notably, since the settings of SparseGPT and Wanda are linear, we do not prune the fused weight matrix $W$ directly; instead, we prune $W_Q$ and $W_K$ separately (see Figure~\ref{fig:att_prune}).

\textbf{Data.}
In order to assess the efficacy of different methods in approximating the attention matrix, we construct the data via a carefully defined generating process. Specifically, we create multiple independent random Gaussian matrices $G \in \R^{d \times d}$, where each entry of $G$ drawn from a normal distribution, i.e., $G_{i,j} \sim \N(0,1)$ for $i,j \in [d]$. Then, we perform singular value decomposition (SVD) on matrix $G$, i.e., $U,S,V^\top = \mathrm{SVD}(G)$. We retain the first four singular values in $S$ and set others to zero, constraining the rank to four. Our $W_Q$ and $W_K$ are then constructed as $U \diag(S) V^\top$. The weight matrix $W$ used in our setting is formed by taking the product $W = W_Q W_K^\top$. For $X \in \R^{n \times d}$, we generate it as a full-rank Gaussian random matrix.

\textbf{Setup.}
In our experiments, the weight matrix dimension $d = 64$ is kept constant across all figures, and we simulate two datasets of size $k = 16$ and $k = 64$. 
We set the input sequence length $n = 128$ for experiments (a) and (c) in Figure~\ref{fig:experiment}. The pruning ratio $\rho = 0.5$ is set for experiments (a) and (b) in Figure~\ref{fig:experiment}.
For our method, the regularization coefficient $\lambda:= \wt{\lambda}/n$ where we abuse the notation to denote $\wt{\lambda}$ as the same used in Definition~\ref{def:attn_prun_causal} and $\lambda$ here is the parameter we really control in experiments. $\lambda$ is set as $0.04$ for experiments (b) and (c) in Figure~\ref{fig:experiment} (intuition drawn  from experiment (a)). The total number of epochs is set as $T = 100$. The step size is set as $\eta = 0.1/\lambda$ because Theorem~\ref{thm:our_convergence} indicates that $\eta$ is inversely proportional to $\lambda$ with some constant, i.e., $\eta \propto 1/L \propto 1/(\lambda+\mathrm{other~terms})$.

\textbf{Results.}
Overall, the results in Figure~\ref{fig:experiment} show that our Algorithm~\ref{alg:mask_gd} outperforms Wanda and SparseGPT with a large margin, which supports our theoretical analysis in Theorem~\ref{thm:our_convergence}. In the following, we will discuss each setting in detail. 

\textit{Relation with regularization coefficient $\lambda$.}
The leftmost column of Figure~\ref{fig:experiment} investigates the impact of the regularization coefficient $\lambda$ on relative error. As $\lambda$ increases from very small values, the relative error initially decreases sharply for our algorithm, reaching a minimum before gradually rising again, which forms a $U$ shape curve. This behavior indicates that there is an optimal $\lambda$ where our algorithm achieves its best performance around $2^{-4}$. 
The U-shape curve phenomena are well-known in most hyper-parameter choosing, e.g., regularization coefficient. 

\textit{Relation with input sequence length $n$.}
The center column of Figure~\ref{fig:experiment} explores how the relative error changes with respect to the input sequence length $n$. As $n$ increases, the relative error for all three methods grows, though at different rates. Our method demonstrates a slower increase, maintaining a significant margin over both Wanda and SparseGPT, particularly for larger values of $n$. Wanda, while showing better performance than SparseGPT for larger sequence lengths, becomes comparable to SparseGPT as $n$ is relatively small.

\textit{Relation with pruning ratio $\rho$.}
The rightmost column of Figure~\ref{fig:experiment} illustrates the relationship between the relative error and the pruning ratio $\rho$ for the three methods under comparison: our algorithm, Wanda, and SparseGPT. As the pruning ratio $\rho$ increases, all methods exhibit a rise in relative error, indicating a degradation in approximation accuracy. However, our algorithm consistently outperforms both Wanda and SparseGPT across the range of $\rho$, with a lower relative error. SparseGPT and Wanda follow a similar trend, closely tracking each other.

\begin{figure}[!t]
    \centering
    {
    \includegraphics[width=0.37\linewidth]{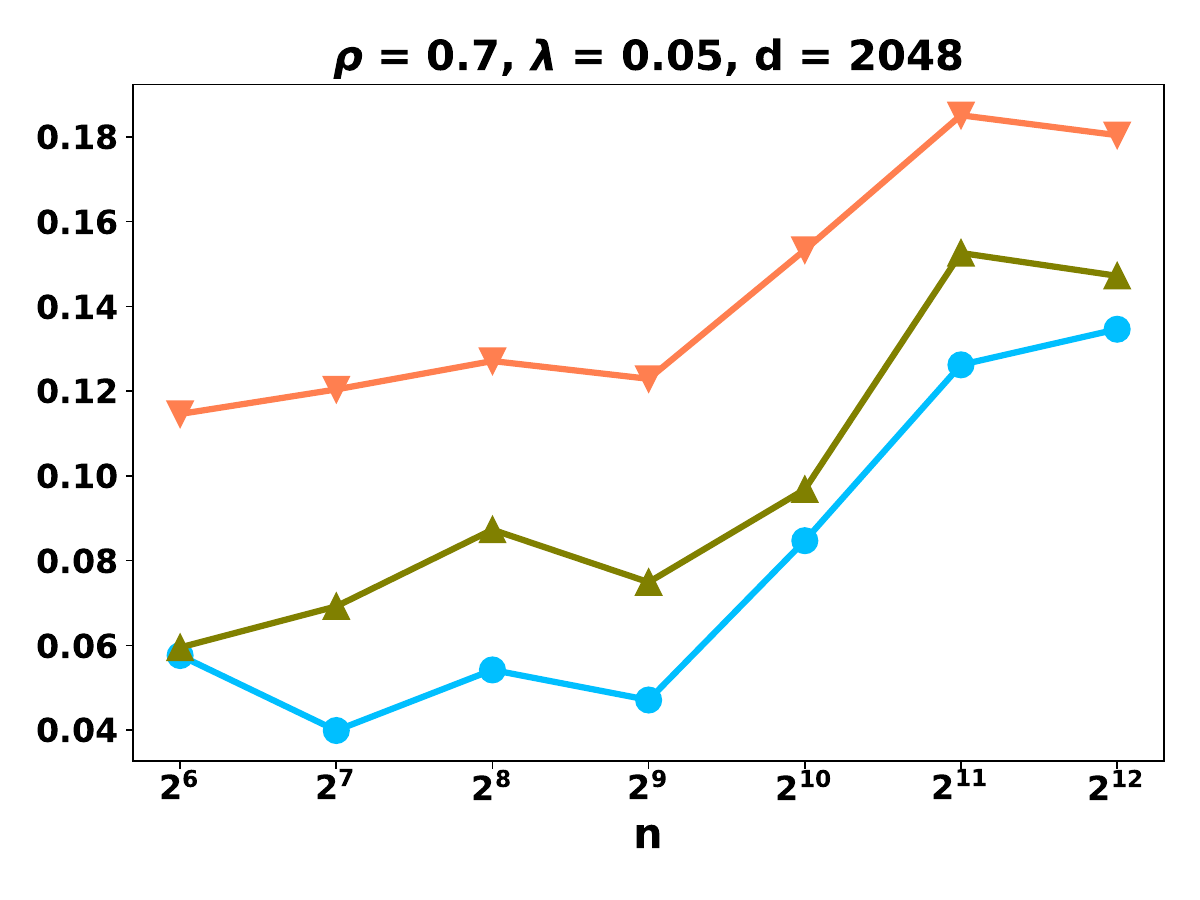}
    \includegraphics[width=0.37\linewidth]{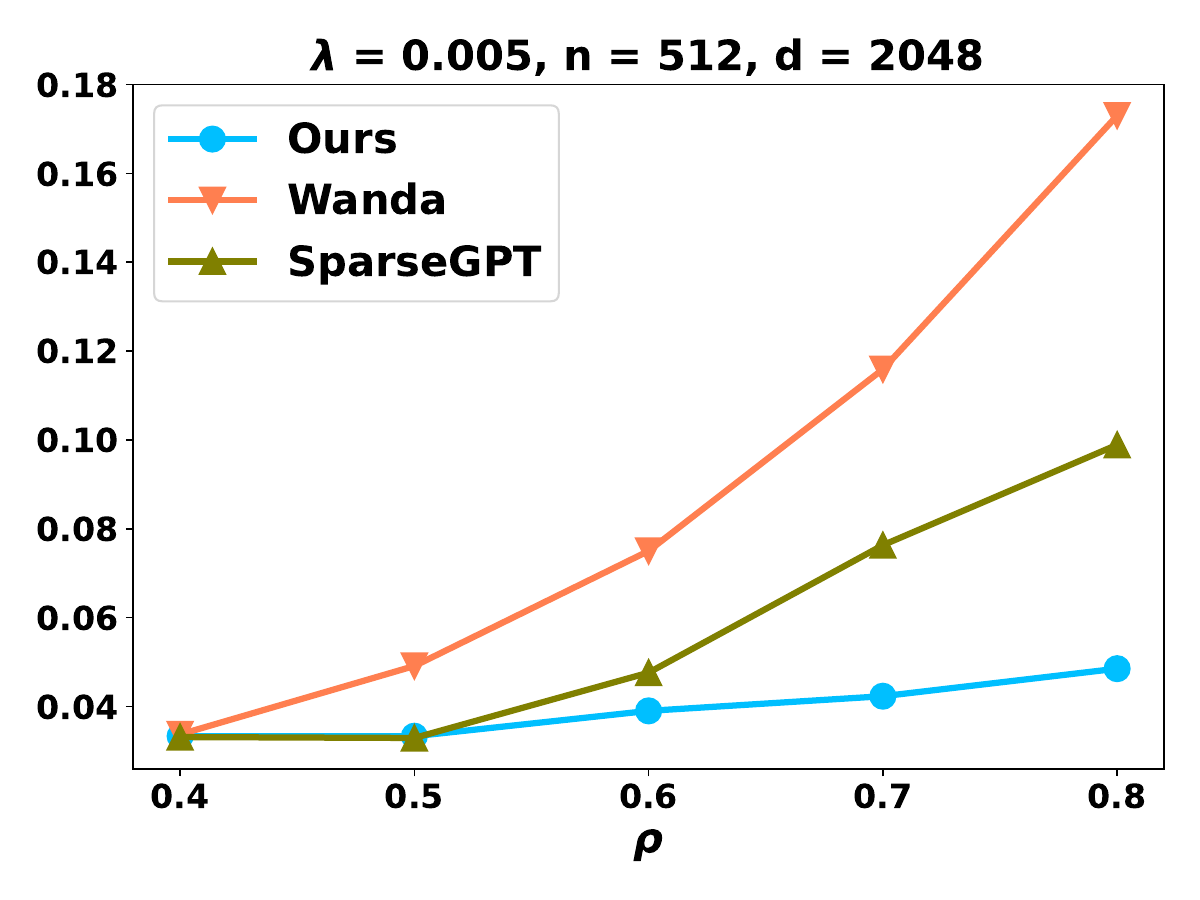}
    }
    \caption{
    The comparison among our algorithm, Wanda, and SparseGPT on Llama 3.2-1B.
    }
    \label{fig:new_experiment}
    \vspace{-0.3cm}
\end{figure}

\subsection{Experiment on Real Dataset and LLMs}\label{sec:exp:real}
In this subsection, we discuss the experiments conducted on the real dataset and LLMs to illustrate the effectiveness of our method.

\textbf{Method and evaluation.}
We evaluate our method on unstructured sparsity, meaning that zeros can occur anywhere within the attention weight matrices $W_Q$ and $W_K$. 
Specifically, we use the loss function defined below:
\begin{align}\label{eq:new_loss}
    {\cal L}(M_Q, M_K) := \frac{1}{2} \|D^{-1} A -  \wt{D}^{-1} \wt{A}\|_F^2 + \frac{1}{2}(\|M_Q\|_F^2 + \|M_K\|_F^2)
\end{align}
where $D$ and $A$ are the original attention matrix, $M_Q,M_K$ are pruning masks, and
\begin{align*}
    \wt{A} := \exp(X (M_Q \circ W_Q) (M_K \circ W_K)^\top X^\top ), ~~~~ \wt{D} := \diag (\wt{A} \cdot \textbf{1}_n). 
\end{align*}
 After obtaining the optimal pruning mask, we convert the pruning mask to a binary pruning mask to prune $W_Q$ and $W_K$, maintaining sparsity at the desired pruning ratio $\rho$. We use the relative error as our evaluation metric.

\textbf{Baselines.}
The pruning is performed on the last attention layer of the pretrained model Llama 3.2-1B~\cite{Llama32}. The baselines are Wanda and SparseGPT, the same as Section~\ref{sec:exp:real}.

\textbf{Data.}
To simulate real-world large language models, we utilize the Colossal Clean Crawled Corpus (C4 Dataset)~\cite{c4}, which is also used as the calibration dataset in our baselines Wanda and SparseGPT. Additionally, with a primary focus on pruning the attention matrix, we extract the input hidden states corresponding to the target attention matrix from the pretrained Llama 3.2 model using a customized hook function and use these as our input $X$.

\textbf{Setup.}
The weight matrix dimension is $d = 2048$ in Llama 3.2-1B. For all the experiments, we set $\lambda$ as $0.05$ and $\eta$ as $0.005$. 
For the varying $n$ experiment, we set the pruning ratio $\rho$ as $0.7$.
For the varying $\rho$ experiment, we set the input sequence length $n$ as $512$.

\textbf{Results.}
Overall, the results in Figure~\ref{fig:new_experiment} show that our algorithm outperforms Wanda and SparseGPT in real-world LLMs, which supports our theoretical analysis in Theorem~\ref{thm:our_convergence} and enhances our preliminary experiment in Section~\ref{sec:experiment}. In the following, we will discuss each setting in detail.

\textit{Relation with input sequence length $n$.}
The left column of Figure~\ref{fig:new_experiment} shows that our algorithm outperforms the baselines in different sequence lengths continuously.

\textit{Relation with pruning ratio $\rho$.}
The right column of Figure~\ref{fig:new_experiment} illustrates that as the pruning ratio $\rho$ increases, all methods exhibit a rise in relative error. But our algorithm consistently outperforms both Wanda and SparseGPT across the range of $\rho$ with a much lower increasing rate.

\textit{Assumptions verification.} 
Notice that Theorem~\ref{thm:our_convergence} relies on two assumptions: $XX^\top \succeq \beta I$ and $\min_{i,j \in [n]} (\wt{D}^{-1}\wt{A})_{i,j} \ge \delta > 0$. We verify these assumptions using the C4 dataset, obtaining $\beta \approx 0.034$ and $\delta \approx 0.0025$, thereby demonstrating the practicality of our theoretical framework.

\ifdefined\isarxiv
\begin{table}[!ht]
    \centering
    \caption{Comparison of different methods based on Perplexity (PPL).}
    \label{tab:ppl}
    \begin{tabular}{ll}
        \hline
        \textbf{Method} & \textbf{PPL} \\
        \hline
        Dense MLP + Dense Attn & 12.487 \\
        \hline
        Dense MLP + SparseGPT Attn & 14.269 \\
        Dense MLP + Wanda Attn & 14.912 \\
        Dense MLP + Our Attn & \textbf{13.885} \\
        \hline
        Wanda MLP + Wanda Attn & 30.426 \\
        Wanda MLP + SparseGPT Attn & 26.074 \\
        Wanda MLP + Our Attn & \textbf{24.427} \\
        \hline
        SparseGPT MLP + Wanda Attn & 45.435 \\
        SparseGPT MLP + SparseGPT Attn & 36.641 \\
        SparseGPT MLP + Our Attn & \textbf{34.946} \\
        \hline
    \end{tabular}
\end{table}
\else
\begin{wraptable}{r}{6.9cm}
\vspace{-0.5cm}
    \centering
    \caption{Comparison of different methods based on Perplexity (PPL).}
    \label{tab:ppl}
    \begin{tabular}{ll}
        \hline
        \textbf{Method} & \textbf{PPL} \\
        \hline
        Dense MLP + Dense Attn & 12.487 \\
        \hline
        Dense MLP + SparseGPT Attn & 14.269 \\
        Dense MLP + Wanda Attn & 14.912 \\
        Dense MLP + Our Attn & \textbf{13.885} \\
        \hline
        Wanda MLP + Wanda Attn & 30.426 \\
        Wanda MLP + SparseGPT Attn & 26.074 \\
        Wanda MLP + Our Attn & \textbf{24.427} \\
        \hline
        SparseGPT MLP + Wanda Attn & 45.435 \\
        SparseGPT MLP + SparseGPT Attn & 36.641 \\
        SparseGPT MLP + Our Attn & \textbf{34.946} \\
        \hline
    \end{tabular}
    \vspace{-0.5cm}
\end{wraptable}
\fi

\subsection{Experiment on End-to-end Perplexity}\label{sec:sub:perplexity}
We present the experiment on end-to-end perplexity in this section.

\textbf{Baselines.} As our method focuses on pruning the attention matrix, it can be seamlessly combined with approaches that perform linear pruning on the MLP, such as Wanda and SparseGPT. Therefore, we conduct three groups of experiments: (1) using a dense MLP without pruning, (2) applying SparseGPT to prune the MLP layer, and (3) using Wanda to prune the MLP layer. Subsequently, we apply Wanda, SparseGPT, and our method to prune the attention weights.

\textbf{Method and data.} We use the same method and data as Section~\ref{sec:exp:real}.

\textbf{Setup.} We use Llama 3.2 1B as the target model to prune, in which the hidden state dimension is $2048$. We set the pruning ratio as $0.5$ for MLP and $0.5$ for Attention Layer, when pruning MLP and attention weights at the same time, we have pruning ratio $\rho$ of the whole model as $0.5$. We use $8$ sentences in the C4 training dataset~\cite{c4} as calibration data, and we use $128$ sentences in the C4 validation dataset~\cite{c4} to evaluate perplexity. We set $0.005$ as the learning rate $\eta$ and $0.05$ as the regularization coefficient $\lambda$.

\textbf{Results.}
As we can see in Table~\ref{tab:ppl}, our Attention pruning methods always outperform Wanda Attention pruning and SparseGPT Attention pruning by a large margin when combined with Dense/Wanda/SparseGPT MLP pruning methods. These empirical results support our theoretical analysis that our pruning method can converge. Furthermore, our method is broadly applicable in the real-world case and can be combined with many other pruning methods. 

\section{Conclusion}\label{sec:conclusion}
This paper introduces a novel approach to LLM weight pruning that directly optimizes for approximating the attention matrix. We provide theoretical guarantees for the convergence of our Gradient Descent-based algorithm to a near-optimal pruning mask solution. Experimental results demonstrated the method's effectiveness in maintaining model performance while reducing computational costs. This work establishes a new theoretical foundation for pruning algorithm design in LLMs, potentially enabling more efficient inference on resource-constrained devices. 

\ifdefined\isarxiv
\section*{Acknowledgement}
Research is partially supported by the National Science Foundation (NSF) Grants 2023239-DMS, CCF-2046710, and Air Force Grant FA9550-18-1-0166.

\bibliographystyle{alpha}
\bibliography{ref}
\else
\section*{Acknowledgement}
Research is partially supported by the National Science Foundation (NSF) Grants 2023239-DMS, CCF-2046710, and Air Force Grant FA9550-18-1-0166.
\bibliography{ref}
\bibliographystyle{iclr2025_conference}

\newpage
\clearpage
\fi

\newpage
\onecolumn
\appendix
\begin{center}
	\textbf{\LARGE Appendix }
\end{center}


\ifdefined\isarxiv

\else

{\hypersetup{linkcolor=black}
\tableofcontents
\newpage
}
\fi

\paragraph{Roadmap.}
The appendix is organized as follows. In Section~\ref{sec:app:related}, we reviewed more literature related to our paper. In Section~\ref{sec:app:preli}, we give the preliminary of our paper. 
In Section~\ref{sec:app:grad}, we provide a detailed gradient analysis of the loss function. 
In Section~\ref{sec:app:mat_form}, we provide details about how we integrate the gradient of loss function into matrix form. 
In Section~\ref{sec:app:bound_func}, we bound some basic functions to be used later. 
In Section~\ref{sec:app:lip}, we provide proof for the Lipschitz property of the gradient of the loss function. In Section~\ref{sec:app:converge}, we provide proof of convergence for GD.

\section{More Related Work}\label{sec:app:related}

\paragraph{Large Language Models.} Transformer-based neural networks~\citep{vsp+17} have rapidly emerged as the dominant architecture for natural language processing in machine learning. When expanded to billions of parameters and trained on vast, diverse datasets, these systems are typically termed large language models (LLMs) or foundation models~\citep{bha+21}. Prominent LLM examples encompass BERT~\citep{dclt19}, PaLM~\citep{cnd+22}, Llama~\citep{tli+23,Llama32}, and GPT4o~\citep{gpt4o}, which display adaptable competencies~\citep{bce+23} across numerous downstream applications.
To enhance LLMs for domain-specific uses, researchers have created multiple adaptation approaches. These include: adapter modules~\citep{eyp+22,ghz+23,zhz+23,zjk+23}; calibration mechanisms~\citep{cpp+23,zwf+21}; multitask refinement~\citep{gfc+21a,vnr+23,xsw+24,zzj+23a}; along with prompt engineering~\citep{lac+21}, scratchpad approaches~\citep{naa+21}, instruction optimization~\citep{chl+22,ll21,mkd+22}, symbolic adaptation~\citep{jla+23,xwx+24,xxs+22}, black-box adjustments~\citep{ssy+22}, human-aligned reinforcement learning~\citep{owj+22}, and structured reasoning techniques~\citep{ksk+22,sdj+23,wws+22,zmc+24}.
Contemporary investigations cover tensor architecture innovations \cite{as24_iclr,lssz24_tat,sht24,zly+25}, efficiency enhancements \cite{as24_rope,chl+24_rope_grad,cls+24,hcl+24,hcw+24,hlsl24,hwsl24,hyw+23,klsz24_sample_tw,lls+24_conv,lls+24_dp_je,lls+24_io,llsz24_nn_tw,llss24_sparse,lssy24,smn+24,ssz+24_dit,ssz+24_pruning,szz24,whhl24,whl+24,xhh+24}, plus ancillary studies \cite{cll+24_rope,dlg+22,chl+24_gat,dswy22_coreset,gms23_exp_reg,gsx23,lls+24_grok,lls+25_graph,lssz24_dp,lsy24,ssx23_ann,ssz23_tradeoff,sy23_des,tsq+23,xlx+22,xsl24,zha24,zxf+24,cll+25_var,lll+25_loop,kls+25,lss+25_relu,cll+25_icl,cll+25_mamba,gswy23,hwl+24,hwg+24,wsh+24}.

\section{Preliminary}\label{sec:app:preli}
In Section~\ref{sec:app:notation}, we introduce some notations we use in this paper.
In Section~\ref{sec:app:fact}, we provide some basic facts.

\subsection{Notations}\label{sec:app:notation}
For any positive integer $n$, we use $[n]$ to denote set $\{1,2,\cdots, n\}$.  
For two vectors $x \in \R^n$ and $y \in \R^n$, we use $\langle x, y \rangle$ to denote the inner product between $x,y$, i.e., $\langle x, y \rangle = \sum_{i=1}^n x_i y_i$.
For each $a, b \in \R^n$, we use $a \circ b \in \R^n$ to denote the Hadamard product, i.e. the $i$-th entry of $(a\circ b)$ is $a_i b_i$ for all $i \in [n]$.
We use $e_i$ to denote a vector where only $i$-th coordinate is $1$, and other entries are $0$.
We use ${\bf 1}_n$ to denote a length-$n$ vector where all the entries are ones.
We use $\|x\|_p$ to denote the $\ell_p$ norm of a vector $x \in \R^n$, i.e. $\|x\|_1 := \sum_{i=1}^n |x_i|$, $\|x\|_2 := (\sum_{i=1}^n x_i^2)^{1/2}$, and $\|x\|_{\infty} := \max_{i \in [n]} |x_i|$.
For $A \in \R^{m \times n}$, let $A_i \in \R^n$ denote the $i$-th row and $A_{*,j} \in \R^m$ denote the $j$-th column of $A$, where $i \in [m]$ and $j \in [n]$.
For a square matrix $A$, we use $\tr[A]$ to denote the trace of $A$, i.e., $\tr[A] = \sum_{i=1}^n A_{i,i}$.
For two matrices $X, Y \in \R^{m \times n}$, the standard inner product between matrices is defined by $\langle X, Y \rangle:= \tr[X^\top Y]$.
We use $\exp(A)$ to denote a matrix where $\exp(A)_{i,j} := \exp(A_{i,j})$ for a matrix $A \in \R^{n \times d}$.
For $k > n$, for any matrix $A \in \R^{k \times n}$, we use $\|A\|$ to denote the spectral norm of $A$, i.e. $\|A\|:=\sup_{x\in \R^n} \|Ax\|_2 / \|x\|_2$. 
We use $\|A\|_{\infty}$ to denote the $\ell_{\infty}$ norm of a matrix $A \in \R^{n \times d}$, i.e. $\|A\|_{\infty} := \max_{i \in [n], j \in [d]} |A_{i,j}|$.
We use $\|A\|_F$ to denote the Frobenius norm of a matrix $A \in \R^{n \times d}$, i.e. $\|A\|_F := \sqrt{\sum_{i \in [n]} \sum_{j \in [d]} |A_{i,j}|^2}$.
For a symmetric matrix $A \in \R^{n \times n}$, we use $A \succeq 0$ (positive semidefinite (PSD)), if for all $x \in \R^n$, we have $x^\top A x \geq 0$.
We use $\lambda_{\min} (A)$ and $\lambda_{\max} (A)$ to denote the minimum and the maximum eigenvalue of the square matrix $A$, respectively.
Let $A \in \R^{n \times d}$. We use $a:=\vect(A)$ to denote a length $nd$ vector. We stack rows of $A$ into a column vector, i.e. $\vect(A) := [a_1^\top, a_2^\top, \dots, a_n^\top]^\top$ where $a_i^\top$ is the $i$-th row of $A$, or simply $\vect(A)_{j + (i-1)d} := A_{i,j}$ for any $i\in [n], j\in [d]$.

\subsection{Facts}\label{sec:app:fact}

\begin{fact}[Indexing]\label{fac:indexing}
Suppose we have matrices $U \in \R^{n \times m}, V \in \R^{m \times d}$.
We define 
\begin{align*}
    \underbrace{X}_{n \times d} := \underbrace{U}_{n \times m} \underbrace{V}_{m \times d} .
\end{align*}

Then, we have the following:
\begin{itemize}
    \item Indexing for one row:  $X_{i} = V^\top U_{i} \in \R^d$, i.e. $X_{i}^\top =  U_{i}^\top V$, for $i \in [n]$. 
    \item Indexing for one column: $X_{*,j} = U V_{*,j} \in \R^n$ for $j \in [d]$.
\end{itemize}
\end{fact}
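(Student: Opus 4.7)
The plan is to verify both indexing identities directly from the definition of matrix multiplication. Starting from $X_{i,j} = \sum_{k=1}^{m} U_{i,k} V_{k,j}$ for all $i \in [n]$, $j \in [d]$, each claim reduces to reading this double sum in the appropriate direction (holding $i$ fixed to extract a row, or holding $j$ fixed to extract a column).

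For the row statement, I would fix $i \in [n]$ and compute the $j$-th entry of the candidate vector $V^\top U_i \in \R^d$. By the definition of matrix-vector multiplication and the fact that $(V^\top)_{j,k} = V_{k,j}$, we have $(V^\top U_i)_j = \sum_{k=1}^m (V^\top)_{j,k} (U_i)_k = \sum_{k=1}^m V_{k,j} U_{i,k}$, which coincides entry-wise with $X_{i,j}$. This yields $X_i = V^\top U_i$ as column vectors in $\R^d$, and transposing gives $X_i^\top = U_i^\top V$ as a row vector.

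For the column statement, I would fix $j \in [d]$ and compute the $i$-th entry of $U V_{*,j} \in \R^n$. By matrix-vector multiplication, $(U V_{*,j})_i = \sum_{k=1}^m U_{i,k} (V_{*,j})_k = \sum_{k=1}^m U_{i,k} V_{k,j} = X_{i,j}$, which gives $X_{*,j} = U V_{*,j}$ entry-wise. This exhausts both bullets of the fact.

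There is essentially no obstacle here: the statement is just a bookkeeping restatement of the definition, and the only thing to be careful about is notational consistency, namely that the paper treats $U_i$ and $V_{*,j}$ as genuine column vectors in $\R^m$, so the transposes in $X_i^\top = U_i^\top V$ line up correctly. Because this is a definitional fact, the proof is a two- to three-line calculation and does not require any of the surrounding assumptions on $X$, $W$, $M$, or the Softmax structure.
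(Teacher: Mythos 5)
Your proof is correct: the entry-wise verification from $X_{i,j}=\sum_{k=1}^m U_{i,k}V_{k,j}$ establishes both bullets, and your care with $U_i$, $V_{*,j}$ being column vectors matches the paper's conventions. The paper states this fact without proof precisely because it is this kind of definitional bookkeeping, so your argument is the standard (and essentially unique) route.
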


\begin{fact}\label{fac:combine}
We have

{\bf Part 1.}Suppose we have vectors $u \in \R^{n}, v \in \R^{n}$. For $i \in [n]$, we define
\begin{align*}
    x_i := u_i v_i.
\end{align*}

Then we have the following:
\begin{itemize}
    \item $\underbrace{x}_{n \times 1} = \underbrace{u \circ v}_{n \times 1} = \underbrace{\diag(u)}_{n \times n} \underbrace{v}_{n \times 1} = \underbrace{\diag(v)}_{n \times n} \underbrace{u}_{n \times 1}$
\end{itemize}

{\bf Part 2.}Suppose we have matrix $W \in \R^{n \times n}$, vector $u \in \R^{n}$. For $i \in [n]$, we define
\begin{align*}
    X_{*,j}= W_{*,j}u_j.
\end{align*}

Then we have the following:
\begin{itemize}
    \item $X = W \diag(u)$
\end{itemize}
\end{fact}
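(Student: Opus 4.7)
My plan is to verify both parts of Fact~\ref{fac:combine} by direct computation, unpacking the definitions of the Hadamard product, diagonal matrix, and matrix multiplication, and then matching entries (for Part 1) or columns (for Part 2). No external results are needed.

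For Part 1, I would first observe that the Hadamard product is defined entrywise as $(u \circ v)_i = u_i v_i$, which by the defining relation $x_i = u_i v_i$ immediately gives $x = u \circ v$. Next, I would expand the $i$-th entry of $\diag(u) v$ using the fact that $\diag(u)$ has $u_i$ on its $i$-th diagonal entry and zero elsewhere: $(\diag(u) v)_i = \sum_{k=1}^n \diag(u)_{i,k} v_k = u_i v_i = x_i$, which yields $x = \diag(u) v$. A symmetric calculation, with the roles of $u$ and $v$ exchanged and using commutativity of scalar multiplication, will give $x = \diag(v) u$.

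For Part 2, I would work columnwise. Fix an arbitrary column index $j \in [n]$ and apply the identities $\diag(u) e_j = u_j e_j$ and $W e_j = W_{*,j}$ to compute
\begin{align*}
    (W \diag(u))_{*,j}
    = W \diag(u) e_j
    = u_j \cdot W e_j
    = u_j W_{*,j}
    = W_{*,j} u_j
    = X_{*,j}.
\end{align*}
Since this equality holds for every column index $j$, the matrices $X$ and $W \diag(u)$ agree column by column, so $X = W \diag(u)$.

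The main obstacle here is essentially nonexistent: both claims reduce to routine bookkeeping at the entry or column level. The only point requiring any care is dimensional consistency — ensuring that $u \in \R^n$ makes $\diag(u) \in \R^{n \times n}$ conformable with $W \in \R^{n \times n}$ on the right in Part 2 — which is automatic under the stated hypotheses. No nontrivial lemma is invoked, and the argument proceeds purely from the definitions collected in Section~\ref{sec:app:notation}.
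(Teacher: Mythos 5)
Your proof is correct; the paper states this fact without proof, and your direct entrywise verification for Part 1 and columnwise verification for Part 2 is exactly the routine argument the paper implicitly relies on. No issues.
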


\begin{fact}[Calculus]\label{fac:calculus}
We have

{\bf Part 1.} {\rm (Scalar calculus)}
For any $t \in \R$, function $f: \R \to \R$, we have
\begin{itemize}
    \item $\frac{\d f^n(t)}{\d t} = n f^{n-1}(t) \frac{\d f(t)}{\d t}$.
\end{itemize}

{\bf Part 2.} {\rm (Vector calculus)}
For any $x, y \in \R^n$, $t \in \R$, we have
\begin{itemize}
    \item $\frac{\d (x \circ y)}{\d t} = \frac{\d x}{\d t} \circ y + \frac{\d y}{\d t} \circ x$. (Product rule of vector Hadamard product)
    \item $\frac{\d \langle x, y \rangle}{\d t} = \langle \frac{\d x}{\d t}, y \rangle +  \langle x, \frac{\d y}{\d t} \rangle$. (Product rule of inner product)
    \item $\frac{\d x}{\d x_i} = e_i$.
\end{itemize}

{\bf Part 3.} {\rm (Matrix calculus)}
For any $X, Y \in \R^{n \times m}$, $Z \in \R^{m \times d}$, $t \in \R$ which is independent of $Z$, function $f: \R \to \R^{n \times d}$, functions $f_1(t), f_2(t), \dots , f_n(t): \R \to \R^{n \times d}$, we have
\begin{itemize}
    \item $\frac{\d (X \circ Y)}{\d t} = \frac{\d X}{\d t} \circ Y + \frac{\d Y}{\d t} \circ X$. (Product rule of matrix Hadamard product)
    \item $\frac{\d \exp(f(t))}{\d t} = \exp(f(t)) \circ \frac{\d f(t)}{\d t}$, where $\exp(\cdot)$ is applied entry-wise.
    \item $\frac{\d (XZ)}{\d t} = \frac{\d X}{\d t} Z$.
    \item $\frac{\d (Z X^\top)}{\d t} = Z \frac{\d X^\top}{\d t}$.
    \item $\frac{\d }{\d t} \sum_{i=1}^n f_i(t) = \sum_{i=1}^n \frac{\d f_i(t)}{\d t}$.
    \item $\underbrace{\frac{\d X}{\d X_{i,j}}}_{n \times m} = \underbrace{e_i}_{n \times 1} \underbrace{e_j^\top}_{1 \times m}$.
\end{itemize}

\end{fact}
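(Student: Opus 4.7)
The plan is to verify each identity by reducing it to a one-variable scalar calculus statement, applied entry-by-entry. Every identity in the fact is either a coordinatewise statement about vector/matrix-valued functions of a scalar $t$, or a statement about partial derivatives with respect to a single coordinate, so once the conventions are fixed, no new ideas beyond the scalar product rule and chain rule are needed. I would therefore begin by stating the convention that for a matrix-valued function $F(t)\in\R^{n\times d}$, the derivative $\frac{\d F(t)}{\d t}$ is the entrywise derivative $(\frac{\d F(t)}{\d t})_{i,j}=\frac{\d F_{i,j}(t)}{\d t}$, and that $\frac{\d X}{\d X_{i,j}}$ means the partial derivative treating the entries of $X$ as independent variables.

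For Part 1, the power rule $\frac{\d f^n(t)}{\d t}=nf^{n-1}(t)\frac{\d f(t)}{\d t}$ is just the chain rule applied to the composition $t\mapsto f(t)\mapsto f(t)^n$. For Part 2, I would write $(x\circ y)_i=x_iy_i$ and apply the scalar product rule to each coordinate to obtain $\frac{\d(x\circ y)}{\d t}=\frac{\d x}{\d t}\circ y+x\circ\frac{\d y}{\d t}$; then expand $\langle x,y\rangle=\sum_{i=1}^n x_iy_i$ and differentiate term-by-term using linearity to get the inner-product rule; the identity $\frac{\d x}{\d x_i}=e_i$ is immediate from the definition of partial derivative, since $\frac{\d x_j}{\d x_i}=\mathbf{1}[i=j]$.

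For Part 3, each statement reduces to Part 1 or Part 2 applied coordinatewise. The Hadamard product rule follows from $(X\circ Y)_{i,j}=X_{i,j}Y_{i,j}$ and the scalar product rule. The exponential rule follows entry-by-entry from $\frac{\d}{\d t}\exp(f_{i,j}(t))=\exp(f_{i,j}(t))\frac{\d f_{i,j}(t)}{\d t}$. The two matrix-product rules follow from $(XZ)_{i,k}=\sum_j X_{i,j}Z_{j,k}$ and $(ZX^\top)_{i,k}=\sum_j Z_{i,j}X_{k,j}$: since $Z$ is independent of $t$, differentiation passes through the sum and acts only on the $X$ factor, reproducing matrix multiplication with $\frac{\d X}{\d t}$ (respectively $\frac{\d X^\top}{\d t}$). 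The sum rule is linearity of the entrywise derivative. Finally, $\frac{\d X}{\d X_{i,j}}=e_ie_j^\top$ follows from the fact that $\frac{\d X_{k,\ell}}{\d X_{i,j}}=\mathbf{1}[k=i]\mathbf{1}[\ell=j]=(e_ie_j^\top)_{k,\ell}$.

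There is no genuine mathematical obstacle here: the only point that requires care is being consistent about the differentiation convention (entrywise for matrix-valued functions, and independent-variable partials for the identity in the last bullet). Once that convention is laid down at the start of the proof, every bullet collapses to one line of scalar calculus, so the write-up is essentially a checklist.
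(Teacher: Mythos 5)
Your proposal is correct: every bullet does reduce to the scalar product/chain rule applied entrywise once the entrywise-derivative convention is fixed, and your interpretation of $f^n$ as the $n$-th power matches how the fact is used later (e.g., with exponent $-1$ in Lemma~\ref{lem:grad_soft_prob}). The paper states this fact without proof, treating it as standard, so your checklist-style verification is exactly the argument the authors implicitly rely on.
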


\begin{fact}[Basic algebra]\label{fac:algebra}
    Let $u \in \R^n$, $v \in \R^n$, $w \in \R^n$, $X \in \R^{n \times d}$, $Y \in \R^{n \times d}$, and $Z \in \R^{n \times n}$. Then, we have
    \begin{itemize}
        \item $\langle u,v \rangle = \langle v,u \rangle = u^\top v = v^\top u$
        \item $u \circ v = v \circ u = \diag(u)v =\diag(v)u$
        \item $\langle u , v \rangle=\langle u \circ v, {\bf 1}_n  \rangle$
        \item $\langle u \circ v , w \rangle = \langle u \circ w , v \rangle = \langle w \circ v , u \rangle$
        \item $u^\top (v \circ w) = u^\top \diag(v) w$
        \item $(X \circ Y)^\top = X^\top \circ Y^\top$
        \item $X \circ e_i e_j^\top = X_{i,j} e_i e_j^\top$
        \item $\diag(u) Z \diag(v) = (uv^\top) \circ Z$
        \item $XY^\top = \sum_{i \in [d]} X_{*,i}Y^\top_{*,i}$
        \item $X_{i,j} Y_{i,j} = (X \circ Y)_{i,j}$
        \item $\sum_{j \in [n]} u \circ A_{*,j} =  u \circ \sum_{j \in [n]} A_{*,j}$
        \item $\| X \|_F^2 = \tr[X X^\top]$
        \item $\tr[X Y^\top] = \tr [Y^\top X]$
        \item $\| \diag(u) \|_F = \| u \|_2$
    \end{itemize}
\end{fact}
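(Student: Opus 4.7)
The fact collects fourteen elementary identities, each of which reduces to a direct coordinate-wise computation. The overall plan is to verify each identity either by comparing a single entry on both sides, by expanding both sides via the definitions of $\diag$, $\circ$, and $\tr$, or by invoking a previously verified item in the list.

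First I would dispatch the pure vector identities (items 1--5). The symmetry $\langle u,v\rangle = \langle v,u\rangle$ and its matrix-product rewrites are immediate from the definition $\langle u,v\rangle=\sum_i u_iv_i$. The Hadamard identity $u\circ v = \diag(u)v$ follows because the $i$-th entry of both sides is $u_iv_i$; commutativity is then automatic. The rewrite $\langle u,v\rangle=\langle u\circ v,{\bf 1}_n\rangle$ is just $\sum_i u_iv_i=\sum_i(u_iv_i)\cdot 1$, and the three-way identity $\langle u\circ v,w\rangle=\langle u\circ w,v\rangle=\langle w\circ v,u\rangle$ comes from $\sum_i u_iv_iw_i$ being symmetric in $(u,v,w)$. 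The identity $u^\top(v\circ w)=u^\top\diag(v)w$ is then a direct consequence.

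Next I would handle the matrix/Hadamard items (items 6--11). The transpose law $(X\circ Y)^\top = X^\top\circ Y^\top$ and the entrywise identity $(X\circ Y)_{i,j}=X_{i,j}Y_{i,j}$ follow immediately from the definition of $\circ$. For $X\circ e_ie_j^\top = X_{i,j}e_ie_j^\top$, observe that $e_ie_j^\top$ is the indicator matrix with a single $1$ at position $(i,j)$, so the Hadamard product zeros every entry except $(i,j)$, which becomes $X_{i,j}$. For $\diag(u)Z\diag(v)=(uv^\top)\circ Z$, I would compare the $(i,j)$-entries: the left side equals $u_iZ_{i,j}v_j$, and the right side equals $(uv^\top)_{i,j}Z_{i,j}=u_iv_jZ_{i,j}$. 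For $XY^\top=\sum_{i\in[d]}X_{*,i}Y_{*,i}^\top$, the $(i,j)$-entry on the left is $\sum_k X_{i,k}Y_{j,k}$ while the $k$-th summand on the right contributes $X_{i,k}Y_{j,k}$. The linearity identity $\sum_{j\in[n]}u\circ A_{*,j}=u\circ\sum_{j\in[n]}A_{*,j}$ is just distributivity of entrywise multiplication over addition.

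Finally I would handle the trace/Frobenius items (items 12--14). The identity $\|X\|_F^2=\tr[XX^\top]$ follows by computing $\tr[XX^\top]=\sum_i(XX^\top)_{i,i}=\sum_i\sum_j X_{i,j}^2$. The cyclic property $\tr[XY^\top]=\tr[Y^\top X]$ is the standard trace cyclicity, which may be re-derived by writing both sides as $\sum_{i,j}X_{i,j}Y_{i,j}$. The identity $\|\diag(u)\|_F=\|u\|_2$ is immediate because $\diag(u)$ has $u_i$ on the diagonal and zeros elsewhere, so its Frobenius norm squared is $\sum_i u_i^2$. Since every item collapses to a one-line index calculation, there is no genuine obstacle; the main care is simply to keep the dimensions of $u,v,w,X,Y,Z$ consistent and to invoke earlier items in the list rather than reproving them each time.
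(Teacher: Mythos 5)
Your verification is correct: every item does reduce to the one-line entrywise or index computation you describe, and the paper itself states this fact without any proof, implicitly relying on exactly these elementary calculations. Your proposal simply makes explicit what the paper takes for granted, so there is nothing to add.
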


\begin{fact}[Norm bounds]\label{fac:norm_bound}
    For $a \in \R$, $u \in \R^d$, $X,Y \in \R^{n \times d}$, $Z \in \R^{d \times m}$ we have
    \begin{itemize}
        \item $\|a X\|_F = |a|\| X \|_F$ (absolute homogeneity).
        \item $\| X + Y \|_F \leq \|X\|_F + \|Y\|_F$ (triangle inequality).
        \item $ |\langle X, Y\rangle| \leq \|X\|_F \cdot \|Y\|_F $ (Cauchy–Schwarz inequality).
        \item $ \| X^\top \|_F = \| X \|_F$.
        \item $\| X u \|_2 \leq \| X \| \cdot \| u \|_2$
        \item $\| X \circ Y \|_F \leq \|X\|_F \cdot \|Y\|_F$.
        \item For any $i \in [n]$, $j \in [d]$, we have $|X_{i,j}| \leq \| X \|_F$.
        \item $\| X \| \leq \| X \|_F \leq \sqrt{k} \| X \|$ where $k$ is the rank of $X$.
        \item $\|Y \cdot Z\|_F \leq \| Y \|_F \cdot \| Z \|_F$.
    \end{itemize}
\end{fact}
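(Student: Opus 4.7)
The plan is to verify each of the nine stated inequalities by relying on just three tools: the definition $\|X\|_F^2 = \sum_{i,j} X_{i,j}^2$, the isometric identification of a matrix with its vectorization $\vect(X) \in \R^{nd}$, and the singular value decomposition. Nothing here is deep, so the proof is mostly careful bookkeeping; I would proceed in four natural groups.

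First I would dispatch the four essentially definitional items. Absolute homogeneity $\|aX\|_F = |a|\|X\|_F$ and transpose invariance $\|X^\top\|_F = \|X\|_F$ fall out by pulling $a^2$ through the sum of squares and by noting that transposition merely reindexes that sum. The entrywise bound $|X_{i,j}| \le \|X\|_F$ is just a single nonnegative summand bounded by the whole sum. The inequality $\|Xu\|_2 \le \|X\| \cdot \|u\|_2$ is literally the defining property of the operator norm $\|X\| := \sup_{u \ne 0} \|Xu\|_2/\|u\|_2$.

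Next I would handle the three Cauchy-Schwarz-type inequalities together. Identifying $X$ with $\vect(X)$ makes $\|X\|_F$ the Euclidean norm on $\R^{nd}$ and makes $\langle X, Y\rangle = \tr[X^\top Y]$ the standard inner product there, so the Frobenius triangle inequality $\|X+Y\|_F \le \|X\|_F + \|Y\|_F$ and $|\langle X, Y\rangle| \le \|X\|_F \|Y\|_F$ descend directly from their Euclidean counterparts. For the Hadamard bound, I would combine the entrywise inequality $X_{i,j}^2 \le \|X\|_F^2$ with summation:
\begin{align*}
\|X \circ Y\|_F^2 = \sum_{i,j} X_{i,j}^2 Y_{i,j}^2 \le \|X\|_F^2 \sum_{i,j} Y_{i,j}^2 = \|X\|_F^2 \|Y\|_F^2.
\end{align*}
For submultiplicativity $\|YZ\|_F \le \|Y\|_F \|Z\|_F$, I would write $(YZ)_{i,j} = \langle Y_i, Z_{*,j}\rangle$, apply Cauchy-Schwarz to each such entry to get $(YZ)_{i,j}^2 \le \|Y_i\|_2^2 \|Z_{*,j}\|_2^2$, and sum over $(i,j)$, at which point the double sum factors as $(\sum_i \|Y_i\|_2^2)(\sum_j \|Z_{*,j}\|_2^2) = \|Y\|_F^2 \|Z\|_F^2$.

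Finally, the sandwich $\|X\| \le \|X\|_F \le \sqrt{k}\|X\|$ with $k = \rank(X)$ I would read off the SVD: if the nonzero singular values of $X$ are $\sigma_1 \ge \cdots \ge \sigma_k > 0$, then $\|X\| = \sigma_1$ and $\|X\|_F^2 = \sum_i \sigma_i^2$, so the lower bound uses $\sigma_1^2 \le \sum_i \sigma_i^2$ and the upper bound uses $\sum_i \sigma_i^2 \le k\sigma_1^2$. There is no genuine obstacle; the step most prone to slip-ups is the per-entry application of Cauchy-Schwarz in the Hadamard and matrix-product bounds, where one must be careful about whether the sum runs over the common contraction index or over the outer indices before factoring the double sum.
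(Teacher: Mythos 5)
Your proof is correct in all nine parts; the arguments (vectorization for the triangle and Cauchy--Schwarz inequalities, entrywise bounding for the Hadamard product, per-entry Cauchy--Schwarz with factoring of the double sum for submultiplicativity, and the SVD for the spectral--Frobenius sandwich) are the standard ones and are executed without gaps. Note that the paper states this Fact without proof, treating these as known norm identities, so there is no in-paper argument to compare against; your write-up simply supplies the routine verifications the authors omitted.
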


\begin{fact}
\label{fac:F_norm_A_plus_B}
For matrices $A, B \in \R^{m \times n}$, we have 
\begin{align*}
    \| A + B\|_F^2 = \| A \|_F^2 + \| B \|_F^2 + 2\langle A, B \rangle.
\end{align*}
\end{fact}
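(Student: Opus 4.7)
The plan is to prove the identity by direct expansion, using either the entry-wise definition of the Frobenius norm or its trace formulation (both are already recorded in Fact~\ref{fac:algebra}). Since the statement is a purely algebraic identity with no hypotheses beyond $A, B \in \mathbb{R}^{m \times n}$, no estimates or inequalities are needed; the work is just bookkeeping.

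First, I would recall that $\|X\|_F^2 = \operatorname{tr}[X X^\top]$ and that the standard inner product on matrices is $\langle X, Y\rangle = \operatorname{tr}[X^\top Y]$, both listed among the basic facts available. Then I would write
\begin{align*}
\|A+B\|_F^2 = \operatorname{tr}[(A+B)(A+B)^\top] = \operatorname{tr}[AA^\top] + \operatorname{tr}[AB^\top] + \operatorname{tr}[BA^\top] + \operatorname{tr}[BB^\top],
\end{align*}
expanding by bilinearity of matrix multiplication and linearity of trace. The first and last summands are $\|A\|_F^2$ and $\|B\|_F^2$ respectively. For the cross terms, I would invoke the cyclic property $\operatorname{tr}[XY^\top] = \operatorname{tr}[Y^\top X]$ (Fact~\ref{fac:algebra}) to conclude $\operatorname{tr}[AB^\top] = \operatorname{tr}[B^\top A] = \langle B, A\rangle = \langle A, B\rangle$ and similarly $\operatorname{tr}[BA^\top] = \langle A, B\rangle$, which yields the $2\langle A, B\rangle$ term.

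Alternatively, and perhaps more transparently, I could argue entry-wise: since $\|X\|_F^2 = \sum_{i,j} X_{i,j}^2$ and $\langle A, B\rangle = \sum_{i,j} A_{i,j} B_{i,j}$, the identity reduces to the scalar expansion $(A_{i,j} + B_{i,j})^2 = A_{i,j}^2 + B_{i,j}^2 + 2 A_{i,j} B_{i,j}$ summed over all entries $(i,j) \in [m] \times [n]$. There is no real obstacle here; the only thing to be careful about is the symmetry of the matrix inner product (i.e.\ that both cross terms collapse to the same $\langle A, B\rangle$), which is immediate from the trace cyclic identity or from the entry-wise formula. Either route produces the claim in a few lines.
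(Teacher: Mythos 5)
Your primary argument, expanding $\|A+B\|_F^2$ via the trace and collapsing the two cross terms into $2\langle A,B\rangle$ using the trace identity, is exactly the route the paper takes, and it is correct. The entry-wise alternative you sketch is also valid but adds nothing beyond the paper's proof.
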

\begin{proof}
    We can show 
\begin{align*}
    \| A + B \|_F^2 = & ~ \tr[(A + B)^\top (A + B)] \\
    = & ~ \tr[A^\top A + A^\top B + B^\top A + B^\top B]\\
    = & ~ \tr[A^\top A] + \tr[B^\top B] + 2\tr[A^\top B]\\
    = & ~ \| A \|_F^2 + \| B \|_F^2 + 2\tr[A^\top B]\\
    = & ~ \| A \|_F^2 + \| B \|_F^2 + 2\langle A, B \rangle
\end{align*}
where the first step follows from $\tr[A^\top A] = \|A\|_F^2$ for matrix $A \in \R^{m \times n}$, the second step follows from the basic algebra, the third follows from $\tr[X^\top Y] = \tr[X Y^\top]$ for matrices $X, Y \in \R^{m \times n}$, the fourth step follows from $\tr[A^\top A] = \|A\|_F^2$ for matrix $A \in \R^{m \times n}$, and the last step follows from definition of inner product of matrices.
\end{proof}

\begin{lemma}\label{lem:mask_hadamard}
    Let $M \in \R^{n \times n}$. Let $X \in \R^{n \times n}$ be independent of $M$. We have
    \begin{align*}
        \frac{\d (M \circ X)}{\d M_{i,j}}=  X_{i,j} e_i e_j^\top
    \end{align*}
\end{lemma}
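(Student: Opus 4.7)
The plan is to apply the product rule for the Hadamard product from Fact~\ref{fac:calculus} (Part 3), which gives
\begin{align*}
    \frac{\d (M \circ X)}{\d M_{i,j}} = \frac{\d M}{\d M_{i,j}} \circ X + M \circ \frac{\d X}{\d M_{i,j}}.
\end{align*}
Since $X$ is independent of $M$ by assumption, the second term vanishes, so only the first term remains.

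Next, I would invoke the identity $\frac{\d M}{\d M_{i,j}} = e_i e_j^\top$ (also from Fact~\ref{fac:calculus}, Part 3) to rewrite the surviving term as $e_i e_j^\top \circ X$. Finally, I would apply the Hadamard-product algebra fact $X \circ e_i e_j^\top = X_{i,j} e_i e_j^\top$ from Fact~\ref{fac:algebra} to conclude
\begin{align*}
    \frac{\d (M \circ X)}{\d M_{i,j}} = X_{i,j} e_i e_j^\top,
\end{align*}
which is exactly the claimed identity.

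There is essentially no obstacle here; the statement is a direct consequence of three bookkeeping facts already established earlier in the appendix. The only care needed is to remember that the derivative of the standalone matrix $M$ with respect to a single coordinate $M_{i,j}$ is the rank-one indicator $e_i e_j^\top$, and that Hadamard-multiplying any matrix by this indicator extracts the $(i,j)$ entry as a scalar factor. If desired, the same conclusion could be verified entrywise by noting $(M \circ X)_{k,\ell} = M_{k,\ell} X_{k,\ell}$ and differentiating directly, which would agree with the above derivation.
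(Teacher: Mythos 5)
Your proof is correct and follows essentially the same route as the paper's: product rule for the Hadamard product, dropping the term with $\d X/\d M_{i,j}$ since $X$ is independent of $M$, substituting $\d M/\d M_{i,j} = e_i e_j^\top$, and applying $X \circ e_i e_j^\top = X_{i,j} e_i e_j^\top$ from Fact~\ref{fac:algebra}. No issues.
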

\begin{proof}
We can show
\begin{align*}
    \frac{\d (M \circ X)}{\d M_{i,j}} 
    = & ~ M \circ \frac{\d X}{\d M_{i,j}} + X \circ \frac{\d M}{\d M_{i,j}}\\
    = & ~ X \circ \frac{\d M}{\d M_{i,j}} \\
    = & ~ X \circ (e_i e_j^\top)\\
    = & ~ X_{i,j} e_i e_j^\top
\end{align*}
where the first step, the second step, and the third step follow from Fact~\ref{fac:calculus},
the fourth step follows from Fact~\ref{fac:algebra}.

\end{proof}

\section{Gradient Calculation}\label{sec:app:grad}

\subsection{Definitions}
In this section, we introduce some definitions we used to compute $\frac{\d \mathcal{L}(M)}{\d M}$. First, we introduce the exponential function.

\begin{definition}[Exponential function $u$, $\wt{u}$]\label{def:exp_func}
    If the following conditions hold
    \begin{itemize}
        \item Let $X \in \R^{n \times d}$.
        \item Let $W \in \R^{d \times d}$.
        \item Let $M \in [0,1]^{d \times d}$.
        \item Let $i_0 \in [n]$.
    \end{itemize}
    We define $u \in \R^{n \times n}$ as follows 
    \begin{align*}
        u := \exp(XWX^\top).
    \end{align*}
    We define $\wt{u}(M) \in \R^{n \times n}$ as follows
    \begin{align*}
        \wt{u}(M) :=  \exp(X(M \circ W)X^\top).
    \end{align*}
    We define $i_0$-th row of $\wt{u}(M)$ as follows
    \begin{align*}
        \wt{u}(M)_{i_0} := \exp(X(M \circ W)X^\top)_{i_0}.
    \end{align*} 
\end{definition}

Then, we introduce the sum function.
\begin{definition}[Sum function of softmax $\alpha$, $\wt{\alpha}$]\label{def:sum_func}
    If the following conditions hold
    \begin{itemize}
        \item Let $M \in [0,1]^{d \times d}$.
        \item Let $M_c \in \{0,1\}^{n\times n}$ be the causal attention mask defined in Definition~\ref{def:mask}.
        \item Let $u$, $\wt{u}(M)$ be defined as Definition~\ref{def:exp_func}.
        \item Let $i_0 \in [n]$.
    \end{itemize}
    We define $\alpha \in \R^n$ as follows
    \begin{align*}
        \alpha :=  (u \circ M_c) \cdot {\bf 1}_n.
    \end{align*}
    We define $\wt{\alpha}(M) \in \R^n$ as follows
    \begin{align*}
        \wt{\alpha}(M) :=  (\wt{u}(M) \circ M_c) \cdot {\bf 1}_n.
    \end{align*}
    We define $i_0$-th entry of $\wt{\alpha}(M)$ as follows 
    \begin{align*}
        \wt{\alpha}(M)_{i_0} := \langle(\wt{u}(M) \circ M_c)_{i_0}, {\bf 1}_n \rangle
    \end{align*}
\end{definition}

Then, we introduce the Softmax probability function.
\begin{definition}[Softmax probability function $f$, $\wt{f}$]\label{def:soft_prob_func}
    If the following conditions hold
    \begin{itemize}
        \item Let $M \in [0,1]^{d \times d}$.
        \item Let $M_c \in \{0,1\}^{n\times n}$ be the causal attention mask defined in Definition~\ref{def:mask}.
        \item Let $u$, $\wt{u}(M)$ be defined as Definition~\ref{def:exp_func}.
        \item Let $\alpha$, $\wt{\alpha}(M)$ be defined as Definition~\ref{def:sum_func}.
        \item Let $i_0,j_0 \in [n]$.
    \end{itemize}
    We define $f \in \R^{n \times n}$ for each $j \in [n]$ as follows
    \begin{align*}
        f := \diag(\alpha)^{-1}(u \circ M_c).
    \end{align*}
    We define $\wt{f}(M) \in \R^{n \times n}$ for each $j \in [n]$ as follows
    \begin{align*}
        \wt{f}(M) := \diag(\wt{\alpha}(M))^{-1}(\wt{u}(M) \circ M_c).
    \end{align*}
    We define $i_0$-th row of $\wt{f}(M)$ as follows
    \begin{align*}
        \wt{f}(M)_{i_0} := \wt{\alpha}(M)^{-1}_{i_0} (\wt{u}(M) \circ M_c)_{i_0}.
    \end{align*}
    We define the entry in $i_0$-th row, $j_0$-th column of $\wt{f}(M)$ as follows
    \begin{align*}
        \wt{f}(M)_{i_0,j_0} := \wt{\alpha}(M)^{-1}_{i_0} (\wt{u}(M) \circ M_c)_{i_0,j_0}.
    \end{align*}
\end{definition}

Then, we introduce the one-unit loss function.
\begin{definition}[One unit loss function $c$]\label{def:one_unit_loss}
    If the following conditions hold
    \begin{itemize}
        \item Let $f$, $\wt{f}$ be defined in Definition~\ref{def:soft_prob_func}.
        \item Let $M \in [0,1]^{d \times d}$.
        \item Let $i_0,j_0 \in [n]$.
    \end{itemize}
    We define $c(M) \in \R^{n \times n}$ as follows
    \begin{align*}
        c(M) := \wt{f}(M) - f
    \end{align*}
    We define $i_0$-th row of $c(M)$ as follows
    \begin{align*}
        c(M)_{i_0} := \wt{f}(M)_{i_0} - f_{i_0}
    \end{align*}
    We define $j_0$-th column of $c(M)$ as follows
    \begin{align*}
        c(M)_{*,j_0} := \wt{f}(M)_{*,j_0} - f_{*,j_0}
    \end{align*}
    We define the entry in $i_0$-th row, $j_0$-th column of $c(M)$ as follows
    \begin{align*}
        c(M)_{i_0,j_0} := \wt{f}(M)_{i_0,j_0} - f_{i_0,j_0}
    \end{align*}   
\end{definition}

Then, we introduce the reconstruction error.
\begin{definition}[Reconstruction Error ${\cal L}_\attn$]\label{def:rec_err}
    If the following conditions hold
    \begin{itemize}
        \item Let $M \in [0,1]^{d \times d}$.
        \item Let $c(M)$ be defined in Definition~\ref{def:one_unit_loss}.
    \end{itemize}
    We define ${\cal L}_\attn(M) \in \R$ as follows
    \begin{align*}
        {\cal L}_\attn(M) := \frac{1}{2} \| c(M) \|_F^2 = \frac{1}{2} \sum_{i_0=1}^n \sum_{j_0=1}^n c(M)_{i_0,j_0}^2.
    \end{align*}
\end{definition}

Then, we introduce the regularization term.
\begin{definition}[Regularization Term ${\cal L}_\reg$]\label{def:reg_term}
    If the following conditions hold
    \begin{itemize}
        \item $M \in [0,1]^{d \times d}.$
    \end{itemize}
    We define ${\cal L}_\reg(M) \in \R$ as follows
    \begin{align*}
        {\cal L}_\reg(M) := \frac{1}{2} \lambda \|M\|_F^2.
    \end{align*}
\end{definition}

Finally, we introduce the overall loss function.
\begin{definition}[Overall loss function ${\cal L}$]\label{def:loss_func}
    If the following conditions hold
    \begin{itemize}
        \item Let $M \in [0,1]^{d \times d}$.
        \item Let ${\cal L}_\attn(M)$ be defined in Definition~\ref{def:rec_err}.
        \item Let ${\cal L}_\reg(M)$ be defined in Definition~\ref{def:reg_term}.
        \item Let $\lambda \in \R_+$ be the regularization parameter.
    \end{itemize}
    We define ${\cal L}(M)$ as follows
    \begin{align*}
       {\cal L}(M) :=  {\cal L}_\attn(M) +  {\cal L}_\reg(M)
    \end{align*}
\end{definition}

\subsection{Gradient for Each Row of \texorpdfstring{$X (M \circ W) X^\top$}{}}

We introduce the Lemma of gradient for each row of $X (M \circ W) X^\top$.
\begin{lemma}\label{lem:grad_one_row_mat}
    Let $i_1 \in [d]$, $j_1 \in [d]$, $i_0 \in [n]$, we have
    \begin{align*}
        \underbrace{\frac{\d (X (M \circ W) X^\top)_{i_0}}{\d M_{i_1,j_1}}}_{n \times 1} = \underbrace{W_{i_1,j_1}}_\mathrm{scalar} \underbrace{X_{i_0,i_1}}_\mathrm{scalar} \underbrace{X_{*,j_1}}_{n \times 1} 
    \end{align*}
\end{lemma}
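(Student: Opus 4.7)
The plan is to unfold the $i_0$-th row as a product of matrices, differentiate using the Hadamard rule already proved in Lemma~\ref{lem:mask_hadamard}, and then simplify the standard basis vectors via Fact~\ref{fac:indexing}.

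First I would rewrite the $i_0$-th row of the matrix $X(M \circ W)X^\top$ in a convenient form for differentiation. Using Fact~\ref{fac:indexing}, the $i_0$-th row (viewed as a column vector in $\R^n$) satisfies
\begin{align*}
(X(M\circ W)X^\top)_{i_0} \;=\; X\,(M \circ W)^\top X_{i_0},
\end{align*}
where $X_{i_0} \in \R^d$ denotes the $i_0$-th row of $X$ written as a column vector. Only the factor $(M \circ W)^\top$ depends on $M$, so the derivative passes through the matrices $X$ and $X_{i_0}$ cleanly (Fact~\ref{fac:calculus}).

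Next I would plug in the Hadamard derivative from Lemma~\ref{lem:mask_hadamard}. That lemma gives
\begin{align*}
\frac{\d (M \circ W)}{\d M_{i_1,j_1}} \;=\; W_{i_1,j_1}\, e_{i_1} e_{j_1}^\top,
\end{align*}
so taking transposes yields $\frac{\d (M \circ W)^\top}{\d M_{i_1,j_1}} = W_{i_1,j_1}\, e_{j_1} e_{i_1}^\top$. Substituting back gives
\begin{align*}
\frac{\d (X(M\circ W)X^\top)_{i_0}}{\d M_{i_1,j_1}} \;=\; W_{i_1,j_1}\, X\, e_{j_1}\, e_{i_1}^\top X_{i_0}.
\end{align*}

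Finally I would simplify using the identities $X e_{j_1} = X_{*,j_1}$ and $e_{i_1}^\top X_{i_0} = (X_{i_0})_{i_1} = X_{i_0,i_1}$ (the latter is a scalar, so it can be pulled out). This produces exactly $W_{i_1,j_1} X_{i_0,i_1} X_{*,j_1}$, matching the claim. There is no real obstacle here, as the result is essentially a careful bookkeeping exercise; the only place to be a bit careful is the convention that $X_{i_0}$ denotes a row reinterpreted as a column vector, so the index $e_{i_1}^\top X_{i_0}$ correctly picks out the $(i_0,i_1)$ entry of $X$.
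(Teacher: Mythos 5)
Your proposal is correct and follows essentially the same route as the paper: peel off the $i_0$-th row via Fact~\ref{fac:indexing}, differentiate $(M\circ W)^\top$ using Lemma~\ref{lem:mask_hadamard} to get $W_{i_1,j_1}\, e_{j_1} e_{i_1}^\top$, and simplify $X e_{j_1}$ and $e_{i_1}^\top X_{i_0}$. The only cosmetic difference is that the paper differentiates $M^\top \circ W^\top$ with respect to $(M^\top)_{j_1,i_1}$ rather than transposing the derivative of $M \circ W$ as you do; the two are trivially equivalent.
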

\begin{proof}
    We can simplify the derivative expression
    \begin{align}\label{eq:mat_total}
        \frac{\d (X (M \circ W) X^\top)_{i_0}}{\d M_{i_1,j_1}} \notag
        = & ~ \frac{\d X (X (M \circ W))_{i_0}}{\d M_{i_1,j_1}} \notag \\
        = & ~ \frac{\d X (M \circ W)^\top X_{i_0}}{\d M_{i_1,j_1}} \notag \\
        = & ~ X \frac{\d (M \circ W)^\top}{\d M_{i_1,j_1}}  X_{i_0} 
    \end{align}
    where the first and second step follows from Fact~\ref{fac:indexing},
    the third step follows from Fact~\ref{fac:calculus}.

    We further compute Eq.~\eqref{eq:mat_total}:
    \begin{align}\label{eq:mat_mask}
        \frac{\d (M \circ W)^\top}{\d M_{i_1,j_1}} \notag
        = & ~ \frac{\d M^\top \circ W^\top}{\d M_{i_1,j_1}} \notag \\
        = & ~ \frac{\d M^\top \circ W^\top}{\d (M^\top)_{j_1,i_1}} \notag \\
        = & ~ (W^\top)_{j_1,i_1} e_{j_1} e_{i_1}^\top \notag \\
        = & ~ W_{i_1,j_1} e_{j_1} e_{i_1}^\top 
    \end{align}
    where the first follows from Fact~\ref{fac:algebra},
    the second step follows from for any matrix $X$, $X_{i,j} = (X^\top)_{j,i}$,
    the third step follows from Fact~\ref{lem:mask_hadamard},
    and the fourth step follows from for any matrix $X$, $X_{i,j} = (X^\top)_{j,i}$.

    Finally, we have
    \begin{align*}
        \frac{\d (X (M \circ W) X^\top)_{i_0}}{\d M_{i_1,j_1}} 
        = & ~ X W_{i_1,j_1} e_{j_1} e_{i_1}^\top X_{i_0}\\
        = & ~ W_{i_1,j_1} (X e_{j_1}) (e_{i_1}^\top X_{i_0})\\
        = & ~ W_{i_1,j_1} X_{*,j_1} X_{i_0,i_1}
    \end{align*}
    where the first step follows from Eq.~\eqref{eq:mat_total} and Eq.~\eqref{eq:mat_mask},
    and the second step and the third step follow from basic algebra.
\end{proof}

We introduce the Lemma of the gradient for each row of $\wt{u} (M)$.
\subsection{Gradient for Each Row of \texorpdfstring{$\wt{u} (M)$}{}}
\begin{lemma}\label{lem:grad_exp_func}
    If the following conditions hold:
    \begin{itemize}
        \item Let $\wt{u}(M)$ be defined in Definition~\ref{def:exp_func}.
    \end{itemize}
    Let $i_1 \in [d]$, $j_1 \in [d]$, $i_0 \in [n]$, we have
    \begin{align*}
        \underbrace{\frac{\d \wt{u} (M)_{i_0}}{\d M_{i_1,j_1}}}_{n \times 1} = \underbrace{\wt{u} (M)_{i_0}}_{n \times 1} \circ (\underbrace{W_{i_1,j_1}}_\mathrm{scalar} \underbrace{X_{i_0,i_1}}_\mathrm{scalar} \underbrace{X_{*,j_1}}_{n \times 1}) 
    \end{align*}
\end{lemma}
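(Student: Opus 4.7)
\textbf{Proof proposal for Lemma~\ref{lem:grad_exp_func}.} The plan is a direct application of the chain rule for the exponential, combined with the already-established formula for the inner argument from Lemma~\ref{lem:grad_one_row_mat}. Recall from Definition~\ref{def:exp_func} that $\wt{u}(M)_{i_0} = \exp((X(M \circ W) X^\top)_{i_0}) \in \R^n$, where $\exp$ is applied entrywise.

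First, I would invoke the vector/matrix calculus rule $\frac{\d \exp(f(t))}{\d t} = \exp(f(t)) \circ \frac{\d f(t)}{\d t}$ (Fact~\ref{fac:calculus}, Part 3) with $f(M) := (X(M \circ W) X^\top)_{i_0} \in \R^n$ and scalar variable $t = M_{i_1,j_1}$. This immediately yields
\begin{align*}
    \frac{\d \wt{u}(M)_{i_0}}{\d M_{i_1,j_1}}
    = \exp\bigl((X(M \circ W) X^\top)_{i_0}\bigr) \circ \frac{\d (X(M \circ W) X^\top)_{i_0}}{\d M_{i_1,j_1}}
    = \wt{u}(M)_{i_0} \circ \frac{\d (X(M \circ W) X^\top)_{i_0}}{\d M_{i_1,j_1}},
\end{align*}
where the last equality uses the definition of $\wt{u}(M)_{i_0}$.

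Second, I would substitute the formula already proved in Lemma~\ref{lem:grad_one_row_mat}, namely
\begin{align*}
    \frac{\d (X(M \circ W) X^\top)_{i_0}}{\d M_{i_1,j_1}} = W_{i_1,j_1}\, X_{i_0,i_1}\, X_{*,j_1},
\end{align*}
to obtain the claimed identity
\begin{align*}
    \frac{\d \wt{u}(M)_{i_0}}{\d M_{i_1,j_1}} = \wt{u}(M)_{i_0} \circ \bigl( W_{i_1,j_1}\, X_{i_0,i_1}\, X_{*,j_1} \bigr).
\end{align*}

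There is essentially no obstacle here: the computation is mechanical once the entrywise exponential chain rule is applied and Lemma~\ref{lem:grad_one_row_mat} is invoked. The only minor bookkeeping point to double-check is that $\exp$ in Definition~\ref{def:exp_func} is indeed entrywise (so the chain rule produces a Hadamard product rather than a matrix exponential derivative), and that the dimensions align: $\wt{u}(M)_{i_0} \in \R^n$ and $W_{i_1,j_1} X_{i_0,i_1} X_{*,j_1} \in \R^n$, so their Hadamard product is a well-defined vector in $\R^n$, matching the left-hand side.
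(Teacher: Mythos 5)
Your proposal is correct and matches the paper's own proof essentially step for step: both apply the entrywise exponential chain rule from Fact~\ref{fac:calculus} to reduce the derivative to $\wt{u}(M)_{i_0} \circ \frac{\d (X(M\circ W)X^\top)_{i_0}}{\d M_{i_1,j_1}}$ and then substitute Lemma~\ref{lem:grad_one_row_mat}. Nothing is missing.
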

\begin{proof}
    We have
    \begin{align*}
        \underbrace{\frac{\d \wt{u} (M)_{i_0}}{\d M_{i_1,j_1}}}_{n \times 1}
        = & ~ \underbrace{\frac{\d \exp(X (M \circ W) X^\top)_{i_0}}{\d M_{i_1,j_1}}}_{n \times 1}\\
        = & ~ \exp(\underbrace{X}_{n \times d} \underbrace{(M \circ W)}_{d \times d} \underbrace{X^\top}_{d \times 1})_{i_0} \circ \underbrace{\frac{\d (X (M \circ W) X^\top)_{i_0}}{\d M_{i_1,j_1}}}_{n \times 1}\\
        = & ~ \underbrace{\wt{u} (M)_{i_0}}_{n \times 1} \circ \underbrace{\frac{\d (X (M \circ W) X^\top)_{i_0}}{\d M_{i_1,j_1}}}_{n \times 1}\\
        = & ~ \underbrace{\wt{u} (M)_{i_0}}_{n \times 1} \circ (\underbrace{W_{i_1,j_1}}_\mathrm{scalar} \underbrace{X_{i_0,i_1}}_\mathrm{scalar} \underbrace{X_{*,j_1}}_{n \times 1}) 
    \end{align*}
    where the first step follows from Definition~\ref{def:exp_func}, 
    the second step follows from Fact~\ref{fac:calculus},
    the third step follows from Definition~\ref{def:exp_func},
    and the fourth step follows from Lemma~\ref{lem:grad_one_row_mat}.
\end{proof}

\subsection{Gradient for Each Entry of \texorpdfstring{$\wt{\alpha}(M)$}{}}
We introduce the Lemma of gradient for each entry of $\wt{\alpha} (M)$.
\begin{lemma}\label{lem:grad_sum}
    If the following conditions hold:
    \begin{itemize}
        \item Let $\wt{u}(M)$ be defined in Definition~\ref{def:exp_func}.
        \item Let $\wt{\alpha}(M)$ be defined in Definition~\ref{def:sum_func}.
    \end{itemize}
    Let $i_1 \in [d]$, $j_1 \in [d]$, $i_0 \in [n]$, we have
    \begin{align*}
        \underbrace{\frac{\d \wt{\alpha} (M)_{i_0}}{\d M_{i_1,j_1}}}_\mathrm{scalar} = \langle \wt{u} (M)_{i_0} \circ (M_c)_{i_0}, W_{i_1,j_1} X_{i_0,i_1} X_{*,j_1} \rangle
    \end{align*}
\end{lemma}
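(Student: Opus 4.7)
The plan is to reduce the statement to a direct application of Lemma~\ref{lem:grad_exp_func} by unfolding the definition of $\wt{\alpha}(M)_{i_0}$ and swapping Hadamard factors inside an inner product. More concretely, I would start from the definition in Definition~\ref{def:sum_func}, namely $\wt{\alpha}(M)_{i_0} = \langle (\wt{u}(M) \circ M_c)_{i_0}, {\bf 1}_n\rangle = \langle \wt{u}(M)_{i_0} \circ (M_c)_{i_0}, {\bf 1}_n \rangle$, and then differentiate in $M_{i_1,j_1}$.

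The first step is to push the derivative past the inner product and the Hadamard product with $(M_c)_{i_0}$, both of which are legitimate because $M_c$ and ${\bf 1}_n$ are independent of $M$; this uses the product rule for inner products and for Hadamard products in Fact~\ref{fac:calculus}. The outcome is
\begin{align*}
\frac{\d \wt{\alpha}(M)_{i_0}}{\d M_{i_1,j_1}} = \Big\langle \frac{\d \wt{u}(M)_{i_0}}{\d M_{i_1,j_1}} \circ (M_c)_{i_0},\ {\bf 1}_n \Big\rangle.
\end{align*}

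The second step is to substitute the closed form from Lemma~\ref{lem:grad_exp_func}, which gives $\frac{\d \wt{u}(M)_{i_0}}{\d M_{i_1,j_1}} = \wt{u}(M)_{i_0} \circ (W_{i_1,j_1} X_{i_0,i_1} X_{*,j_1})$. Plugging this in, the expression inside the inner product becomes $\wt{u}(M)_{i_0} \circ (M_c)_{i_0} \circ (W_{i_1,j_1} X_{i_0,i_1} X_{*,j_1})$ after using commutativity of the Hadamard product. Finally, the identity $\langle a \circ b, {\bf 1}_n \rangle = \langle a, b \rangle$ from Fact~\ref{fac:algebra} collapses the triple-Hadamard expression into $\langle \wt{u}(M)_{i_0} \circ (M_c)_{i_0},\ W_{i_1,j_1} X_{i_0,i_1} X_{*,j_1}\rangle$, matching the claim.

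There is no substantive obstacle here; the whole argument is a chain rule followed by Hadamard/inner-product rearrangements, so the only thing to be careful about is the bookkeeping of dimensions (making sure $(M_c)_{i_0}$ and $X_{*,j_1}$ both sit as length-$n$ vectors and are combined entrywise with $\wt{u}(M)_{i_0}$) and keeping the scalar factors $W_{i_1,j_1}$ and $X_{i_0,i_1}$ outside, which is already how Lemma~\ref{lem:grad_exp_func} presents them.
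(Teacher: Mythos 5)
Your proposal is correct and follows essentially the same route as the paper's proof: unfold $\wt{\alpha}(M)_{i_0}$ via Definition~\ref{def:sum_func}, apply the inner-product and Hadamard product rules from Fact~\ref{fac:calculus} (using that $M_c$ and ${\bf 1}_n$ do not depend on $M$), substitute Lemma~\ref{lem:grad_exp_func}, and collapse via $\langle a \circ b, {\bf 1}_n\rangle = \langle a, b\rangle$ from Fact~\ref{fac:algebra}. No gaps.
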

\begin{proof}

We have
\begin{align*}
    \underbrace{\frac{\d \wt{\alpha} (M)_{i_0}}{\d M_{i_1,j_1}}}_\mathrm{scalar}
    = & ~ \underbrace{\frac{\d \langle(\wt{u}(M) \circ M_c)_{i_0}, {\bf 1}_n \rangle }{\d M_{i_1,j_1}}}_\mathrm{scalar}\\
    = & ~ \langle \frac{\d (\wt{u}(M) \circ M_c)_{i_0}}{\d M_{i_1,j_1}}, {\bf 1}_n \rangle \\
    = & ~ \langle \frac{\d \wt{u}(M)_{i_0}}{\d M_{i_1,j_1}} \circ (M_c)_{i_0}, {\bf 1}_n \rangle \\
    = & ~ \langle \wt{u} (M)_{i_0} \circ (W_{i_1,j_1} X_{i_0,i_1} X_{*,j_1}) \circ (M_c)_{i_0}, {\bf 1}_n \rangle \\
    = & ~ \langle \wt{u} (M)_{i_0} \circ (M_c)_{i_0}, W_{i_1,j_1} X_{i_0,i_1} X_{*,j_1} \rangle
\end{align*}
where the first step follows from Definition~\ref{def:sum_func},
the second step follows from the product rule of inner product in Fact~\ref{fac:calculus},
the third step follows from the product rule of Hadamard product in Fact~\ref{fac:calculus},
the fourth step follows from Lemma~\ref{lem:grad_exp_func},
and the last step follows from Fact~\ref{fac:algebra}.
\end{proof}

\subsection{Gradient for Each Entry of \texorpdfstring{$\wt{f}(M)$}{}}
We introduce the Lemma of the gradient for each entry of $\wt{f} (M)$.
\begin{lemma}\label{lem:grad_soft_prob}
    If the following conditions hold:
    \begin{itemize}
        \item Let $\wt{u}(M)$ be defined in Definition~\ref{def:exp_func}.
        \item Let $\wt{\alpha}(M)$ be defined in Definition~\ref{def:sum_func}.
        \item Let $\wt{f}(M)$ be defined in Definition~\ref{def:soft_prob_func}.
    \end{itemize}
    Let $i_1 \in [d]$, $j_1 \in [d]$, $i_0 \in [n]$, $j_0 \in [n]$, we have
    \begin{align*}
        \frac{\d \wt{f}(M)_{i_0,j_0}}{\d M_{i_1,j_1}}
        =\wt{f}(M)_{i_0,j_0} W_{i_1,j_1} X_{i_0,i_1}  X_{j_0,j_1}  - \wt{f}(M)_{i_0,j_0} \langle \wt{f}(M)_{i_0}, W_{i_1,j_1} X_{i_0,i_1} X_{*,j_1} \rangle 
    \end{align*}
\end{lemma}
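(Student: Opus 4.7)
The plan is to differentiate $\wt{f}(M)_{i_0,j_0} = \wt{\alpha}(M)^{-1}_{i_0} \cdot (\wt{u}(M) \circ M_c)_{i_0,j_0}$ by the product rule and then combine factors using the defining identities for $\wt{f}$. Since $M_c$ is a constant mask, and $(\wt{u}(M) \circ M_c)_{i_0,j_0} = \wt{u}(M)_{i_0,j_0} \cdot (M_c)_{i_0,j_0}$ at the scalar level, the differentiation reduces to two familiar pieces already handled by the preceding lemmas.

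First, I would apply the product rule to write
\begin{align*}
    \frac{\d \wt{f}(M)_{i_0,j_0}}{\d M_{i_1,j_1}}
    = \frac{\d \wt{\alpha}(M)^{-1}_{i_0}}{\d M_{i_1,j_1}} \cdot (\wt{u}(M) \circ M_c)_{i_0,j_0}
    + \wt{\alpha}(M)^{-1}_{i_0} \cdot \frac{\d (\wt{u}(M) \circ M_c)_{i_0,j_0}}{\d M_{i_1,j_1}}.
\end{align*}
For the first piece, I would use the chain rule $\d \wt{\alpha}^{-1}_{i_0} / \d M_{i_1,j_1} = -\wt{\alpha}(M)^{-2}_{i_0} \cdot \d \wt{\alpha}(M)_{i_0} / \d M_{i_1,j_1}$ and invoke Lemma~\ref{lem:grad_sum} to substitute $\langle \wt{u}(M)_{i_0} \circ (M_c)_{i_0}, W_{i_1,j_1} X_{i_0,i_1} X_{*,j_1} \rangle$ for the derivative of $\wt{\alpha}(M)_{i_0}$. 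For the second piece, since $(M_c)_{i_0,j_0}$ is constant in $M$, I would factor it out and use Lemma~\ref{lem:grad_exp_func} at coordinate $j_0$ to write $\d \wt{u}(M)_{i_0,j_0} / \d M_{i_1,j_1} = \wt{u}(M)_{i_0,j_0} \cdot W_{i_1,j_1} X_{i_0,i_1} X_{j_0,j_1}$.

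The final step is to recognize the Softmax factors. In the second piece, $\wt{\alpha}(M)^{-1}_{i_0} \cdot \wt{u}(M)_{i_0,j_0} (M_c)_{i_0,j_0}$ is exactly $\wt{f}(M)_{i_0,j_0}$ by Definition~\ref{def:soft_prob_func}, yielding the term $\wt{f}(M)_{i_0,j_0} W_{i_1,j_1} X_{i_0,i_1} X_{j_0,j_1}$. In the first piece, I would split the factor $\wt{\alpha}(M)^{-2}_{i_0}$ as $\wt{\alpha}(M)^{-1}_{i_0} \cdot \wt{\alpha}(M)^{-1}_{i_0}$: one copy combines with $(\wt{u}(M) \circ M_c)_{i_0,j_0}$ to give $\wt{f}(M)_{i_0,j_0}$, and the other copy moves inside the inner product and combines with the vector $\wt{u}(M)_{i_0} \circ (M_c)_{i_0}$ to give $\wt{f}(M)_{i_0}$, producing $-\wt{f}(M)_{i_0,j_0} \langle \wt{f}(M)_{i_0}, W_{i_1,j_1} X_{i_0,i_1} X_{*,j_1} \rangle$. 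Summing the two contributions gives the claimed formula.

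I do not expect any serious obstacle here: the result is the standard Softmax-Jacobian identity $\partial_k \mathrm{softmax}_j = \mathrm{softmax}_j (\delta_{jk} - \mathrm{softmax}_k)$ specialized to our parametrization, and all the hard work has already been done in Lemmas~\ref{lem:grad_exp_func} and \ref{lem:grad_sum}. The only bookkeeping care needed is to correctly split the $\wt{\alpha}(M)^{-2}_{i_0}$ factor so that each $\wt{\alpha}^{-1}$ is absorbed into the correct instance of $\wt{f}$ (scalar versus vector), and to verify that the scalar $(M_c)_{i_0,j_0}$ gets absorbed cleanly in the second term while the vector $(M_c)_{i_0}$ is absorbed in the first.
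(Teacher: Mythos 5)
Your proposal is correct and follows essentially the same route as the paper's proof: product rule on $\wt{\alpha}(M)^{-1}_{i_0}(\wt{u}(M)\circ M_c)_{i_0,j_0}$, chain rule plus Lemma~\ref{lem:grad_sum} for the first term, Lemma~\ref{lem:grad_exp_func} for the second, and the same splitting of $\wt{\alpha}(M)^{-2}_{i_0}$ to recover the two instances of $\wt{f}$. No gaps.
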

\begin{proof}
    We have
    \begin{align}\label{eq:grad_prob}
        \frac{\d \wt{f}(M)_{i_0,j_0}}{\d M_{i_1,j_1}}
        = & ~ \frac{\d \wt{\alpha}(M)^{-1}_{i_0} (\wt{u}(M) \circ M_c)_{i_0,j_0} }{\d M_{i_1,j_1}} \notag\\
        = & ~ \frac{\d \wt{\alpha}(M)^{-1}_{i_0} }{\d M_{i_1,j_1}} (\wt{u}(M) \circ M_c)_{i_0,j_0} + \frac{\d (\wt{u}(M) \circ M_c)_{i_0,j_0} }{\d M_{i_1,j_1}} \wt{\alpha}(M)^{-1}_{i_0} \notag\\
    \end{align}
    where the first step follows from Definition~\ref{def:soft_prob_func},
    and the second step follows from Fact~\ref{fac:calculus}.
    
    In the following part, we compute the two terms separately.

    For the first term above, we have
    \begin{align}\label{eq:grad_proB_1(M)}
        & ~ \frac{\d \wt{\alpha}(M)^{-1}_{i_0} }{\d M_{i_1,j_1}} (\wt{u}(M) \circ M_c)_{i_0,j_0} \notag \\
        = & ~ (\wt{u}(M) \circ M_c)_{i_0,j_0} (-1) \wt{\alpha}(M)^{-2}_{i_0} \frac{\d \wt{\alpha}(M)_{i_0} }{\d M_{i_1,j_1}} \notag\\
        = & ~ - (\wt{u}(M) \circ M_c)_{i_0,j_0} \langle \wt{u} (M)_{i_0} \circ (M_c)_{i_0}, W_{i_1,j_1} X_{i_0,i_1} X_{*,j_1} \rangle /\wt{\alpha}(M)^{2}_{i_0} \notag \\
        = & ~ - (\wt{\alpha}(M)_{i_0}^{-1} (M_c)_{i_0,j_0} \wt{u}(M)_{i_0,j_0}) \langle \wt{\alpha}(M)_{i_0}^{-1} \wt{u} (M)_{i_0} \circ (M_c)_{i_0}, W_{i_1,j_1} X_{i_0,i_1} X_{*,j_1} \rangle \notag \\
        = & ~ - \wt{f}(M)_{i_0,j_0} \langle \wt{f}(M)_{i_0}, W_{i_1,j_1} X_{i_0,i_1} X_{*,j_1} \rangle
    \end{align}
    where the first step follows from Fact~\ref{fac:calculus},
    the second step follows from Lemma~\ref{lem:grad_sum},
    the third step follows from basic algebra,
    and the fourth step follows from Definition~\ref{def:soft_prob_func}.

    For the second term above, we have
    \begin{align}\label{eq:grad_proB_2(M)}
        & ~ \frac{\d (\wt{u}(M) \circ M_c)_{i_0,j_0} }{\d M_{i_1,j_1}} \wt{\alpha}(M)^{-1}_{i_0} \notag \\
        = & ~ \frac{\d \wt{u}(M)_{i_0,j_0} (M_c)_{i_0,j_0} }{\d M_{i_1,j_1}} \wt{\alpha}(M)^{-1}_{i_0} \notag\\
        = & ~ (M_c)_{i_0,j_0} (\frac{\d \wt{u}(M)_{i_0}}{\d M_{i_1,j_1}})_{j_0} \wt{\alpha}(M)^{-1}_{i_0} \notag\\
        = & ~ ((M_c)_{i_0,j_0} \wt{u} (M)_{i_0,j_0} \wt{\alpha}(M)_{i_0}^{-1}) W_{i_1,j_1} X_{i_0,i_1}  X_{j_0,j_1} \notag \\
        = & ~ \wt{f}(M)_{i_0,j_0} W_{i_1,j_1} X_{i_0,i_1}  X_{j_0,j_1}
    \end{align}
    where the first step and the second step follow from basic algebra,
    the third step follows from Lemma~\ref{lem:grad_exp_func},
    and the fourth step follows from Definition~\ref{def:soft_prob_func}.

    So, we have
    \begin{align*}
        \frac{\d \wt{f}(M)_{i_0,j_0}}{\d M_{i_1,j_1}}
        = & ~\frac{\d \wt{\alpha}(M)^{-1}_{i_0} }{\d M_{i_1,j_1}} (\wt{u}(M) \circ M_c)_{i_0,j_0} + \frac{\d (\wt{u}(M) \circ M_c)_{i_0,j_0} }{\d M_{i_1,j_1}} \wt{\alpha}(M)^{-1}_{i_0}\\
        = & ~ \wt{f}(M)_{i_0,j_0} W_{i_1,j_1} X_{i_0,i_1}  X_{j_0,j_1}  - \wt{f}(M)_{i_0,j_0} \langle \wt{f}(M)_{i_0}, W_{i_1,j_1} X_{i_0,i_1} X_{*,j_1} \rangle      
    \end{align*}
    where the first step follows from Eq.~\eqref{eq:grad_prob},
    and the second step follows from Eq.~\eqref{eq:grad_proB_1(M)} and Eq.~\eqref{eq:grad_proB_2(M)}.
\end{proof}

\subsection{Gradient for Each Entry of \texorpdfstring{c(M)}{}}
We introduce the Lemma of gradient for each entry of $c (M)$.
\begin{lemma}\label{lem:grad_one_unit}
    If the following conditions hold:
    \begin{itemize}
        \item Let $\wt{f}(M)$, $f$ be defined in Definition~\ref{def:soft_prob_func}.
        \item Let $c(M)$ be defined in Definition~\ref{def:one_unit_loss}.
    \end{itemize}
    Let $i_1 \in [d]$, $j_1 \in [d]$, $i_0 \in [n]$, $j_0 \in [n]$, we have
    \begin{align*}
        \frac{\d c(M)_{i_0,j_0}}{\d M_{i_1,j_1}} 
        = \wt{f}(M)_{i_0,j_0} W_{i_1,j_1} X_{i_0,i_1}  X_{j_0,j_1}  - \wt{f}(M)_{i_0,j_0} \langle \wt{f}(M)_{i_0}, W_{i_1,j_1} X_{i_0,i_1} X_{*,j_1} \rangle
    \end{align*}
\end{lemma}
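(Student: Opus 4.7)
The plan is to observe that $c(M) = \wt{f}(M) - f$ by Definition~\ref{def:one_unit_loss}, where the term $f$ (see Definition~\ref{def:soft_prob_func}) is constructed from $u$, $\alpha$, and $M_c$, none of which depend on the pruning mask $M$. Therefore, when differentiating entrywise with respect to $M_{i_1, j_1}$, the $f_{i_0, j_0}$ term contributes zero, and the derivative reduces to
\begin{align*}
    \frac{\d c(M)_{i_0, j_0}}{\d M_{i_1, j_1}} = \frac{\d \wt{f}(M)_{i_0, j_0}}{\d M_{i_1, j_1}}.
\end{align*}

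Next, I would simply invoke Lemma~\ref{lem:grad_soft_prob}, which already provides the closed-form expression for $\d \wt{f}(M)_{i_0, j_0} / \d M_{i_1, j_1}$. Substituting the right-hand side of that lemma into the display above yields exactly the claimed formula. No further calculation is required beyond citing the relevant definition and previous lemma.

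Since this statement is a direct corollary of Lemma~\ref{lem:grad_soft_prob} combined with the linearity of differentiation and the $M$-independence of $f$, there is no real obstacle; the only care needed is verifying that $f$ (as opposed to $\wt{f}$) indeed has no $M$-dependence, which follows immediately from its definition using $u$ rather than $\wt{u}(M)$. The proof will thus be a two-line derivation.
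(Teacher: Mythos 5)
Your proposal is correct and matches the paper's own proof exactly: the paper also writes $c(M)_{i_0,j_0} = \wt{f}(M)_{i_0,j_0} - f_{i_0,j_0}$, notes that $f$ is independent of $M$ so its derivative vanishes, and then invokes Lemma~\ref{lem:grad_soft_prob}. No gaps.
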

\begin{proof}
    We have
    \begin{align*}
        \frac{\d c(M)_{i_0,j_0}}{\d M_{i_1,j_1}} 
        = & ~ \frac{\d (\wt{f}(M)_{i_0,j_0} - f_{i_0,j_0})}{\d M_{i_1,j_1}}\\
        = & ~ \frac{\d \wt{f}(M)_{i_0,j_0}}{\d M_{i_1,j_1}}\\
        = & ~ \wt{f}(M)_{i_0,j_0} W_{i_1,j_1} X_{i_0,i_1}  X_{j_0,j_1}  - \wt{f}(M)_{i_0,j_0} \langle \wt{f}(M)_{i_0}, W_{i_1,j_1} X_{i_0,i_1} X_{*,j_1} \rangle
    \end{align*}
    where the first step follows from Definition~\ref{def:one_unit_loss},
    the second step follows from Fact~\ref{fac:calculus},
    and the third step follows from Lemma~\ref{lem:grad_soft_prob}.
\end{proof}

\subsection{Gradient for \texorpdfstring{${\cal L}_\attn(M)$}{}}
We introduce the Lemma of the gradient for ${\cal L}_\attn(M)$.
\begin{lemma}\label{lem:grad_rec}
    If the following conditions hold:
    \begin{itemize}
        \item Let $\wt{f}(M)$ be defined in Definition~\ref{def:soft_prob_func}.
        \item Let $c(M)$ be defined in Definition~\ref{def:one_unit_loss}.
        \item Let ${\cal L}_\attn(M)$ be defined in Definition~\ref{def:rec_err}.
    \end{itemize}
    Let $i_1 \in [d]$, $j_1 \in [d]$, $i_0 \in [n]$, $j_0 \in [n]$, we have
    \begin{align*}
        \frac{\d {\cal L}_\attn(M)}{\d M_{i_1,j_1}} = \sum_{i_0 = 1}^n \sum_{j_0 = 1}^n B_1(M) + B_2(M)
    \end{align*}
    where we have definitions:
    \begin{itemize}
        \item $B_1(M) := c(M)_{i_0,j_0} \wt{f}(M)_{i_0,j_0} W_{i_1,j_1} X_{i_0,i_1}  X_{j_0,j_1}$
        \item $B_2(M) := - c(M)_{i_0,j_0} \wt{f}(M)_{i_0,j_0} \langle \wt{f}(M)_{i_0}, W_{i_1,j_1} X_{i_0,i_1} X_{*,j_1} \rangle$
    \end{itemize}
\end{lemma}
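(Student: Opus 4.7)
\textbf{Proof plan for Lemma~\ref{lem:grad_rec}.} The plan is to simply apply the chain rule to the squared Frobenius norm expansion of ${\cal L}_\attn(M)$ and then substitute the per-entry derivative already computed in Lemma~\ref{lem:grad_one_unit}. Since ${\cal L}_\attn(M) = \frac{1}{2} \sum_{i_0=1}^n \sum_{j_0=1}^n c(M)_{i_0,j_0}^2$ by Definition~\ref{def:rec_err}, the linearity of differentiation and the power rule in Fact~\ref{fac:calculus} yield
\begin{align*}
\frac{\d {\cal L}_\attn(M)}{\d M_{i_1,j_1}} = \sum_{i_0=1}^n \sum_{j_0=1}^n c(M)_{i_0,j_0} \cdot \frac{\d c(M)_{i_0,j_0}}{\d M_{i_1,j_1}}.
\end{align*}

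Next, I would plug in Lemma~\ref{lem:grad_one_unit}, which gives the expression
\begin{align*}
\frac{\d c(M)_{i_0,j_0}}{\d M_{i_1,j_1}} = \wt{f}(M)_{i_0,j_0} W_{i_1,j_1} X_{i_0,i_1} X_{j_0,j_1} - \wt{f}(M)_{i_0,j_0} \langle \wt{f}(M)_{i_0}, W_{i_1,j_1} X_{i_0,i_1} X_{*,j_1} \rangle.
\end{align*}
Distributing the factor $c(M)_{i_0,j_0}$ over the two terms and matching with the definitions of $B_1(M)$ and $B_2(M)$ in the statement completes the identification.

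\textbf{Obstacles.} There really is no obstacle here: the lemma is a direct bookkeeping consequence of Lemma~\ref{lem:grad_one_unit} together with the chain rule on a sum of squares. The only thing to be careful about is the sign bookkeeping in $B_2(M)$ (the minus sign is absorbed into its definition) and to make sure the dummy indices $(i_0, j_0)$ in the summation are handled correctly so that the inner product in $B_2$ is over the $n$-dimensional row direction rather than the column direction. Once those are set, the proof is a one-line substitution.
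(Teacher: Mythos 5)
Your proposal is correct and matches the paper's proof essentially verbatim: the paper also expands $\mathcal{L}_{\attn}(M)$ as $\frac{1}{2}\sum_{i_0,j_0} c(M)_{i_0,j_0}^2$, differentiates term by term via the power rule (Fact~\ref{fac:calculus}), and substitutes Lemma~\ref{lem:grad_one_unit}, absorbing the minus sign into $B_2(M)$. Nothing is missing.
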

\begin{proof}
    We have
    \begin{align*}
        \frac{\d {\cal L}_\attn(M)}{\d M_{i_1,j_1}} 
        = & ~ \frac{1}{2} \frac{\d \| c(M) \|_F^2}{\d M_{i_1,j_1}}  \\
        = & ~ \frac{1}{2} \frac{\d \sum_{i_0 = 1}^n \sum_{j_0 = 1}^n (c(M)_{i_0,j_0})^2}{\d M_{i_1,j_1}}\\
        = & ~ \frac{1}{2} \sum_{i_0 = 1}^n \sum_{j_0 = 1}^n \frac{\d (c(M)_{i_0,j_0})^2}{\d M_{i_1,j_1}}\\
        = & ~ \sum_{i_0 = 1}^n \sum_{j_0 = 1}^n c(M)_{i_0,j_0} \frac{\d c(M)_{i_0,j_0}}{\d M_{i_1,j_1}}\\
    \end{align*}
    where the first step follows from Definition~\ref{def:rec_err},
    the second step follows from the definition of Frobenius's norm of the matrix,
    the third step follows from Fact~\ref{fac:calculus},
    and the fourth step follows from Fact~\ref{fac:calculus}.

    Following Lemma~\ref{lem:grad_one_unit}, we have
    \begin{align*}
        & ~ \sum_{i_0 = 1}^n \sum_{j_0 = 1}^n c(M)_{i_0,j_0} \frac{\d c(M)_{i_0,j_0}}{\d M_{i_1,j_1}}\\
        = & ~ \sum_{i_0 = 1}^n \sum_{j_0 = 1}^n c(M)_{i_0,j_0}  (\wt{f}(M)_{i_0,j_0} W_{i_1,j_1} X_{i_0,i_1}  X_{j_0,j_1}  - \wt{f}(M)_{i_0,j_0} \langle \wt{f}(M)_{i_0}, W_{i_1,j_1} X_{i_0,i_1} X_{*,j_1} \rangle )\\
        = & ~ c(M)_{i_0,j_0} \wt{f}(M)_{i_0,j_0} W_{i_1,j_1} X_{i_0,i_1}  X_{j_0,j_1} - c(M)_{i_0,j_0} \wt{f}(M)_{i_0,j_0} \langle \wt{f}(M)_{i_0}, W_{i_1,j_1} X_{i_0,i_1} X_{*,j_1} \rangle \\
        := & ~ \sum_{i_0 = 1}^n \sum_{j_0 = 1}^n B_1(M) + B_2(M)
    \end{align*}
    where the second step follows from basic algebra.
\end{proof}

\subsection{Gradient for \texorpdfstring{${\cal L}_\reg(M)$}{}}
We introduce the Lemma of the gradient for ${\cal L}_\reg(M)$.
\begin{lemma}\label{lem:grad_reg}
    If the following conditions hold:
    \begin{itemize}
        \item Let ${\cal L}_\reg(M)$ be defined in Definition~\ref{def:reg_term}.
    \end{itemize}
    Let $i_1 \in [d]$, $j_1 \in [d]$, we have
    \begin{align*}
        \frac{\d {\cal L}_\reg(M)}{\d M_{i_1,j_1}} = B_3(M)
    \end{align*}
    where we have the definition:
    \begin{itemize}
        \item $B_3(M) := \lambda M_{i_1,j_1}$
    \end{itemize}
\end{lemma}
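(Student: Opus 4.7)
The plan is to directly differentiate the regularization term $\mathcal{L}_\mathrm{reg}(M) = \frac{1}{2}\lambda \|M\|_F^2$ by expanding the Frobenius norm in terms of its entries and applying elementary scalar calculus. This is a purely mechanical calculation with no substantive obstacle, since $M$ enters the regularization term only through its own squared entries and does not interact with $X$, $W$, or the Softmax.

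First, I would expand
\begin{align*}
    \mathcal{L}_\mathrm{reg}(M) = \frac{1}{2}\lambda \|M\|_F^2 = \frac{1}{2}\lambda \sum_{i=1}^d \sum_{j=1}^d M_{i,j}^2,
\end{align*}
using the definition of the Frobenius norm of a matrix from Section~\ref{sec:app:notation}. Then I would differentiate term-by-term with respect to $M_{i_1,j_1}$: since $\frac{\d M_{i,j}^2}{\d M_{i_1,j_1}} = 2 M_{i_1,j_1}$ when $(i,j) = (i_1,j_1)$ and equals $0$ otherwise (a consequence of Fact~\ref{fac:calculus}), exchanging differentiation and the finite sums yields
\begin{align*}
    \frac{\d \mathcal{L}_\mathrm{reg}(M)}{\d M_{i_1,j_1}} = \frac{1}{2}\lambda \cdot 2 M_{i_1,j_1} = \lambda M_{i_1,j_1},
\end{align*}
which is exactly $B_3(M)$.

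Since this step amounts to one line of elementary calculus, the only thing to be careful about is bookkeeping: making sure the index conventions match those used in the rest of Section~\ref{sec:app:grad}, so that this entry-wise gradient can later be assembled alongside the $B_1(M)$ and $B_2(M)$ terms of Lemma~\ref{lem:grad_rec} into the matrix-form gradient stated in Theorem~\ref{thm:mat_view_l_informal}.
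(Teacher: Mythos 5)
Your proposal is correct and matches the paper's own proof essentially verbatim: both expand $\|M\|_F^2$ as the sum of squared entries and differentiate term-by-term to obtain $\lambda M_{i_1,j_1}$. Nothing further is needed.
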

\begin{proof}
    We have
    \begin{align*}
        \frac{\d {\cal L}_\reg(M)}{\d M_{i_1,j_1}}
        = & ~ \frac{1}{2} \lambda \frac{\d \| M \|_F^2}{\d M_{i_1,j_1}}\\
        = & ~ \frac{1}{2} \lambda (\frac{\d }{ \d M_{i_1,j_1}} \sum_{i_0 = 1}^d \sum_{j_0 = 1}^d M_{i_0,j_0}^2)\\
        = & ~ \lambda M_{i_1,j_1}\\
        := & ~ B_3(M)
    \end{align*}
    where the first step follows from Definition~\ref{def:reg_term},
    the second step follows from the definition of Frobenius's norm of the matrix,
    and the third step follows from Fact~\ref{fac:calculus}.
\end{proof}

\subsection{Gradient for \texorpdfstring{${\cal L}(M)$}{}}
We introduce the Lemma of the gradient for ${\cal L}(M)$.
\begin{lemma}\label{lem:grad_loss_func}
    If the following conditions hold:
    \begin{itemize}
        \item Let $\wt{u}(M)$ be defined in Definition~\ref{def:exp_func}.
        \item Let $\wt{\alpha}(M)$ be defined in Definition~\ref{def:sum_func}.
        \item Let $\wt{f}(M)$ be defined in Definition~\ref{def:soft_prob_func}.
        \item Let ${\cal L}_\attn(M)$ be defined in Definition~\ref{def:rec_err}.
        \item Let ${\cal L}_\reg(M)$ be defined in Definition~\ref{def:reg_term}.
        \item Let ${\cal L}(M)$ be defined in Definition~\ref{def:loss_func}.
    \end{itemize}
    Let $i_1 \in [d]$, $j_1 \in [d]$, $i_0 \in [n]$, $j_0 \in [n]$, we have
    \begin{align*}
        \frac{\d {\cal L}(M)}{\d M_{i_1,j_1}} = \sum_{i_0 = 1}^n \sum_{j_0 = 1}^n (B_1(M) + B_2(M)) + B_3(M)
    \end{align*}
    where we have definitions:
    \begin{itemize}
        \item $B_1(M) := c(M)_{i_0,j_0} \wt{f}(M)_{i_0,j_0} W_{i_1,j_1} X_{i_0,i_1}  X_{j_0,j_1}$
        \item $B_2(M) := - c(M)_{i_0,j_0} \wt{f}(M)_{i_0,j_0} \langle \wt{f}(M)_{i_0}, W_{i_1,j_1} X_{i_0,i_1} X_{*,j_1} \rangle$
        \item $B_3(M) := \lambda M_{i_1,j_1}$
    \end{itemize}
\end{lemma}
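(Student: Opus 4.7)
The plan is to invoke the additive decomposition ${\cal L}(M) = {\cal L}_\attn(M) + {\cal L}_\reg(M)$ from Definition~\ref{def:loss_func} and then apply linearity of the derivative so that the gradient splits cleanly into the two pieces that have already been computed upstream. Concretely, by the sum rule in Fact~\ref{fac:calculus} (Part~1, which applies entry-wise to scalar-valued functions of a scalar argument), one has
\begin{align*}
    \frac{\d {\cal L}(M)}{\d M_{i_1,j_1}} = \frac{\d {\cal L}_\attn(M)}{\d M_{i_1,j_1}} + \frac{\d {\cal L}_\reg(M)}{\d M_{i_1,j_1}}.
\end{align*}

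Next, I would simply substitute the two closed forms already derived. Lemma~\ref{lem:grad_rec} gives $\frac{\d {\cal L}_\attn(M)}{\d M_{i_1,j_1}} = \sum_{i_0=1}^n \sum_{j_0=1}^n \bigl(B_1(M) + B_2(M)\bigr)$ with the definitions of $B_1(M)$ and $B_2(M)$ carried over verbatim, and Lemma~\ref{lem:grad_reg} gives $\frac{\d {\cal L}_\reg(M)}{\d M_{i_1,j_1}} = B_3(M) = \lambda M_{i_1,j_1}$. Plugging in yields the claimed identity
\begin{align*}
    \frac{\d {\cal L}(M)}{\d M_{i_1,j_1}} = \sum_{i_0 = 1}^n \sum_{j_0 = 1}^n \bigl(B_1(M) + B_2(M)\bigr) + B_3(M).
\end{align*}

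There is no real obstacle here: this lemma is a bookkeeping combination of Lemma~\ref{lem:grad_rec} and Lemma~\ref{lem:grad_reg} via linearity of differentiation. The only thing worth being careful about is that the summation indices $i_0, j_0 \in [n]$ apply only to the $B_1, B_2$ terms coming from the attention loss (which is a sum over the $n \times n$ entries of $c(M)$), whereas $B_3$ comes from a derivative of $\|M\|_F^2$ taken with respect to a single fixed coordinate $(i_1,j_1) \in [d] \times [d]$ and therefore sits outside the double sum. Once this is respected, the statement follows directly.
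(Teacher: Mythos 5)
Your proposal is correct and matches the paper's proof exactly: both decompose ${\cal L}(M)$ as ${\cal L}_\attn(M) + {\cal L}_\reg(M)$ via Definition~\ref{def:loss_func}, apply linearity of differentiation, and substitute the results of Lemma~\ref{lem:grad_rec} and Lemma~\ref{lem:grad_reg}. Your remark that the double sum over $i_0,j_0$ attaches only to the $B_1,B_2$ terms while $B_3$ sits outside it is a correct and worthwhile clarification of the statement's notation.
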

\begin{proof}
    \begin{align*}
        \frac{\d {\cal L}(M)}{\d M_{i_1,j_1}}
        = & ~ \frac{\d {\cal L}_\attn(M) + {\cal L}_\reg(M) }{\d M_{i_1,j_1}}\\
        = & ~ \frac{\d {\cal L}_\attn(M)}{\d M_{i_1,j_1}} + \frac{\d {\cal L}_\reg(M)}{\d M_{i_1,j_1}}\\
        = & ~ \sum_{i_0 = 1}^n \sum_{j_0 = 1}^n (B_1(M) + B_2(M)) + B_3(M)
    \end{align*}
    where the first step follows from Definition~\ref{def:loss_func},
    the second step follows from Fact~\ref{fac:calculus},
    and the third step follows from Lemma~\ref{lem:grad_rec} and Lemma~\ref{lem:grad_reg}.
\end{proof}
\section{Matrix Form}\label{sec:app:mat_form}
\subsection{Matrix Form of \texorpdfstring{$B(M)$}{}}

Given the matrix form, we define $p$ to simplify the calculation.
\begin{definition}\label{def:p}
    If the following conditions hold
    \begin{itemize}
        \item Let $X \in \R^{n \times d}$.
        \item Let $M \in [0,1]^{d \times d}$.
        \item Let $W \in \R^{d \times d}$.
        \item Let $c(M)$ be defined in Definition~\ref{def:one_unit_loss}.
        \item Let $\wt{f}(M)$ be defined in Definition~\ref{def:soft_prob_func}.
    \end{itemize}
    We define $p_1$ as follows
    \begin{align*}
        p_1 := c(M) \circ \wt{f}(M) 
    \end{align*}
    We define the $j_0$-th column of $p_1$ as follows
    \begin{align*}
        (p_1)_{*,j_0} := (c(M) \circ \wt{f}(M))_{*,j_0}
    \end{align*}
    We define $p_2$ as follows
    \begin{align*}
        p_2 := \diag(p_1 \cdot {\bf 1}_n) \wt{f}(M)
    \end{align*}
    We define the $i_0$-th row of $p_2$ as follows
    \begin{align*}
        (p_2)_{i_0} := {\bf 1}_n^\top (p_1)_{i_0} \wt{f}(M)_{i_0} =\wt{f}(M)_{i_0} c(M)_{i_0}^\top \wt{f}(M)_{i_0} 
    \end{align*}
    We define $p$ as follows
    \begin{align*}
        p := p_1 - p_2 = c(M) \circ \wt{f}(M) - \diag((c(M) \circ \wt{f}(M)) \cdot {\bf 1}_n) \wt{f}(M)
    \end{align*}
\end{definition}

We introduce the matrix view of $B_1(M)$ and its summation.
\begin{lemma}[Matrix view of $B_1(M)$]\label{lem:mat_view_b1}
    If we have the below conditions,
    \begin{itemize}
        \item Let $B_1(M,i_1,j_1) := c(M)_{i_0,j_0} \wt{f}(M)_{i_0,j_0} W_{i_1,j_1} X_{i_0,i_1}  X_{j_0,j_1}$, which is defined in Lemma~\ref{lem:grad_loss_func}
        \item We define $C_1(M) \in \R^{d \times d}$. For all $i_1, j_1 \in [d]$, let $C_1(i_1, j_1)$ denote the $(i_1, j_1)$-th entry of $C_1(M)$. We define $C_1(i_1, j_1) = B_1(M,i_1,j_1)$.
    \end{itemize}
    Then, we can show that
    \begin{itemize}
        \item Part 1. For $i_0, j_0 \in [n]$
        \begin{align*}
            C_1(M) = \underbrace{c(M)_{i_0,j_0} \wt{f}(M)_{i_0,j_0}}_\mathrm{scalar} (W \circ (X_{i_0} X^\top_{j_0}))
        \end{align*}
        \item Part 2.
        \begin{align*}
            \sum_{i_0=1}^n \sum_{j_0=1}^n C_1(M) = W \circ (X^\top p_1 X)
        \end{align*}    
    \end{itemize}
\end{lemma}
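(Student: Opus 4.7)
The plan is to verify both parts by direct entry-wise computation, leveraging the outer product structure and the entry-wise definition of the Hadamard product.

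For Part 1, I would first rewrite $B_1(M,i_1,j_1)$ by grouping the terms as
\[
B_1(M,i_1,j_1) = \underbrace{c(M)_{i_0,j_0} \wt{f}(M)_{i_0,j_0}}_{\text{scalar in } (i_1,j_1)} \cdot W_{i_1,j_1} \cdot X_{i_0,i_1} X_{j_0,j_1}.
\]
Since $X_{i_0}, X_{j_0} \in \R^d$ are rows of $X$, the outer product $X_{i_0} X_{j_0}^\top \in \R^{d\times d}$ has $(i_1,j_1)$-entry equal to $X_{i_0,i_1} X_{j_0,j_1}$. Combining this with the Hadamard product rule $(A \circ B)_{i_1,j_1} = A_{i_1,j_1} B_{i_1,j_1}$ (Fact~\ref{fac:algebra}) identifies $W_{i_1,j_1} X_{i_0,i_1} X_{j_0,j_1}$ as the $(i_1,j_1)$-entry of $W \circ (X_{i_0} X_{j_0}^\top)$. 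Factoring out the scalar yields Part 1.

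For Part 2, I would sum the closed form from Part 1 over $i_0, j_0 \in [n]$, then pull $W$ outside the Hadamard product (which is legal because $W$ is independent of $i_0, j_0$):
\[
\sum_{i_0=1}^n \sum_{j_0=1}^n C_1(M) = W \circ \Bigl( \sum_{i_0=1}^n \sum_{j_0=1}^n (p_1)_{i_0,j_0} \, X_{i_0} X_{j_0}^\top \Bigr),
\]
where $(p_1)_{i_0,j_0} = c(M)_{i_0,j_0} \wt{f}(M)_{i_0,j_0}$ by Definition~\ref{def:p}. It then remains to show that the inner sum equals $X^\top p_1 X$. This follows from a one-line entry-wise check: the $(i_1,j_1)$-entry of $X^\top p_1 X$ is $\sum_{i_0,j_0} X_{i_0,i_1} (p_1)_{i_0,j_0} X_{j_0,j_1}$, which matches the $(i_1,j_1)$-entry of the sum of outer products.

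\textbf{Main obstacle.} There is no substantive obstacle here since the lemma is an identity obtained by rearranging indices. The only care needed is bookkeeping: distinguishing the row indices $i_0, j_0 \in [n]$ of $X$ from the feature indices $i_1, j_1 \in [d]$ of $W$, and making sure the outer product $X_{i_0} X_{j_0}^\top$ is interpreted as the $d \times d$ matrix formed by two \emph{rows} of $X$ (viewed as column vectors in $\R^d$). Once this indexing convention is fixed, both parts reduce to applying the identities in Fact~\ref{fac:algebra}.
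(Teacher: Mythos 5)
Your proposal is correct and follows essentially the same route as the paper: identify $W_{i_1,j_1}X_{i_0,i_1}X_{j_0,j_1}$ as the $(i_1,j_1)$-entry of $W \circ (X_{i_0}X_{j_0}^\top)$, then sum the outer products to recover $X^\top p_1 X$. The only cosmetic difference is that the paper assembles $C_1(M)$ column-by-column through the intermediate form $\diag(X_{i_0})\, W \,\diag(X_{j_0})$ before invoking the identity $\diag(u)Z\diag(v) = (uv^\top)\circ Z$, whereas you verify the same identity directly entry-wise, which is arguably cleaner.
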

\begin{proof}
    {\bf Part 1.} We have
    \begin{align*}
        C_1(i_1,j_1) 
        = & ~ c(M)_{i_0,j_0} \wt{f}(M)_{i_0,j_0}  W_{i_1,j_1} X_{i_0,i_1}  X_{j_0,j_1}\\
        = & ~ \underbrace{c(M)_{i_0,j_0} \wt{f}(M)_{i_0,j_0}}_\mathrm{scalar} (\underbrace{X_{i_0}}_{d \times 1})_{i_1} (\underbrace{W_{* ,j_1}}_{d \times 1})_{i_1} \underbrace{(X_{j_0})_{j_1}}_\mathrm{scalar} \\
        = & ~ \underbrace{c(M)_{i_0,j_0} \wt{f}(M)_{i_0,j_0}}_\mathrm{scalar} (\underbrace{\diag(X_{i_0}) W_{* ,j_1}}_{d \times 1})_{i_1} \underbrace{(X_{j_0})_{j_1}}_\mathrm{scalar}
    \end{align*}
    where the first step follows from the definition of $C_1$,
    the second step follows from Fact~\ref{fac:indexing},
    and the third step follows from Fact~\ref{fac:combine}.

    Following from Fact~\ref{fac:indexing}, we can get $j_1$-th column of $C_1$
    \begin{align*}
        C_1(*,j_1) 
        = & ~ \underbrace{c(M)_{i_0,j_0} \wt{f}(M)_{i_0,j_0}}_\mathrm{scalar} \underbrace{\diag(X_{i_0})}_{d \times d} \underbrace{W_{* ,j_1}}_{d \times 1} \underbrace{(X_{j_0})_{j_1}}_\mathrm{scalar} \\
        = & ~ \underbrace{c(M)_{i_0,j_0} \wt{f}(M)_{i_0,j_0}}_\mathrm{scalar} \underbrace{\diag(X_{i_0})}_{d \times d} (\underbrace{W}_{d \times d} \underbrace{\diag(X_{j_0})}_{d \times d})_{*, j_1}
    \end{align*}
    where the second step follows from Fact~\ref{fac:combine}.

    Following from Fact~\ref{fac:indexing}, we can get $C_1(M)$
    \begin{align}\label{eq:one_unit_c1}
        C_1(M) 
        = & ~  \underbrace{c(M)_{i_0,j_0} \wt{f}(M)_{i_0,j_0}}_\mathrm{scalar} \underbrace{\diag(X_{i_0})}_{d \times d} \underbrace{W}_{d \times d} \underbrace{\diag(X_{j_0})}_{d \times d} \notag \\
        = & ~ \underbrace{c(M)_{i_0,j_0} \wt{f}(M)_{i_0,j_0}}_\mathrm{scalar} (W \circ (X_{i_0} X^\top_{j_0}))
    \end{align}
    where the second step follows from Fact~\ref{fac:algebra}.

    {\bf Part 2.} We further compute the summation of $C_1(M)$.
    \begin{align*}
        \sum_{i_0 = 1}^n \sum_{j_0 = 1}^n C_1(M)
        = & ~ \sum_{i_0 = 1}^n \sum_{j_0 = 1}^n \underbrace{c(M)_{i_0,j_0} \wt{f}(M)_{i_0,j_0}}_\mathrm{scalar} (W \circ X_{i_0} X^\top_{j_0})\\
        = & ~ \sum_{i_0 = 1}^n \sum_{j_0 = 1}^n (W \circ (c(M)_{i_0,j_0} \wt{f}(M)_{i_0,j_0} X_{i_0} X^\top_{j_0}))\\
        = & ~ W \circ \sum_{i_0 = 1}^n \sum_{j_0 = 1}^n c(M)_{i_0,j_0} \wt{f}(M)_{i_0,j_0} X_{i_0} X^\top_{j_0}\\
        = & ~  W \circ \sum_{j_0 = 1}^n \sum_{i_0 = 1}^n ((p_1)_{*,j_0})_{i_0} X_{i_0} X^\top_{j_0}
    \end{align*}
    where the first step follows from Eq.~\eqref{eq:one_unit_c1},
    the second step follows from basic algebra,
    the third step follows from Fact~\ref{fac:algebra},
    and the fourth step follows from Definition~\ref{def:p}.

    Then following from Fact~\ref{fac:combine}, we have
    \begin{align*}
        & ~ W \circ \sum_{j_0 = 1}^n \sum_{i_0 = 1}^n ((p_1)_{*,j_0})_{i_0} X_{i_0} X^\top_{j_0}\\
        = & ~  W \circ \sum_{j_0 = 1}^n  X^\top (p_1)_{*,j_0} X^\top_{j_0}\\
        = & ~ W \circ (X^\top p_1 X)
    \end{align*}
    
\end{proof}

We introduce the matrix view of $B_2(M)$ and its summation.
\begin{lemma}[Matrix view of $B_2(M)$]\label{lem:mat_view_b2}
    If we have the below conditions,
    \begin{itemize}
        \item Let $B_2(M,i_1,j_1) := - c(M)_{i_0,j_0} \wt{f}(M)_{i_0,j_0} \langle \wt{f}(M)_{i_0}, W_{i_1,j_1} X_{i_0,i_1} X_{*,j_1} \rangle $ be defined in Lemma~\ref{lem:grad_loss_func}.
        \item We define $C_2(M) \in \R^{d \times d}$. For all $i_1, j_1 \in [d]$, let $C_2(i_1, j_1)$ denote the $(i_1, j_1)$-th entry of $C_2(M)$. We define $C_2(i_1, j_1) = B_2(M,i_1,j_1)$.
    \end{itemize}
    Then, we can show that
    \begin{itemize}
        \item Part 1. For $i_0, j_0 \in [n]$
        \begin{align*}
            C_2(M)
            = & ~ - c(M)_{i_0,j_0} \wt{f}(M)_{i_0,j_0} \underbrace{\diag(X_{i_0})}_{d \times d} \underbrace{W}_{d \times d} \underbrace{\diag(X^\top \wt{f}(M)_{i_0})}_{d \times d} 
        \end{align*}
        \item Part 2.
        \begin{align*}
           \sum_{i_0 = 1}^n \sum_{j_0 = 1}^n C_2(M) = - W \circ (X^\top p_2 X)
        \end{align*}
    \end{itemize}
\end{lemma}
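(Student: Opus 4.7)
The plan is to mirror the strategy used in the proof of Lemma~\ref{lem:mat_view_b1}. For Part 1, I first rewrite the inner product factor as a single coordinate of a vector: by Fact~\ref{fac:indexing},
\begin{align*}
\langle \wt{f}(M)_{i_0}, X_{*,j_1}\rangle = (X^\top \wt{f}(M)_{i_0})_{j_1}.
\end{align*}
Then $B_2(M,i_1,j_1)$ becomes the scalar $-c(M)_{i_0,j_0}\wt{f}(M)_{i_0,j_0}$ times the product of three scalars $X_{i_0,i_1}$, $W_{i_1,j_1}$, and $(X^\top \wt{f}(M)_{i_0})_{j_1}$. Applying Fact~\ref{fac:combine} column by column exactly as in the $C_1$ computation, these three scalars assemble into the $(i_1,j_1)$ entry of $\diag(X_{i_0})\, W\, \diag(X^\top \wt{f}(M)_{i_0})$, which is the desired matrix form.

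For Part 2, I sum first over $j_0$. Only the factor $c(M)_{i_0,j_0}\wt{f}(M)_{i_0,j_0}$ depends on $j_0$, and by Definition~\ref{def:p},
\begin{align*}
\sum_{j_0=1}^n c(M)_{i_0,j_0}\wt{f}(M)_{i_0,j_0} = (p_1\cdot {\bf 1}_n)_{i_0}.
\end{align*}
So it remains to show that
\begin{align*}
\sum_{i_0=1}^n (p_1\cdot {\bf 1}_n)_{i_0}\, \diag(X_{i_0})\, W\, \diag(X^\top \wt{f}(M)_{i_0}) = W \circ (X^\top p_2 X).
\end{align*}
I plan to verify this entrywise. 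The $(a,b)$ entry of the left side equals $W_{a,b}\sum_{i_0} (p_1\cdot {\bf 1}_n)_{i_0}\, X_{i_0,a}\, (X^\top \wt{f}(M)_{i_0})_b$. The $(a,b)$ entry of $W\circ(X^\top p_2 X)$ expands to $W_{a,b}\sum_{i_0,j_0} X_{i_0,a}\,(p_2)_{i_0,j_0}\,X_{j_0,b}$, and by Definition~\ref{def:p} we have $(p_2)_{i_0,j_0} = (p_1\cdot{\bf 1}_n)_{i_0}\,\wt{f}(M)_{i_0,j_0}$; substituting and collapsing the $j_0$ sum into $(X^\top \wt{f}(M)_{i_0})_b$ matches the two expressions.

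The argument is index-heavy but conceptually routine. The main bookkeeping obstacle is recognizing that after the $j_0$-summation the leading scalar becomes the row-sum $(p_1\cdot {\bf 1}_n)_{i_0}$, which is precisely the factor that, combined with $\wt{f}(M)_{i_0}$ via the left diagonal multiplication in Definition~\ref{def:p}, produces $p_2$. Once this is seen, the remainder reduces to expanding $X^\top p_2 X$ as a sum of rank-one outer products $X_{i_0} (X^\top \wt{f}(M)_{i_0})^\top$ weighted by $(p_1\cdot {\bf 1}_n)_{i_0}$, and then invoking the identity $W\circ(uv^\top) = \diag(u)\,W\,\diag(v)$ from Fact~\ref{fac:algebra} to finish.
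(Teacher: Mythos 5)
Your proposal is correct and follows essentially the same route as the paper: Part 1 via the factorization of the scalar product $X_{i_0,i_1}W_{i_1,j_1}(X^\top \wt{f}(M)_{i_0})_{j_1}$ into the $(i_1,j_1)$ entry of $\diag(X_{i_0})\,W\,\diag(X^\top\wt{f}(M)_{i_0})$, and Part 2 by collapsing the $j_0$-sum into $(p_1\cdot{\bf 1}_n)_{i_0}$ and then recognizing $X^\top p_2 X$ through the identity $\diag(u)\,W\,\diag(v)=W\circ(uv^\top)$. The only cosmetic difference is that you verify the Part 2 identity entrywise while the paper manipulates it at the matrix level; both hinge on the same decomposition.
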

\begin{proof}
    {\bf Part 1.}
    We have
    \begin{align*}
        - C_2(i_1,j_1) 
        = & ~ c(M)_{i_0,j_0} \wt{f}(M)_{i_0,j_0} \langle \wt{f}(M)_{i_0}, W_{i_1,j_1} X_{i_0,i_1} X_{*,j_1} \rangle\\
        = & ~ c(M)_{i_0,j_0} \wt{f}(M)_{i_0,j_0} \underbrace{\wt{f}(M)_{i_0}^\top}_{1 \times n} \underbrace{X_{*,j_1} W_{i_1,j_1} X_{i_0,i_1}}_{n \times 1} \\
        = & ~ c(M)_{i_0,j_0} \wt{f}(M)_{i_0,j_0} \underbrace{\wt{f}(M)_{i_0}^\top}_{1 \times n} \underbrace{X_{*,j_1} (X_{i_0})_{i_1} (W_{*,j_1})_{i_1}}_{n \times 1} \\
        = & ~ c(M)_{i_0,j_0} \wt{f}(M)_{i_0,j_0} \underbrace{\wt{f}(M)_{i_0}^\top}_{1 \times n} \underbrace{X_{*,j_1} (\diag(X_{i_0}) W_{*,j_1})_{i_1}}_{n \times 1}
    \end{align*}
    where the first step follows from the definition of $C_2$,
    the second step, the third step, and the fourth step follow from Fact~\ref{fac:algebra}.

    Following from Fact~\ref{fac:indexing}, we can get $j_1$-th column of $C_2$
    \begin{align*}
        - C_2(*,j_1)
        = & ~ \underbrace{\diag(X_{i_0})}_{d \times d} \underbrace{W_{* ,j_1}}_{d \times 1} \underbrace{c(M)_{i_0,j_0} \wt{f}(M)_{i_0,j_0} \wt{f}(M)_{i_0}^\top}_{1 \times n}  \underbrace{X_{*,j_1}}_{n \times 1}\\
        = & ~ c(M)_{i_0,j_0} \wt{f}(M)_{i_0,j_0} \underbrace{\diag(X_{i_0})}_{d \times d} \underbrace{W_{* ,j_1}}_{d \times 1} \underbrace{\wt{f}(M)_{i_0}^\top}_{1 \times n}  \underbrace{X_{*,j_1}}_{n \times 1}\\
        = & ~ c(M)_{i_0,j_0} \wt{f}(M)_{i_0,j_0} \underbrace{\diag(X_{i_0})}_{d \times d} \underbrace{W_{* ,j_1}}_{d \times 1} \underbrace{(X^\top \wt{f}(M)_{i_0})_{j_1}}_\mathrm{scalar}\\
        = & ~ c(M)_{i_0,j_0} \wt{f}(M)_{i_0,j_0} \underbrace{\diag(X_{i_0})}_{d \times d} (\underbrace{W \diag(X^\top \wt{f}(M)_{i_0})}_{d \times d})_{*,j_1}
    \end{align*}
    where the second step and the fourth step follows from Fact~\ref{fac:algebra},
    and the third step follows from Fact~\ref{fac:indexing}.

    Following from Fact~\ref{fac:indexing}, we can get $C_2$.
    \begin{align}\label{eq:one_init_c2}
        -C_2(M)
        = & ~ c(M)_{i_0,j_0} \wt{f}(M)_{i_0,j_0} \underbrace{\diag(X_{i_0})}_{d \times d} \underbrace{W}_{d \times d} \underbrace{\diag(X^\top \wt{f}(M)_{i_0})}_{d \times d} 
    \end{align}

    {\bf Part 2.} We further compute the summation of $C_2$
    \begin{align*}
        - \sum_{i_0 = 1}^n \sum_{j_0 = 1}^n C_2(M)
        = & ~ \sum_{i_0 = 1}^n \sum_{j_0 = 1}^n c(M)_{i_0,j_0} \wt{f}(M)_{i_0,j_0} \underbrace{\diag(X_{i_0})}_{d \times d} \underbrace{W}_{d \times d} \underbrace{\diag(X^\top \wt{f}(M)_{i_0})}_{d \times d}\\
        = & ~ \sum_{i_0 = 1}^n \sum_{j_0 = 1}^n c(M)_{i_0,j_0} \wt{f}(M)_{i_0,j_0} ((X_{i_0} \wt{f}(M)_{i_0}^\top X) \circ W) \\
        = & ~ W \circ \sum_{i_0 = 1}^n (X_{i_0} \wt{f}(M)_{i_0}^\top X) \sum_{j_0 = 1}^n ((p_1)_{i_0})_{j_0}
    \end{align*}
    where the first step follows from Eq.~\eqref{eq:one_init_c2},
    the second step and the third step follow from Fact~\ref{fac:algebra}.

    Following from Fact~\ref{fac:combine}, we have
    \begin{align*}
        & ~ W \circ \sum_{i_0 = 1}^n (X_{i_0} \wt{f}(M)_{i_0}^\top X) \sum_{j_0 = 1}^n ((p_1)_{i_0})_{j_0}  \\
        = & ~ W \circ \sum_{i_0 = 1}^n (X_{i_0} \wt{f}(M)_{i_0}^\top X) {\bf 1}_n^\top (p_1)_{i_0}  \\
        = & ~ W \circ (X^\top \diag(p_1 \cdot {\bf 1}_n) \wt{f}(M) X)\\
        = & ~ W \circ (X^\top p_2 X)
    \end{align*}
    where the third step follows from Definition~\ref{def:p}.
\end{proof}

We introduce the matrix view of $B_3(M)$.
\begin{lemma}[Matrix view of $B_3(M)$]\label{lem:mat_view_b3}
    If the following conditions hold
    \begin{itemize}
        \item Let $B_3(M,i_1,j_1) := \lambda M_{i_1,j_1}$ be defined in Lemma~\ref{lem:grad_loss_func}.
        \item We define $C_3(M) \in \R^{d \times d}$. For all $i_1, j_1 \in [d]$, let $C_3(i_1, j_1)$ denote the $(i_1, j_1)$-th entry of $C_3(M)$. We define $C_3(i_1, j_1) = B_3(M,i_1,j_1)$.
    \end{itemize}
    We can show that
    \begin{align*}
        C_3(M) = \lambda M.
    \end{align*}
\end{lemma}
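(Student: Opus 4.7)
The plan is to verify the claim entry-by-entry using the definition of $C_3(M)$ and then invoke the fact that a matrix is determined by its entries. Specifically, by hypothesis, the $(i_1, j_1)$-th entry of $C_3(M)$ is $C_3(i_1, j_1) = B_3(M, i_1, j_1) = \lambda M_{i_1, j_1}$ for every $i_1, j_1 \in [d]$. On the other hand, by the definition of scalar multiplication, the $(i_1, j_1)$-th entry of $\lambda M$ is also $\lambda M_{i_1, j_1}$. Since the two matrices have the same dimensions ($d \times d$) and agree entry-wise, they are equal.

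There is no real obstacle here: this is a one-line bookkeeping lemma whose purpose is purely to rewrite the gradient of the regularization term $\mathcal{L}_\reg$ in matrix form, so that the full gradient $\nabla_M \mathcal{L}(M)$ (assembled from $C_1(M)$, $C_2(M)$, and $C_3(M)$) takes the clean closed form $W \circ (X^\top p X) + \lambda M$ stated in Theorem~\ref{thm:mat_view_l_informal}. Accordingly, my proof would consist of a single displayed line invoking the definition of $C_3$, followed by the observation that this is exactly the $(i_1, j_1)$-entry of $\lambda M$.
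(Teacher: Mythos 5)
Your proof is correct and matches the paper's own argument exactly: both verify entry-wise that $C_3(i_1,j_1) = \lambda M_{i_1,j_1}$ equals the $(i_1,j_1)$-entry of $\lambda M$ and conclude the matrices coincide. Nothing is missing; this is indeed a one-line bookkeeping step.
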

\begin{proof}
    The proof is straightforward. By the definition of $C_3(M)$, for all $i_1, j_1 \in [d]$, the $(i_1, j_1)$-th entry of $C_3(M)$ is given by $C_3(i_1, j_1) = B_3(M, i_1, j_1) = \lambda M_{i_1, j_1}$. Thus, the entire matrix $C_3(M)$ has entries that correspond to those of $\lambda M$. Therefore, we can conclude that $C_3(M) = \lambda M$ as required.
\end{proof}

\subsection{Matrix Form of \texorpdfstring{$\frac{\d}{\d M}{\cal L}(M)$}{}}
We introduce the matrix form of the overall loss function.
\begin{theorem}[Close form of gradient, formal version of Theorem~\ref{thm:mat_view_l_informal}]\label{thm:mat_view_l}
    If the following conditions hold
    \begin{itemize}
        \item Let ${\cal L}(M)$ be defined in Definition~\ref{def:loss_func}.
        \item Let $p$ be defined in Definition~\ref{def:p}.
        \item Let $X \in \R^{n \times d}$.
        \item Let $M \in [0,1]^{d \times d}$.
        \item Let $W \in \R^{d \times d}$.
    \end{itemize}
    We can show that
    \begin{align*}
        \frac{\d {\cal L}(M)}{\d M} = W \circ (X^\top p X) + \lambda M.
    \end{align*}
\end{theorem}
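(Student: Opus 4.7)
The proof is essentially an assembly of results that have already been worked out in the preceding lemmas, so my plan is structural rather than analytic. The entry-wise gradient $\frac{\d \mathcal{L}(M)}{\d M_{i_1,j_1}}$ has already been decomposed in Lemma~\ref{lem:grad_loss_func} as a double sum of $B_1(M) + B_2(M)$ plus the regularizer piece $B_3(M)$. My task is to package this coordinate expression back into a single matrix identity in $\mathbb{R}^{d\times d}$.

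First, I would introduce the auxiliary matrices $C_1(M), C_2(M), C_3(M) \in \mathbb{R}^{d\times d}$ whose $(i_1,j_1)$-entries are exactly $B_1(M,i_1,j_1)$, $B_2(M,i_1,j_1)$, and $B_3(M,i_1,j_1)$, as already set up in Lemmas~\ref{lem:mat_view_b1}, \ref{lem:mat_view_b2}, \ref{lem:mat_view_b3}. By Lemma~\ref{lem:grad_loss_func}, the full gradient matrix then equals $\sum_{i_0,j_0}(C_1(M)+C_2(M)) + C_3(M)$, where the entry-wise equality lifts to a matrix equality because each $C_\ell$ was defined entry-by-entry from the same index $(i_1,j_1)$.

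Next I would invoke Part~2 of Lemma~\ref{lem:mat_view_b1} to get $\sum_{i_0,j_0} C_1(M) = W\circ(X^\top p_1 X)$, and Part~2 of Lemma~\ref{lem:mat_view_b2} to get $\sum_{i_0,j_0} C_2(M) = -\,W\circ(X^\top p_2 X)$; these are the nontrivial computations but they are already done. Combining with Lemma~\ref{lem:mat_view_b3}, which gives $C_3(M) = \lambda M$, yields
\begin{align*}
\frac{\d \mathcal{L}(M)}{\d M}
= W\circ(X^\top p_1 X) - W\circ(X^\top p_2 X) + \lambda M.
\end{align*}
Finally I would apply distributivity of the Hadamard product over matrix addition (a one-line instance of Fact~\ref{fac:algebra}) together with Definition~\ref{def:p}, which states $p = p_1 - p_2$, to combine the first two terms into $W\circ(X^\top p X)$, delivering the claimed identity.

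There is no genuine obstacle here; the only thing to be careful about is bookkeeping, namely checking that the entry-wise-to-matrix lift is legitimate (which it is because Lemmas~\ref{lem:mat_view_b1} and \ref{lem:mat_view_b2} already express the summation as a single matrix rather than an entry) and that the sign convention on $p_2$ from Definition~\ref{def:p} matches the negative sign produced by Lemma~\ref{lem:mat_view_b2}. Once those two conventions line up, the proof collapses to a two- or three-line computation.
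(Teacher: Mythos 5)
Your proposal is correct and follows exactly the same route as the paper's proof: cite Lemma~\ref{lem:grad_loss_func} for the entry-wise decomposition, apply Part~2 of Lemmas~\ref{lem:mat_view_b1} and~\ref{lem:mat_view_b2} together with Lemma~\ref{lem:mat_view_b3} for the matrix forms, and combine via distributivity of the Hadamard product and $p = p_1 - p_2$ from Definition~\ref{def:p}. Your attention to the sign convention on $p_2$ matches the paper's handling.
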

\begin{proof}
We have
    \begin{align*}
        \frac{\d {\cal L}(M)}{\d M} 
        = & ~ \sum_{i_0 = 1}^n \sum_{j_0 = 1}^n (C_1(M) + C_2(M)) + C_3(M)\\
        = & ~ W \circ (X^\top p_1 X) -  W \circ (X^\top p_2 X) + \lambda M\\
        = & ~ W \circ (X^\top (p_1 - p_2) X) + \lambda M\\
        = & ~ W \circ (X^\top p X) + \lambda M
    \end{align*}
    where the first step follows from Lemma~\ref{lem:grad_loss_func},
    the second step follows from Lemma~\ref{lem:mat_view_b1}, Lemma~\ref{lem:mat_view_b2}, and Lemma~\ref{lem:mat_view_b3},
    the third step follows from basic algebra,
    and the fourth step follows from Definition~\ref{def:p}.
\end{proof}

\section{Bounds for Basic Functions}\label{sec:app:bound_func}

\subsection{Basic Assumptions}

Here, we introduce our bounded parameters assumption.
\begin{assumption}[Bounded parameters]\label{asp:bound_parameters}
We assume the following conditions
\begin{itemize}
    \item Let $R$ be some fixed constant satisfies $R > 1$.
    \item Let $X \in \R^{n \times d}, W \in \R^{d \times d}$. We have $\|X\|_F \leq R$ and $\|W\|_F \leq R$.
\end{itemize}
\end{assumption}

Here, we present the lemma of bounds for $M$ and $M_c$.
\begin{lemma}[Bounds for $M$ and $M_c$]
    Let $M \in [0,1]^{d \times d}$ and $M_c \in \{0,1\}^{n\times n}$ be the causal attention mask defined in Definition~\ref{def:mask}. For $M$, we have
    \begin{align*}
        \| M \|_F \leq d
    \end{align*}
    For $M_c$, we have
    \begin{align*}
        \| M_c \|_F \leq n
    \end{align*}
\end{lemma}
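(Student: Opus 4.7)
The plan is to observe that both bounds follow immediately from counting entries and using that each entry is bounded in magnitude by $1$. This is a routine calculation rather than a substantive lemma, so there is no real obstacle — the only thing to be mildly careful about is whether we want a tight bound (which would give $\sqrt{n(n+1)/2}$ for $M_c$) or the stated loose bound $n$, which we do.

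For the first inequality, I would expand the Frobenius norm by its definition, $\|M\|_F^2 = \sum_{i \in [d]} \sum_{j \in [d]} M_{i,j}^2$, and use the hypothesis $M \in [0,1]^{d \times d}$ to conclude $M_{i,j}^2 \leq 1$ for every $(i,j) \in [d] \times [d]$. Summing over the $d^2$ entries gives $\|M\|_F^2 \leq d^2$, and taking square roots yields $\|M\|_F \leq d$.

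For the second inequality, I would use Definition~\ref{def:mask}, which asserts $(M_c)_{i,j} \in \{0,1\}$ with $(M_c)_{i,j} = 1$ exactly when $i \geq j$. Therefore $\sum_{i,j} (M_c)_{i,j}^2$ equals the number of pairs $(i,j) \in [n] \times [n]$ with $i \geq j$, which is $n(n+1)/2$. Since $n(n+1)/2 \leq n^2$ for all positive integers $n$, we obtain $\|M_c\|_F^2 \leq n^2$, and hence $\|M_c\|_F \leq n$. Both facts can be stated in one or two lines each, and the proof will consist of nothing beyond these elementary observations.
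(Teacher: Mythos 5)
Your proposal is correct and matches the paper's argument, which likewise just invokes the definition of the Frobenius norm together with the fact that every entry of $M$ and $M_c$ is at most $1$ in magnitude. Your extra observation that $\|M_c\|_F^2$ is exactly $n(n+1)/2$ is a harmless refinement of the same elementary counting.
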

\begin{proof}
    This Lemma simply follows from the definition of the Frobenius norm, given that the max value of each entry in $M$ and $M_c$ is $1$.
\end{proof}

\subsection{Bounds for Basic Functions}
We first introduce the lemma of bounds for basic functions.
\begin{lemma}\label{lem:bound_basic_func}
    Under Assumption~\ref{asp:bound_parameters}, for all $i_0 \in [n]$, $j_0 \in [n]$, $i_1 \in [d]$, $j_1 \in [d]$, we have the following bounds
    \begin{itemize}
        \item Part 1.
        \begin{align*}
            \| \wt{f}(M) \|_F \leq \sqrt{n}
        \end{align*}
        
        \item Part 2.
        \begin{align*}
            \| c(M) \|_F \leq 2\sqrt{n}
        \end{align*}

        \item Part 3. 
        \begin{align*}
            \|(c(M) \circ \wt{f}(M)) \|_F \leq 2 \sqrt{n}
        \end{align*}

        \item Part 4. 
        \begin{align*}
            | \wt{f}(M)_{i_0,j_0} | \leq 1
        \end{align*}

        \item Part 5. 
        \begin{align*}
           | W_{i_1,j_1} | \leq R
        \end{align*}

        \item Part 6.
        \begin{align*}
           | X_{i_0,i_1} | \leq R
        \end{align*}

        \item Part 7.
        \begin{align*}
            \| \wt{f}(M)_{i_0} \|_2 \leq 1
        \end{align*}

        \item Part 8.
        \begin{align*}
            | \wt{f}(M)_{i_0,j_0} W_{i_1,j_1} X_{i_0,i_1} X_{j_0,j_1} | \leq R^3
        \end{align*}

        \item Part 9.
        \begin{align*}
           | \wt{f}(M)_{i_0,j_0} \langle \wt{f}(M)_{i_0}, W_{i_1,j_1} X_{i_0,i_1} X_{*,j_1} \rangle | \leq R^3
        \end{align*}

        \item Part 10.
        \begin{align*}
           \| \diag((c(M) \circ \wt{f}(M)) \cdot {\bf 1}_n) \|_F \leq 2n
        \end{align*}
        
    \end{itemize}
\end{lemma}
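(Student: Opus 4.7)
The plan is to prove the ten parts of Lemma~\ref{lem:bound_basic_func} essentially in order, reusing earlier parts to bootstrap the later ones. The core observation, which I would establish first and cite throughout, is that $\wt{f}(M) = \diag(\wt{\alpha}(M))^{-1}(\wt{u}(M) \circ M_c)$ is a row-stochastic-like object: each row lies in $[0,1]^n$ and (on the indices where $M_c$ is nonzero) sums to $1$. So for every row, $\|\wt{f}(M)_{i_0}\|_2^2 \le \|\wt{f}(M)_{i_0}\|_1 = 1$, which immediately yields Part 7, and summing over $i_0 \in [n]$ gives Part 1. The same facts hold for $f$ since $f$ is just $\wt{f}(M)$ evaluated at the all-ones mask. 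Part 4 is then a trivial consequence of the entrywise bound $\wt{f}(M)_{i_0,j_0} \in [0,1]$.

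Parts 5 and 6 are immediate from Fact~\ref{fac:norm_bound}: any entry of a matrix is bounded in absolute value by its Frobenius norm, and Assumption~\ref{asp:bound_parameters} gives $\|W\|_F, \|X\|_F \le R$. Part 2 follows from the triangle inequality applied to $c(M) = \wt{f}(M) - f$ combined with Part 1 applied to both $\wt{f}(M)$ and $f$. Part 3 follows from Part 2 and the Hadamard bound in Fact~\ref{fac:norm_bound}, combined with $\|\wt{f}(M)\|_\infty \le 1$ (Part 4); concretely, $\|c(M) \circ \wt{f}(M)\|_F^2 = \sum_{i,j} c(M)_{i,j}^2 \wt{f}(M)_{i,j}^2 \le \sum_{i,j} c(M)_{i,j}^2 = \|c(M)\|_F^2$.

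Parts 8 and 9 are the scalar multiplicative bounds. For Part 8, I would just multiply Parts 4, 5, 6 together (noting $|X_{j_0,j_1}| \le R$ by the same argument as Part 6), yielding a bound of $1 \cdot R \cdot R \cdot R = R^3$. For Part 9, I would use Cauchy--Schwarz in Fact~\ref{fac:norm_bound}:
\begin{align*}
|\langle \wt{f}(M)_{i_0}, W_{i_1,j_1} X_{i_0,i_1} X_{*,j_1}\rangle| \le \|\wt{f}(M)_{i_0}\|_2 \cdot |W_{i_1,j_1}| \cdot |X_{i_0,i_1}| \cdot \|X_{*,j_1}\|_2,
\end{align*}
then apply Part 7 and Assumption~\ref{asp:bound_parameters} (using $\|X_{*,j_1}\|_2 \le \|X\|_F \le R$), multiplied by the $|\wt{f}(M)_{i_0,j_0}| \le 1$ factor out front.

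Part 10 is the only place that requires a little thought: $\diag((c(M) \circ \wt{f}(M)) \cdot {\bf 1}_n)$ is a diagonal matrix whose $i$-th entry is the row sum $\sum_{j} c(M)_{i,j} \wt{f}(M)_{i,j}$. Using Fact~\ref{fac:algebra} that $\|\diag(u)\|_F = \|u\|_2$, I reduce to bounding $\|(c(M) \circ \wt{f}(M)) \cdot {\bf 1}_n\|_2$. Each entry is bounded in absolute value by $\sum_j |c(M)_{i,j}| \cdot |\wt{f}(M)_{i,j}| \le \sum_j 2 \cdot 1 = 2n$ (since each $|c(M)_{i,j}| \le 2$ because both $\wt{f}$ and $f$ have entries in $[0,1]$), and then the $\ell_2$ norm over $n$ entries gives at most $\sqrt{n} \cdot 2n$. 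I expect the anticipated upper bound $2n$ in the statement may actually require the tighter estimate $|\sum_j c(M)_{i,j} \wt{f}(M)_{i,j}| \le \sum_j |c(M)_{i,j}| \le 2$ (since row sums of $\wt{f}$ and $f$ equal $1$, so partial sums of $c$ are bounded), giving each diagonal entry bounded by $2$ and the Frobenius norm bounded by $2\sqrt{n}$; I would present the cleanest version matching the stated $2n$. This Part 10 is the main obstacle, where the bookkeeping of row sums and the interplay between $c(M)$ and $\wt{f}(M)$ must be handled carefully, whereas the other nine parts are essentially one-line consequences of Assumption~\ref{asp:bound_parameters}, Fact~\ref{fac:norm_bound}, and the stochastic structure of $\wt{f}(M)$.
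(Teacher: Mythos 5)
Your proposal is correct and follows essentially the same route as the paper for all ten parts: the row-stochastic structure of $\wt{f}(M)$ for Parts 1, 4, 7, entry-vs-Frobenius bounds for Parts 5, 6, triangle inequality for Part 2, the entrywise $[0,1]$ bound for Part 3, and direct products of the earlier parts (with Cauchy--Schwarz) for Parts 8, 9. The only divergence is Part 10, where the paper simply bounds $\|(c(M)\circ\wt{f}(M))\cdot{\bf 1}_n\|_2 \leq \|{\bf 1}_n\|_2\cdot\|c(M)\circ\wt{f}(M)\|_F = \sqrt{n}\cdot 2\sqrt{n}=2n$ in one line; your first crude entrywise estimate of $2n^{3/2}$ would indeed not have sufficed, but the refined $\ell_1$-row-sum argument you propose (each diagonal entry bounded by $2$, giving $2\sqrt{n}\leq 2n$) is valid and in fact tighter than the paper's bound.
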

\begin{proof}
    {\bf Proof of Part 1.}
    Each entry in $\wt{f}(M)$ present a probability, thus for $i_0 \in [n]$, $ j_0 \in [n]$, we have
    \begin{align*}
        0 \leq \wt{f}(M)_{i_0,j_0} \leq 1.
    \end{align*}
    For any $i_0$-th row of $\wt{f}(M)$, following from the definition of Softmax function, we know
    \begin{align*}
        \sum_{j_0=1}^n \wt{f}(M)_{i_0,j_0} = 1.
    \end{align*}
    So we have
    \begin{align*}
        \sum_{j_0=1}^n \wt{f}(M)_{i_0,j_0}^2 \leq 1
    \end{align*}
    which follows from $\wt{f}(M)_{i_0,j_0} \leq (\wt{f}(M)_{i_0,j_0})^2$.
    Then, we can show
    \begin{align*}
        \| \wt{f}(M)\|_F = \sqrt{\sum_{i_0=1}^n \sum_{j_0=1}^n \wt{f}(M)_{i_0,j_0}^2} \leq \sqrt{n} 
    \end{align*}
    
    {\bf Proof of Part 2.}
    Following from {\bf Part 1.}, we can show
    \begin{align*}
        \| \wt{f}(M) \|_F \leq \sqrt{n}
    \end{align*}
    and
    \begin{align*}
        \| f \|_F \leq \sqrt{n}.
    \end{align*}
    Then we have
    \begin{align*}
        \| c(M) \|_F 
        = & ~ \|\wt{f}(M) -  f \|_F\\
        \leq & ~ \| \wt{f}(M) \|_F + \| f \|_F\\
        \leq & ~ 2 \sqrt{n}
    \end{align*}
    where the first step follows from Definition~\ref{def:one_unit_loss},
    the second step follows the triangle inequality.
    
    {\bf Proof of Part 3.}
    We have $0 \leq \wt{f}(M)_{i_0,j_0} \leq 1$, so we have
    \begin{align*}
        \|(c(M) \circ \wt{f}(M)) \|_F 
        \leq & ~ \| c(M) \|_F \\
        \leq & ~ 2 \sqrt{n}
    \end{align*}
    where the second step follows from {\bf Part 2.}.

    {\bf Proof of Part 4.}
    See {\bf Proof of Part 1.}.

    {\bf Proof of Part 5.}
    The proof simply follows from Assumption~\ref{asp:bound_parameters} and Fact~\ref{fac:norm_bound}.

    {\bf Proof of Part 6.}
    The proof simply follows from Assumption~\ref{asp:bound_parameters} and Fact~\ref{fac:norm_bound}.

    {\bf Proof of Part 7.}
    See {\bf Proof of Part 1.}.

    {\bf Proof of Part 8.}
    The proof simply follows from {\bf Part 4.}, {\bf Part 5.}, {\bf Part 6.} and {\bf Part 7.}.

    {\bf Proof of Part 9.}
    The proof simply follows from {\bf Part 4.}, {\bf Part 5.}, {\bf Part 6.} and {\bf Part 7.}.

    {\bf Proof of Part 10.}
    We have
    \begin{align*}
        \| \diag((c(M) \circ \wt{f}(M)) \cdot {\bf 1}_n) \|_F
        = & ~ \| (c(M) \circ \wt{f}(M)) \cdot {\bf 1}_n \|_2\\
        \leq & ~ \|{\bf 1}_n\|_2 \| (c(M) \circ \wt{f}(M)) \|_F\\
        = & ~ \sqrt{n} \cdot 2 \sqrt{n}\\
        = & ~ 2n
    \end{align*}
    where the first step follows from Fact~\ref{fac:algebra}
    the second step follows from Fact~\ref{fac:norm_bound},
    the third step follows from {\bf Part 3.},
    and the last step follows from simple algebra.
\end{proof}

\subsection{Bounds for Gradient of \texorpdfstring{$\wt{f}(M)$}{}}
We introduce the lemma of bounds for the gradient of $\wt{f}(M)$.
\begin{lemma}\label{lem:bound_grad_f}
    If the following conditions hold
    \begin{itemize}
        \item Let $\wt{f}(M)$ be defined in Definition~\ref{def:soft_prob_func}.
        \item Assumption~\ref{asp:bound_parameters} holds.
    \end{itemize}
    Then we have
    \begin{align*}
        \|\frac{\d \vect(\wt{f}(M))}{\d \vect(M)}\|_F \leq 2 d n R^3
    \end{align*}
\end{lemma}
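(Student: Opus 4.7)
The plan is to directly expand the squared Frobenius norm of the Jacobian as a sum over all scalar partial derivatives and bound each term using the closed form from Lemma~\ref{lem:grad_soft_prob} together with the entrywise bounds from Lemma~\ref{lem:bound_basic_func}. Since $M \in \R^{d \times d}$ and $\wt{f}(M) \in \R^{n \times n}$, the Jacobian $\frac{\d \vect(\wt{f}(M))}{\d \vect(M)}$ has exactly $n^2 \cdot d^2$ scalar entries, each of the form $\frac{\d \wt{f}(M)_{i_0,j_0}}{\d M_{i_1,j_1}}$.

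First, I would invoke Lemma~\ref{lem:grad_soft_prob} to write
\begin{align*}
    \frac{\d \wt{f}(M)_{i_0,j_0}}{\d M_{i_1,j_1}} = \wt{f}(M)_{i_0,j_0} W_{i_1,j_1} X_{i_0,i_1} X_{j_0,j_1} - \wt{f}(M)_{i_0,j_0} \langle \wt{f}(M)_{i_0}, W_{i_1,j_1} X_{i_0,i_1} X_{*,j_1} \rangle.
\end{align*}
Then I would apply the triangle inequality together with Part 8 and Part 9 of Lemma~\ref{lem:bound_basic_func} to obtain the entrywise bound
\begin{align*}
    \left| \frac{\d \wt{f}(M)_{i_0,j_0}}{\d M_{i_1,j_1}} \right| \leq R^3 + R^3 = 2R^3.
\end{align*}

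Next, I would use the definition of the Frobenius norm of the Jacobian matrix to write
\begin{align*}
    \left\|\frac{\d \vect(\wt{f}(M))}{\d \vect(M)}\right\|_F^2 = \sum_{i_0=1}^n \sum_{j_0=1}^n \sum_{i_1=1}^d \sum_{j_1=1}^d \left(\frac{\d \wt{f}(M)_{i_0,j_0}}{\d M_{i_1,j_1}}\right)^2 \leq n^2 d^2 (2R^3)^2 = 4 n^2 d^2 R^6,
\end{align*}
and conclude by taking square roots.

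There is no real obstacle here; the proof is a straightforward application of previously established bounds, and the only thing to be careful about is matching the number of summation indices (two for rows/columns of $\wt{f}(M)$ and two for rows/columns of $M$) with the product $n^2 d^2$ so that the constant $2dn R^3$ comes out correctly after taking the square root.
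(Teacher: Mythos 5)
Your proposal is correct and follows essentially the same route as the paper: bound each scalar entry $\bigl|\frac{\d \wt{f}(M)_{i_0,j_0}}{\d M_{i_1,j_1}}\bigr|$ by $2R^3$ via the triangle inequality and Parts 8--9 of Lemma~\ref{lem:bound_basic_func}, then sum the squares of all $n^2 d^2$ Jacobian entries and take the square root. If anything, your write-up is slightly more careful than the paper's, which omits the square on the absolute values inside its displayed sum.
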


\begin{proof}
    We have
    \begin{align*}
        & ~ |\frac{\d f(M)_{i_0,j_0}}{\d M_{i_1,j_1}}| \\
        = & ~ |\wt{f}(M)_{i_0,j_0} W_{i_1,j_1} X_{i_0,i_1}  X_{j_0,j_1}  - \wt{f}(M)_{i_0,j_0} \langle \wt{f}(M)_{i_0}, W_{i_1,j_1} X_{i_0,i_1} X_{*,j_1} \rangle|\\
        \leq & ~|\wt{f}(M)_{i_0,j_0} W_{i_1,j_1} X_{i_0,i_1}  X_{j_0,j_1}| + | \wt{f}(M)_{i_0,j_0} \langle \wt{f}(M)_{i_0}, W_{i_1,j_1} X_{i_0,i_1} X_{*,j_1} \rangle|\\
        \leq & ~ 2R^3
    \end{align*}
    
    For $ \frac{\d \vect(\wt{f}(M))}{\d \vect(M)}$, we can show
    \begin{align*}
        \|\frac{\d \vect(\wt{f}(M))}{\d \vect(M)}\|_F
        = & ~ \sqrt{\sum_{i_2 =1}^{n^2} \sum_{j_2 =1}^{d^2} | \frac{\d \vect(\wt{f}(M))_{i_0}}{\d \vect(M)_{j_0}} |}\\
        \leq & ~ 2 n d R^3
    \end{align*}
\end{proof}

\section{Lipschitz of Gradient}\label{sec:app:lip}
\subsection{Useful Facts}
Here, we introduce the fact of the mean value theorem for matrix function.
\begin{fact}[Mean value theorem for matrix function, Fact C.6 in \cite{lssz24}]\label{fac:mean_value}
If the following conditions hold
\begin{itemize}
    \item Let $X, Y \in C \subset \R^{d \times d}$ where $C$ is an open convex domain.
    \item Let $g(X) : C \to \R^{n \times n}$ be a differentiable matrix function on $C$.
    \item Let $\|\frac{\d \vect(g(X))}{\d \vect(X)}\|_F \leq R$ for all $x \in C$.
\end{itemize}
We have
\begin{align*}
    \|g(Y) - g(X)\|_F \leq R \| Y - X \|_F.
\end{align*}
\end{fact}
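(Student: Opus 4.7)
The plan is to reduce this matrix-valued Lipschitz-type statement to a one-dimensional fundamental-theorem-of-calculus argument along the segment from $X$ to $Y$. Concretely, I would set $h(t) := X + t(Y - X)$ for $t \in [0,1]$. Since $C$ is convex and contains both endpoints, the entire segment $\{h(t) : t \in [0,1]\}$ lies in $C$, so the composition $g \circ h$ is continuously differentiable in $t$ on $[0,1]$ by hypothesis.

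Next, I would pass to the vectorized form and apply the chain rule. Let $\phi(t) := \vect(g(h(t))) \in \R^{n^2}$. Since $\vect(h(t)) = \vect(X) + t\,\vect(Y - X)$, the chain rule gives
\begin{align*}
\phi'(t) = J(h(t)) \cdot \vect(Y - X), \qquad J(X) := \frac{\d \vect(g(X))}{\d \vect(X)},
\end{align*}
where $J$ is exactly the Jacobian whose Frobenius norm is controlled in the hypothesis. The fundamental theorem of calculus then yields
\begin{align*}
\vect(g(Y)) - \vect(g(X)) = \int_0^1 J(h(t)) \cdot \vect(Y - X) \, \d t.
\end{align*}
Taking Euclidean norm on both sides, applying the triangle inequality for vector-valued integrals together with the bound $\|A v\|_2 \leq \|A\| \cdot \|v\|_2 \leq \|A\|_F \cdot \|v\|_2$, and invoking $\|J(h(t))\|_F \leq R$, I would obtain
\begin{align*}
\|\vect(g(Y)) - \vect(g(X))\|_2 \leq \int_0^1 R \cdot \|\vect(Y - X)\|_2 \, \d t = R \cdot \|\vect(Y - X)\|_2.
\end{align*}
Using $\|\vect(A)\|_2 = \|A\|_F$ for any matrix $A$ on both sides converts this into $\|g(Y) - g(X)\|_F \leq R \|Y - X\|_F$, which is the desired conclusion.

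The main obstacle here is essentially bookkeeping rather than any deep mathematics. One must fix a convention for how $\vect$ flattens a matrix (the paper uses row-major stacking) and how the partial derivatives in $\d \vect(g(X))/\d \vect(X)$ are indexed, and then verify that under this convention the chain-rule identity $\phi'(t) = J(h(t)) \, \vect(Y - X)$ is in fact the matrix-vector product written. Once the conventions are consistent, the inequality $\|A\| \leq \|A\|_F$ absorbs the gap between the operator-norm bound the chain rule naturally produces and the Frobenius-norm hypothesis actually stated in the Fact, so the remainder of the argument is a routine one-variable calculus estimate on a bounded interval. Since this is simply the matrix-function version of a classical mean value inequality, no structural surprises are expected in the argument.
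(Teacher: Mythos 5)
Your proposal is correct, but it takes a genuinely different route from the paper. The paper avoids the integral form entirely: it applies the \emph{one-variable} mean value theorem to the scalar function $G(c) := \langle h(y)-h(x),\, h(c)\rangle$ (where $h = \vect \circ g$), obtaining $\|g(Y)-g(X)\|_F^2 = G(y)-G(x) = \nabla G(x+t(y-x))^\top (y-x)$ for a single intermediate $t$, and then bounds this by Cauchy--Schwarz and divides through by $\|g(Y)-g(X)\|_F$ (trivial when that norm is zero). That argument needs only pointwise differentiability of $g$ along the segment, which matches the stated hypothesis exactly. Your fundamental-theorem-of-calculus argument is cleaner in that it never divides by the quantity being bounded, but it nominally asks for more regularity: you assert that $g\circ h$ is \emph{continuously} differentiable ``by hypothesis,'' whereas the Fact only assumes differentiability, so the integrand $\phi'(t)$ is not automatically Riemann integrable. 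This is not a genuine gap here --- the derivative is bounded by hypothesis, so $\phi$ is Lipschitz and the Lebesgue version of the FTC applies, and in this paper $g$ is always a smooth composition of $\exp$, products, and reciprocals --- but you should either weaken that sentence or note why integrability holds. Your handling of the operator-norm-versus-Frobenius-norm slack via $\|A\| \le \|A\|_F$ mirrors what the paper does implicitly, and the $\vect$-convention bookkeeping you flag is immaterial since $\|J\|_F = \|J^\top\|_F$.
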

\begin{proof}
    For the convenience of proof, we define $x$ and $y$ as follows:
    \begin{itemize}
        \item $x := \vect(X)$ and $y := \vect(Y)$.
        \item $h(x) := \vect(g(X))$ and $h(y) := \vect(g(Y))$.
        \item $h'(a)$ denotes a matrix which its $(i,j)$-th term is $\frac{\d h(a)_j}{\d a_i}$.
    \end{itemize}
    Assume we have $1$-variable function $\gamma(c) = f(x + c(y - x))$, we can apply Mean Value Theorem:
    \begin{align}\label{eq:mean_thm}
        f(y) - f(x) = \gamma(1) - \gamma(0) = \gamma'(t)(1 - 0) = \nabla f (x + t(y - x))^\top(y - x)
    \end{align}
    where $t \in [0,1]$.
    Let $G(c) := (h(y) - h(x))^\top h(c)$, we have
    \begin{align*}
        \| g(Y) - g(X) \|_F^2 
        = & ~ G(y) - G(x)\\
        = & ~ \nabla G(x + t(y - x))^\top (y - x)\\
        = & ~ (\underbrace{h'(x + t(y - x))}_{d^2 \times n^2} \cdot \underbrace{h(y) - h(x)}_{n^2 \times 1})^\top \cdot (y - x)\\
        \leq & ~ \| h'(x + t(y - x)) \| \cdot \| h(y) - h(x) \|_2 \cdot \| y - x \|_2
    \end{align*}
    where the second step follows from Eq.~\eqref{eq:mean_thm},
    the third step follows from the chain rule,
    the fourth step follows from the Cauchy-Schwartz inequality.
    
    By definition of matrix Frobenius norm and vector $\ell_2$ norm, we have
    \begin{align*}
        \| g(Y) - g(X) \|_F = \| h(y) - h(x) \|_2
    \end{align*}
    and
    \begin{align*}
        \| Y - X \|_F = \| y - x \|_2
    \end{align*}
    so we can show
    \begin{align*}
        \| g(Y) - g(X) \|_F \leq R \| Y - X \|_F
    \end{align*}
    which follows from $\|\frac{\d \vect(g(X))}{\d \vect(X)}\|_F \leq R$ for all $x \in C$.
\end{proof}

Here, we introduce the fact of Lipschitz for the product of functions.
\begin{fact}[Lipschitz for product of functions, Fact H.2 in \cite{dsxy23}]\label{fac:lip_product}
    Under following conditions
    \begin{itemize}
        \item Let $\{ f_i(x) \}_{i=1}^n$ be a sequence of functions with the same domain and range.
        \item For each $i \in [n]$, we have
        \begin{itemize}
            \item $f_i(x)$ is bounded: $\forall x$, $\| f_i(x) \|_F \leq R_i$ with $R_i \geq 1$.
            \item $f_i(x)$ is Lipschitz continuous: $\forall x,y$, $\| f_i(x) - f_i(y)\|_F \leq L_i \| x - y \|_F$.
        \end{itemize}
    \end{itemize}
    Then we have
    \begin{align*}
        \| \prod_{i=1}^n f_i(x) - \prod_{i=1}^n f_i(y) \|_F \leq 2^{n-1} \cdot \max_{i \in [n]}\{ L_i \} \cdot (\prod_{i=1}^n R_i)\cdot \| x - y \|_F
    \end{align*}
\end{fact}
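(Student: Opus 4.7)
The plan is to prove the bound by induction on $n$, using a telescoping decomposition of the difference of products. The base case $n=1$ is immediate from the Lipschitz assumption together with $R_1\geq 1$: we have $\|f_1(x)-f_1(y)\|_F\leq L_1\|x-y\|_F\leq L_1 R_1\|x-y\|_F=2^{0}\cdot\max_i L_i\cdot R_1\cdot\|x-y\|_F$.

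For the inductive step, suppose the claim holds for $n-1$ and let $P_{n-1}(x):=\prod_{i=1}^{n-1}f_i(x)$. The core idea is the standard ``add and subtract'' trick:
\begin{align*}
\prod_{i=1}^n f_i(x)-\prod_{i=1}^n f_i(y) = P_{n-1}(x)\bigl(f_n(x)-f_n(y)\bigr) + \bigl(P_{n-1}(x)-P_{n-1}(y)\bigr)f_n(y).
\end{align*}
Applying the triangle inequality followed by the Frobenius submultiplicativity from Fact~\ref{fac:norm_bound}, the first term is bounded by $\|P_{n-1}(x)\|_F\cdot\|f_n(x)-f_n(y)\|_F\leq\bigl(\prod_{i=1}^{n-1}R_i\bigr)\cdot L_n\|x-y\|_F$, while the second is bounded using the inductive hypothesis on $P_{n-1}$ together with $\|f_n(y)\|_F\leq R_n$ to give $2^{n-2}\cdot\max_{i\in[n-1]}L_i\cdot\bigl(\prod_{i=1}^{n}R_i\bigr)\|x-y\|_F$.

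Summing these two bounds and using $R_i\geq 1$ (so that $\prod_{i=1}^{n-1}R_i\leq\prod_{i=1}^{n}R_i$) together with $\max_{i\in[n-1]}L_i\leq\max_{i\in[n]}L_i$ and $L_n\leq\max_{i\in[n]}L_i$ gives a prefactor of $(1+2^{n-2})\leq 2^{n-1}$ (valid for $n\geq 2$), completing the induction. The main bookkeeping step is verifying that the $2^{n-1}$ constant is tight enough to absorb the ``$1$'' from the first term; this works because $1+2^{n-2}\leq 2^{n-1}$ whenever $n\geq 2$, and the $n=1$ base case handles the remaining case separately.

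The argument has no real obstacle: it is a clean induction built on the triangle inequality and submultiplicativity. The only mild subtlety is that one must use $R_i\geq 1$ at exactly the right spot to promote $\prod_{i=1}^{n-1}R_i$ to $\prod_{i=1}^n R_i$ so that the bounds of the two telescoped pieces can be combined into a single uniform prefactor; without this assumption the constant would depend on which factor is ``missing'' in each summand.
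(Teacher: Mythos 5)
Your induction is correct: the base case uses $R_1\geq 1$ properly, the telescoping split $P_{n-1}(x)\bigl(f_n(x)-f_n(y)\bigr)+\bigl(P_{n-1}(x)-P_{n-1}(y)\bigr)f_n(y)$ is valid, and the constant bookkeeping $1+2^{n-2}\leq 2^{n-1}$ for $n\geq 2$ checks out, with submultiplicativity of the Frobenius norm (Fact~\ref{fac:norm_bound}) justifying each bound. Note that the paper itself gives no proof of this fact --- it is imported verbatim as Fact H.2 from the cited reference --- so your argument serves as a correct, self-contained derivation of the standard kind one would expect for this statement.
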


\subsection{Lipschitz of \texorpdfstring{$\wt{f}(M)$}{}}
We introduce the lemma about Lipschitz of $\wt{f}(M)$.
\begin{lemma}[Lipschitz of $\wt{f}(M)$]\label{lem:lip_f}
    Under the following conditions
    \begin{itemize}
        \item Assumption~\ref{asp:bound_parameters} holds.
        \item Let $\wt{f}(M)$ be defined as Definition~\ref{def:soft_prob_func}.
    \end{itemize}
    For $M, \wt{M} \in \R^{d \times d}$, we have
    \begin{align*}
        \| \wt{f}(M) - \wt{f}(\wt{M}) \|_F \leq 2 d n R^3 \|M - \wt{M}\|_F
    \end{align*}
\end{lemma}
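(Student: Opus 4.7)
The plan is to apply the matrix mean value theorem (Fact~\ref{fac:mean_value}) to the matrix function $\wt{f}(\cdot)$. That fact reduces a Lipschitz bound in Frobenius norm to a uniform bound on the Frobenius norm of the Jacobian $\frac{\d \vect(\wt{f}(M))}{\d \vect(M)}$, which is exactly what Lemma~\ref{lem:bound_grad_f} already supplies.

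Concretely, I would first note that the domain $\R^{d \times d}$ is open and convex, and that $\wt{f}(M) = \diag(\wt{\alpha}(M))^{-1}(\wt{u}(M) \circ M_c)$ is differentiable in $M$ everywhere (since $\wt{\alpha}(M) > 0$ entrywise as a sum of positive exponentials on the causal support). Then I would invoke Lemma~\ref{lem:bound_grad_f} to get the uniform bound
\begin{align*}
    \Bigl\|\frac{\d \vect(\wt{f}(M))}{\d \vect(M)}\Bigr\|_F \leq 2 d n R^3
\end{align*}
valid for all $M \in \R^{d \times d}$ under Assumption~\ref{asp:bound_parameters}. Applying Fact~\ref{fac:mean_value} with $g = \wt{f}$ and $R$-bound $2dnR^3$ on any line segment joining $M$ and $\wt{M}$ yields
\begin{align*}
    \| \wt{f}(M) - \wt{f}(\wt{M}) \|_F \leq 2 d n R^3 \cdot \| M - \wt{M} \|_F,
\end{align*}
which is the desired inequality.

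There is essentially no main obstacle here: both ingredients are already in place in the excerpt. The only subtlety worth a sentence of care is verifying that Fact~\ref{fac:mean_value} applies to the whole segment between $M$ and $\wt{M}$, which follows because the gradient bound from Lemma~\ref{lem:bound_grad_f} is uniform in $M$ (it depends only on $n$, $d$, and $R$, not on the particular pruning mask), so we can take $C = \R^{d \times d}$. Thus the proof is essentially a one-line composition of Lemma~\ref{lem:bound_grad_f} and Fact~\ref{fac:mean_value}.
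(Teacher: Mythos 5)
Your proposal matches the paper's own proof: it applies Fact~\ref{fac:mean_value} to $\wt{f}$ with the uniform Jacobian bound $2dnR^3$ from Lemma~\ref{lem:bound_grad_f}, exactly as the paper does. The extra care you take about differentiability and convexity of the domain is a harmless (and slightly more rigorous) elaboration of the same argument.
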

\begin{proof}
    We have
    \begin{align*}
        \| \wt{f}(M) - \wt{f}(\wt{M}) \|_F 
        \leq & ~ \| \nabla \wt{f}(M) \|_F \cdot \|M - \wt{M}\|_F\\
        \leq & ~ 2 d n R^3 \cdot \|M - \wt{M}\|_F
    \end{align*}
    where the first step follows from Fact~\ref{fac:mean_value},
    the second step follows from Lemma~\ref{lem:bound_grad_f}.
\end{proof}

\subsection{Lipschitz of \texorpdfstring{$c(M)$}{}}
We introduce the lemma about Lipschitz of $c(M)$.
\begin{lemma}[Lipschitz of $c(M)$]\label{lem:lip_c}
    Under the following conditions
    \begin{itemize}
        \item Assumption~\ref{asp:bound_parameters} holds.
        \item Let $c(M)$ be defined as Definition~\ref{def:one_unit_loss}.
    \end{itemize}
    For $M, \wt{M} \in \R^{d \times d}$, we have
    \begin{align*}
        \| c(M) - c(\wt{M}) \|_F \leq 2 d n R^3 \|M - \wt{M}\|_F
    \end{align*}
\end{lemma}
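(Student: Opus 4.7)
The plan is straightforward because, by Definition~\ref{def:one_unit_loss}, we have $c(M) = \wt{f}(M) - f$, where $f$ is the \emph{unpruned} softmax probability matrix and is independent of the pruning mask $M$. Hence the $f$ terms cancel in the difference $c(M) - c(\wt{M})$, reducing the claim to a Lipschitz bound on $\wt{f}$ alone.

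Concretely, I would first write
\begin{align*}
c(M) - c(\wt{M}) = (\wt{f}(M) - f) - (\wt{f}(\wt{M}) - f) = \wt{f}(M) - \wt{f}(\wt{M}),
\end{align*}
take the Frobenius norm of both sides, and then invoke Lemma~\ref{lem:lip_f}, which gives exactly $\|\wt{f}(M) - \wt{f}(\wt{M})\|_F \leq 2dnR^3 \|M - \wt{M}\|_F$. Chaining the equality with this inequality yields the claim immediately.

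There is essentially no obstacle here, as the nonlinear content has already been absorbed into the prior lemma (Lemma~\ref{lem:lip_f}), whose proof in turn relies on the mean value theorem (Fact~\ref{fac:mean_value}) and the bound on $\|\nabla \wt{f}(M)\|_F$ established in Lemma~\ref{lem:bound_grad_f}. The only thing to double-check is that $f$ is indeed independent of $M$, which follows from its definition in Definition~\ref{def:soft_prob_func} (it uses $W$, not $M \circ W$). Thus the proof is a one-line reduction to Lemma~\ref{lem:lip_f}.
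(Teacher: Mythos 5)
Your proposal is correct and is essentially the paper's argument in a slightly cleaner packaging: the paper applies the mean value theorem to $c$ directly and uses $\nabla c(M) = \nabla \wt{f}(M)$ (Lemma~\ref{lem:grad_one_unit}) together with the gradient bound of Lemma~\ref{lem:bound_grad_f}, which is the derivative-level version of your function-level cancellation of $f$ followed by Lemma~\ref{lem:lip_f}. Both routes rest on the same underlying facts, so there is nothing to flag.
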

\begin{proof}
    We have
    \begin{align*}
        \| c(M) - c(\wt{M}) \|_F 
        \leq & ~ \| \nabla c(M) \|_F \cdot \|M - \wt{M}\|_F\\
        = & ~ \| \nabla \wt{f}(M) \|_F \cdot \|M - \wt{M}\|_F\\
        \leq & ~ 2 d n R^3 \cdot \|M - \wt{M}\|_F
    \end{align*}
    where the first step follows from Fact~\ref{fac:mean_value},
    the second step follows from Lemma~\ref{lem:grad_one_unit},
    the third step follows from Lemma~\ref{lem:bound_grad_f}.
\end{proof}

\subsection{Lipschitz of \texorpdfstring{$\wt{f}(M) \circ c(M)$}{}}
We introduce the lemma about Lipschitz of $\wt{f}(M) \circ c(M)$.
\begin{lemma}[Lipschitz of $\wt{f}(M) \circ c(M)$]\label{lem:lip_f_circ_c}
    Under the following conditions 
    \begin{itemize}
        \item Assumption~\ref{asp:bound_parameters} holds.
        \item Let $c(M)$ be defined as Definition~\ref{def:one_unit_loss}.
        \item Let $\wt{f}(M)$ be defined as Definition~\ref{def:soft_prob_func}.
    \end{itemize}
    For $M, \wt{M} \in \R^{d \times d}$, we have
    \begin{align*}
        \| \wt{f}(M) \circ c(M) - \wt{f}(\wt{M}) \circ c(\wt{M}) \|_F \leq 6dn^{3/2}R^3 \|M - \wt{M}\|_F
    \end{align*}
\end{lemma}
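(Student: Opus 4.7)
The plan is to prove this by a standard add-and-subtract (telescoping) trick, combined with the triangle inequality and the sub-multiplicative property of the Frobenius norm under Hadamard product. The key inputs are the boundedness of $\wt{f}(M)$ and $c(M)$ from Lemma~\ref{lem:bound_basic_func}, and the Lipschitz continuity of $\wt{f}(M)$ and $c(M)$ from Lemma~\ref{lem:lip_f} and Lemma~\ref{lem:lip_c} respectively.

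Concretely, I would first write
\begin{align*}
    \wt{f}(M) \circ c(M) - \wt{f}(\wt{M}) \circ c(\wt{M})
    = \wt{f}(M) \circ (c(M) - c(\wt{M})) + (\wt{f}(M) - \wt{f}(\wt{M})) \circ c(\wt{M}),
\end{align*}
which is the usual telescoping. Applying the triangle inequality followed by the Hadamard-product Frobenius bound $\|A \circ B\|_F \le \|A\|_F \cdot \|B\|_F$ from Fact~\ref{fac:norm_bound} gives
\begin{align*}
    \| \wt{f}(M) \circ c(M) - \wt{f}(\wt{M}) \circ c(\wt{M}) \|_F
    \le \|\wt{f}(M)\|_F \cdot \|c(M) - c(\wt{M})\|_F + \|\wt{f}(M) - \wt{f}(\wt{M})\|_F \cdot \|c(\wt{M})\|_F.
\end{align*}

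Next I would substitute the known norm bounds $\|\wt{f}(M)\|_F \le \sqrt{n}$ (Part 1 of Lemma~\ref{lem:bound_basic_func}) and $\|c(\wt{M})\|_F \le 2\sqrt{n}$ (Part 2), together with the Lipschitz bounds $\|c(M) - c(\wt{M})\|_F \le 2dnR^3 \|M - \wt{M}\|_F$ and $\|\wt{f}(M) - \wt{f}(\wt{M})\|_F \le 2dnR^3 \|M - \wt{M}\|_F$. The two terms then become $\sqrt{n} \cdot 2dnR^3$ and $2\sqrt{n} \cdot 2dnR^3$, summing to $6dn^{3/2}R^3$, which matches the claimed constant exactly.

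This proof is essentially a one-shot calculation with no real obstacle; the only point requiring mild care is to place the ``inner'' factor in the Hadamard split so that the resulting bound uses each previously established estimate at most once, keeping the constant clean. An alternative would be to try to invoke Fact~\ref{fac:lip_product}, but that fact is formulated for ordinary matrix products, so adapting it to the Hadamard setting is slightly awkward; the direct split above is cleaner and yields the sharp constant.
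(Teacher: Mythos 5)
Your proposal is correct and matches the paper's proof essentially line for line: the same telescoping split, triangle inequality, the Hadamard bound $\|A \circ B\|_F \le \|A\|_F \|B\|_F$, the bounds $\|\wt{f}(M)\|_F \le \sqrt{n}$ and $\|c(\wt{M})\|_F \le 2\sqrt{n}$, and the two Lipschitz constants $2dnR^3$, summing to $6dn^{3/2}R^3$. No further comment needed.
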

\begin{proof}
    We have
    \begin{align*}
        \LHS
        \leq & ~ \| \wt{f}(M) \circ c(M) - \wt{f}(M) \circ c(\wt{M}) \|_F + \| \wt{f}(M) \circ c(\wt{M}) - \wt{f}(\wt{M}) \circ c(\wt{M})\|_F\\
        \leq & ~ \| \wt{f}(M) \|_F \cdot \|c(M) - c(\wt{M}) \|_F + \| c(\wt{M}) \|_F \cdot \| \wt{f}(M) - \wt{f}(\wt{M})\|_F\\
        \leq & ~ \sqrt{n} \cdot \|c(M) - c(\wt{M}) \|_F + 2\sqrt{n} \cdot \| \wt{f}(M) - \wt{f}(\wt{M})\|_F\\
        \leq & ~ \sqrt{n} \cdot 2d n R^3 \|M - \wt{M}\|_F + 2\sqrt{n} \cdot 2d n R^3 \|M - \wt{M}\|_F
    \end{align*}
    where the first step follows from triangle inequality,
    the second step follows from Fact~\ref{fac:norm_bound},
    the third step follows from Lemma~\ref{lem:bound_basic_func},
    the fourth step follows from Lemma~\ref{lem:lip_c} and Lemma~\ref{lem:lip_f}.

    So we have
    \begin{align*}
        \| \wt{f}(M) \circ c(M) - \wt{f}(\wt{M}) \circ c(\wt{M}) \|_F \leq 6dn^{3/2}R^3 \|M - \wt{M}\|_F
    \end{align*}
\end{proof}

\subsection{Lipschitz of \texorpdfstring{$\diag((\wt{f}(M) \circ c(M)) \cdot {\bf 1}_n)$}{}}
We introduce the lemma about Lipschitz of $\diag((\wt{f}(M) \circ c(M)) \cdot {\bf 1}_n)$.
\begin{lemma}[Lipschitz of $\diag((\wt{f}(M) \circ c(M)) \cdot {\bf 1}_n)$]\label{lem:lip_diag}
    If the following conditions hold
    \begin{itemize}
        \item Assumption~\ref{asp:bound_parameters} holds.
        \item Let $c(M)$ be defined as Definition~\ref{def:one_unit_loss}.
        \item Let $\wt{f}(M)$ be defined as Definition~\ref{def:soft_prob_func}.
    \end{itemize}
    For $M, \wt{M} \in \R^{d \times d}$, we have
    \begin{align*}
        \| \diag((\wt{f}(M) \circ c(M)) \cdot {\bf 1}_n ) - \diag((\wt{f}(\wt{M}) \circ c(\wt{M})) \cdot {\bf 1}_n) \|_F \leq 6dn^2R^3 \|M - \wt{M}\|_F
    \end{align*}
\end{lemma}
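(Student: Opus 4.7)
The plan is to reduce the statement to the already-established Lipschitz bound for $\wt{f}(M) \circ c(M)$ (Lemma~\ref{lem:lip_f_circ_c}) by peeling off the $\diag(\cdot)$ and the right-multiplication by ${\bf 1}_n$ using elementary norm identities. No new smoothness analysis is needed; the only work is tracking the factors of $\sqrt{n}$ that appear when passing between matrix and vector Frobenius/$\ell_2$ norms.

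First I would apply the identity $\|\diag(u)\|_F = \|u\|_2$ (Fact~\ref{fac:algebra}) to rewrite
\[
\| \diag((\wt{f}(M) \circ c(M)) \cdot {\bf 1}_n ) - \diag((\wt{f}(\wt{M}) \circ c(\wt{M})) \cdot {\bf 1}_n) \|_F = \|\bigl((\wt{f}(M) \circ c(M)) - (\wt{f}(\wt{M}) \circ c(\wt{M}))\bigr) \cdot {\bf 1}_n\|_2,
\]
pulling the subtraction inside the vectorized sum by linearity. Next I would apply the submultiplicative bound $\|A\cdot {\bf 1}_n\|_2 \leq \|A\|_F \cdot \|{\bf 1}_n\|_2 = \sqrt{n}\,\|A\|_F$ from Fact~\ref{fac:norm_bound}, obtaining an upper bound of $\sqrt{n}\,\|\wt{f}(M) \circ c(M) - \wt{f}(\wt{M}) \circ c(\wt{M})\|_F$.

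Finally I would invoke Lemma~\ref{lem:lip_f_circ_c}, which already gives
\[
\|\wt{f}(M) \circ c(M) - \wt{f}(\wt{M}) \circ c(\wt{M})\|_F \leq 6 d n^{3/2} R^3 \, \|M - \wt{M}\|_F,
\]
and multiply through by $\sqrt{n}$ to obtain the claimed constant $6 d n^2 R^3$. There is no genuine obstacle here, since the proof is a direct two-line reduction; the only subtlety is the bookkeeping of the $\sqrt{n}$ factor that separates $\|\diag(A\cdot {\bf 1}_n)\|_F$ from $\|A\|_F$, and I would state the identity and the ${\bf 1}_n$-multiplication bound explicitly to make this visible.
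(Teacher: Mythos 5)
Your proposal is correct and follows essentially the same route as the paper's proof: rewrite the Frobenius norm of the diagonal difference as the $\ell_2$ norm of the vector $((\wt{f}(M)\circ c(M))-(\wt{f}(\wt{M})\circ c(\wt{M})))\cdot {\bf 1}_n$, extract the factor $\|{\bf 1}_n\|_2=\sqrt{n}$, and invoke Lemma~\ref{lem:lip_f_circ_c}. The only cosmetic difference is that the paper passes through the spectral norm ($\|Au\|_2\leq\|A\|\,\|u\|_2$ followed by $\|A\|\leq\|A\|_F$) whereas you bound directly with the Frobenius norm, which is equivalent.
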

\begin{proof}
    We have
    \begin{align}\label{eq:lip_diag_1}
        \LHS 
        = & ~ \| (\wt{f}(M) \circ c(M)) \cdot {\bf 1}_n - (\wt{f}(\wt{M}) \circ c(\wt{M})) \cdot {\bf 1}_n \|_2 \notag\\
        = & ~ \| ((\wt{f}(M) \circ c(M)) - (\wt{f}(\wt{M}) \circ c(\wt{M}))) \cdot {\bf 1}_n \|_2 \notag\\
        \leq & ~ \| \wt{f}(M) \circ c(M) - \wt{f}(\wt{M}) \circ c(\wt{M}) \| \cdot \| {\bf 1}_n \|_2 \notag\\
        = & ~  \sqrt{n} \| \wt{f}(M) \circ c(M) - \wt{f}(\wt{M}) \circ c(\wt{M}) \|
    \end{align}
    where the first step follows from Fact~\ref{fac:algebra},
    the second step follows from basic algebra,
    the third step follows from Fact~\ref{fac:norm_bound},
    and the fourth step follows from $\| {\bf 1}_n \|_2 = \sqrt{n}$.

    Then we have
    \begin{align}\label{eq:lip_diag_2}
        \| \wt{f}(M) \circ c(M) - \wt{f}(\wt{M}) \circ c(\wt{M}) \| \leq \| \wt{f}(M) \circ c(M) - \wt{f}(\wt{M}) \circ c(\wt{M}) \|_F
    \end{align}
    which follows from Fact~\ref{fac:norm_bound}.
    
    Following Eq.~\eqref{eq:lip_diag_1}, Eq.~\eqref{eq:lip_diag_2} and Lemma~\ref{lem:lip_f_circ_c}, we have
    \begin{align*}
        \LHS \leq & ~ \sqrt{n} \cdot 6dn^{3/2}R^3 \|M - \wt{M}\|_F = 6dn^2R^3 \|M - \wt{M}\|_F
    \end{align*}
\end{proof}

\subsection{Lipschitz of \texorpdfstring{$\diag((\wt{f}(M) \circ c(M)) \cdot {\bf 1}_n)\wt{f}(M)$}{}}
We introduce the lemma about Lipschitz of $\diag((\wt{f}(M) \circ c(M)) \cdot {\bf 1}_n)\wt{f}(M)$.
\begin{lemma}[Lipschitz of $\diag((\wt{f}(M) \circ c(M)) \cdot {\bf 1}_n)\wt{f}(M)$]\label{lem:lip_diag_f}
    If the following conditions hold
    \begin{itemize}
        \item Assumption~\ref{asp:bound_parameters} holds.
        \item Let $c(M)$ be defined as Definition~\ref{def:one_unit_loss}.
        \item Let $\wt{f}(M)$ be defined as Definition~\ref{def:soft_prob_func}.
    \end{itemize}
    For $M, \wt{M} \in \R^{d \times d}$, we have
    \begin{align*}
        \| \diag((\wt{f}(M) \circ c(M)) \cdot {\bf 1}_n )\wt{f}(M) - \diag((\wt{f}(\wt{M}) \circ c(\wt{M})) \cdot {\bf 1}_n)\wt{f}(\wt{M}) \|_F \leq 24 dn^{7/2}R^3 \|M - \wt{M}\|_F
    \end{align*}
\end{lemma}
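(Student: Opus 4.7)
The target inequality concerns the Frobenius norm of the difference of a product of two matrix-valued functions of $M$: the diagonal factor $A(M) := \diag((\wt f(M)\circ c(M))\cdot {\bf 1}_n)$ and the Softmax factor $B(M) := \wt f(M)$. My plan is to treat this as a direct application of the product-rule-style Lemma (Fact~\ref{fac:lip_product}) with two factors, after assembling the relevant size bounds and Lipschitz constants that have already been proved in the excerpt.

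\textbf{Ingredients to assemble.} From Lemma~\ref{lem:bound_basic_func} I have the two size bounds: $\|A(M)\|_F = \|\diag((\wt f(M)\circ c(M))\cdot {\bf 1}_n)\|_F \le 2n$ (Part~10) and $\|B(M)\|_F = \|\wt f(M)\|_F \le \sqrt n$ (Part~1). Both bounds are at least $1$ for $n \ge 1$, so Fact~\ref{fac:lip_product} applies. From Lemma~\ref{lem:lip_diag} I have the Lipschitz constant $L_A = 6dn^2 R^3$ for $A(M)$, and from Lemma~\ref{lem:lip_f} I have the Lipschitz constant $L_B = 2dnR^3$ for $B(M)$.

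\textbf{Putting it together.} Applying Fact~\ref{fac:lip_product} with the two factors $A,B$, the bound is
\begin{align*}
\| A(M)B(M) - A(\wt M)B(\wt M) \|_F \le 2^{2-1}\cdot \max\{L_A,L_B\}\cdot (\|A\|_F\cdot \|B\|_F)\cdot \|M-\wt M\|_F.
\end{align*}
Since $\max\{L_A,L_B\}=L_A=6dn^2 R^3$ and $\|A\|_F\|B\|_F \le 2n\cdot \sqrt n = 2n^{3/2}$, this gives the factor $2\cdot 6dn^2 R^3\cdot 2n^{3/2} = 24\,dn^{7/2}R^3$, which is exactly the claimed Lipschitz constant. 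Alternatively, one could reach the same bound through the standard ``add and subtract'' splitting $A(M)B(M)-A(\wt M)B(\wt M) = A(M)(B(M)-B(\wt M)) + (A(M)-A(\wt M))B(\wt M)$, the triangle inequality, and submultiplicativity of $\|\cdot\|_F$ under products, but going through Fact~\ref{fac:lip_product} is cleaner and matches the numerical constants in the statement.

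\textbf{Main obstacle.} There is no real analytic obstacle here, since all the hard work (gradient computation, mean-value-theorem bound on $\wt f$, Lipschitz of $c$, Lipschitz of the diagonalized row-sum) is already done in the preceding lemmas. The only thing to be careful about is bookkeeping: verifying that the size bounds used are the correct Frobenius bounds with the factor $\sqrt n$ coming from $\|{\bf 1}_n\|_2$, and checking that both $R_i \ge 1$ so Fact~\ref{fac:lip_product} is applicable. Once those are in place, the conclusion is immediate.
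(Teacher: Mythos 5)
Your proof is correct and follows essentially the same route as the paper: both apply Fact~\ref{fac:lip_product} with the two factors $\diag((\wt f(M)\circ c(M))\cdot {\bf 1}_n)$ and $\wt f(M)$, using the size bounds $2n$ and $\sqrt n$ from Lemma~\ref{lem:bound_basic_func} and the Lipschitz constants from Lemma~\ref{lem:lip_diag} and Lemma~\ref{lem:lip_f}, arriving at $2\cdot 6dn^2R^3\cdot 2n^{3/2}=24dn^{7/2}R^3$. The only cosmetic difference is that the paper writes the max over $\{6dn^2R^3,\,6dn^{3/2}R^3\}$ rather than your $\{6dn^2R^3,\,2dnR^3\}$, which does not change the result.
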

\begin{proof}
    Following Fact~\ref{fac:lip_product}, we have
    \begin{align*}
        & ~ \| \diag((\wt{f}(M) \circ c(M)) \cdot {\bf 1}_n )\wt{f}(M) - \diag((\wt{f}(\wt{M}) \circ c(\wt{M})) \cdot {\bf 1}_n)\wt{f}(\wt{M}) \|_F\\
        \leq & ~ 2^1 \cdot \max\{6dn^2R^3, 6dn^{3/2}R^3 \} \cdot (\sqrt{n} \cdot 2n) \|M - \wt{M}\|_F\\
        = & ~ 24 dn^{7/2}R^3 \|M - \wt{M}\|_F
    \end{align*}
    where we have the upper bound in Lemma~\ref{lem:bound_basic_func}, the Lipschitz of $ \diag((\wt{f}(M) \circ c(M)) $ and $ \wt{f}(M) $ in Lemma~\ref{lem:lip_f} and Lemma~\ref{lem:lip_diag}.
\end{proof}

\subsection{Lipschitz of Gradient}
We introduce the lemma about Lipschitz of the gradient.
\begin{theorem}[Lipschitz of the gradient, formal version of Theorem~\ref{thm:lip_grad_l_informal}]\label{thm:lip_grad_l}
    We can show $\nabla_M \mathcal{L}(M)$ is $L$-Lipschitz.

    If the following conditions hold
    \begin{itemize}
        \item Assumption~\ref{asp:bound_parameters} holds.
        \item Let $c(M)$ be defined as Definition~\ref{def:one_unit_loss}.
        \item Let $\wt{f}(M)$ be defined as Definition~\ref{def:soft_prob_func}.
    \end{itemize}
    For $M, \wt{M} \in \R^{d \times d}$, we have
    \begin{align*}
        \| \nabla_M \mathcal{L}(M) - \nabla_M \mathcal{L}(\wt{M}) \|_F \leq (\lambda+30 dn^{7/2}R^6) \cdot \| M - \wt{M} \|_F
    \end{align*}
\end{theorem}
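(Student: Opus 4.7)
The plan is to reduce the problem to the Lipschitz bounds already established for the building blocks of the gradient. Starting from the closed form in Theorem~\ref{thm:mat_view_l}, write
\begin{align*}
    \nabla_M \mathcal{L}(M) - \nabla_M \mathcal{L}(\wt{M})
    = W \circ \bigl( X^\top (p(M) - p(\wt{M})) X \bigr) + \lambda (M - \wt{M}),
\end{align*}
where $p = p_1 - p_2$ with $p_1(M) = c(M) \circ \wt{f}(M)$ and $p_2(M) = \diag(p_1(M) \cdot {\bf 1}_n)\wt{f}(M)$ as in Definition~\ref{def:p}. By the triangle inequality, the regularization term immediately contributes $\lambda \|M - \wt{M}\|_F$, so the remaining task is to control $\|W \circ (X^\top (p(M) - p(\wt{M})) X)\|_F$.

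Next, I would use the submultiplicativity of the Frobenius norm under Hadamard and matrix products (Fact~\ref{fac:norm_bound}) to bound
\begin{align*}
    \|W \circ (X^\top (p(M) - p(\wt{M})) X)\|_F
    \leq \|W\|_F \cdot \|X\|_F^2 \cdot \|p(M) - p(\wt{M})\|_F
    \leq R^3 \cdot \|p(M) - p(\wt{M})\|_F,
\end{align*}
invoking Assumption~\ref{asp:bound_parameters}. The main work is then to Lipschitz-bound $p$, which I would do by triangle inequality on $p_1 - p_2$: Lemma~\ref{lem:lip_f_circ_c} already gives $\|p_1(M) - p_1(\wt{M})\|_F \leq 6 d n^{3/2} R^3 \|M - \wt{M}\|_F$, and Lemma~\ref{lem:lip_diag_f} gives $\|p_2(M) - p_2(\wt{M})\|_F \leq 24 d n^{7/2} R^3 \|M - \wt{M}\|_F$. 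Using $n^{3/2} \leq n^{7/2}$ to align exponents, these combine to $\|p(M) - p(\wt{M})\|_F \leq 30 d n^{7/2} R^3 \|M - \wt{M}\|_F$.

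Putting the pieces together yields
\begin{align*}
    \|\nabla_M \mathcal{L}(M) - \nabla_M \mathcal{L}(\wt{M})\|_F
    \leq \lambda \|M - \wt{M}\|_F + R^3 \cdot 30 d n^{7/2} R^3 \|M - \wt{M}\|_F
    = (\lambda + 30 d n^{7/2} R^6)\|M - \wt{M}\|_F,
\end{align*}
which is the claim. Since the preceding lemmas have already absorbed all the non-trivial estimates (Lipschitz continuity of $\wt{f}$ via the mean value theorem for matrix functions, and the product-Lipschitz machinery of Fact~\ref{fac:lip_product}), no additional obstacles should arise here; the proof is essentially a bookkeeping exercise. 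The only mild subtlety is to make sure the Hadamard product with $W$ and the sandwiching by $X^\top(\cdot)X$ are bounded cleanly in Frobenius norm, which is handled by the standard inequalities $\|A \circ B\|_F \leq \|A\|_F \|B\|_F$ and $\|YZ\|_F \leq \|Y\|_F \|Z\|_F$ from Fact~\ref{fac:norm_bound}.
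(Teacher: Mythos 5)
Your proposal is correct and follows essentially the same route as the paper's proof: decompose via the closed form from Theorem~\ref{thm:mat_view_l}, peel off the $\lambda$ term, bound the Hadamard/sandwich structure by $R^3$, and combine Lemma~\ref{lem:lip_f_circ_c} with Lemma~\ref{lem:lip_diag_f} using $n \geq 1$ to reach $30dn^{7/2}R^6$. No meaningful differences to report.
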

\begin{proof}
    We have
    \begin{align}\label{eq:lip_l_0}
        & ~ \| \nabla_M \mathcal{L}(M) - \nabla_M \mathcal{L}(\wt{M}) \|_F \notag\\
        = & ~ \| W \circ (X^\top (c(M) \circ \wt{f}(M) - \diag(c(M) \circ \wt{f}(M) \cdot {\bf 1}_n)\wt{f}(M) \notag\\
        & ~ - c(\wt{M}) \circ \wt{f}(\wt{M}) + \diag(c(\wt{M}) \circ \wt{f}(\wt{M}) \cdot {\bf 1}_n)\wt{f}(\wt{M}) ) X) + \lambda M - \lambda \wt{M} \|_F \notag\\
        \leq & ~ \| W \circ (X^\top (c(M) \circ \wt{f}(M) - \diag(c(M) \circ \wt{f}(M) \cdot {\bf 1}_n)\wt{f}(M) \notag\\
        & ~ - c(\wt{M}) \circ \wt{f}(\wt{M}) + \diag(c(\wt{M}) \circ \wt{f}(\wt{M}) \cdot {\bf 1}_n)\wt{f}(\wt{M}) ) X)\|_F +  \|\lambda  (M - \wt{M}) \|_F
    \end{align}
    where the first step follows from Theorem~\ref{thm:mat_view_l},
    and the second step follows the triangle inequality.
    Now, we proof these two terms separately.

    For the first term, we have
    \begin{align*}
        & ~ \| W \circ (X^\top (c(M) \circ \wt{f}(M) - \diag(c(M) \circ \wt{f}(M) \cdot {\bf 1}_n)\wt{f}(M)\\
        & ~ - c(\wt{M}) \circ \wt{f}(\wt{M}) + \diag(c(\wt{M}) \circ \wt{f}(\wt{M}) \cdot {\bf 1}_n)\wt{f}(\wt{M}) ) X)\|_F \\
        \leq & ~ \| W \|_F \cdot \| X^\top (c(M) \circ \wt{f}(M) - \diag(c(M) \circ \wt{f}(M) \cdot {\bf 1}_n)\wt{f}(M) \\
        & ~ - c(\wt{M}) \circ \wt{f}(\wt{M}) + \diag(c(\wt{M}) \circ \wt{f}(\wt{M}) \cdot {\bf 1}_n)\wt{f}(\wt{M}) ) X\|_F\\
        \leq & ~ \| W \|_F  \cdot \| X \|_F^2  \cdot \|c(M) \circ \wt{f}(M) - \diag(c(M) \circ \wt{f}(M) \cdot {\bf 1}_n)\wt{f}(M) \\
        & ~ - c(\wt{M}) \circ \wt{f}(\wt{M}) + \diag(c(\wt{M}) \circ \wt{f}(\wt{M}) \cdot {\bf 1}_n)\wt{f}(\wt{M})\|_F\\
        \leq & ~ \| W \|_F  \cdot \| X \|_F^2  \cdot (\|c(M) \circ \wt{f}(M)  - c(\wt{M}) \circ \wt{f}(\wt{M}) \|_F \\
        & ~ + \|\diag(c(M) \circ \wt{f}(M) \cdot {\bf 1}_n)\wt{f}(M) - \diag(c(\wt{M}) \circ \wt{f}(\wt{M}) \cdot {\bf 1}_n)\wt{f}(\wt{M})\|_F)\\
        = & ~ R^3 \cdot (\|c(M) \circ \wt{f}(M)  - c(\wt{M}) \circ \wt{f}(\wt{M}) \|_F \\
        & ~ + \|\diag(c(M) \circ \wt{f}(M) \cdot {\bf 1}_n)\wt{f}(M) - \diag(c(\wt{M}) \circ \wt{f}(\wt{M}) \cdot {\bf 1}_n)\wt{f}(\wt{M})\|_F)
    \end{align*}
    where the first step and the second step follow from Fact~\ref{fac:norm_bound},
    the third step follows from triangle inequality,
    and the fourth step follows from Assumption~\ref{asp:bound_parameters}.

    Then we have
    \begin{align}\label{eq:lip_l_1}
        & ~ R^3  \cdot (\|c(M) \circ \wt{f}(M)  - c(\wt{M}) \circ \wt{f}(\wt{M}) \|_F \notag\\
        & ~ + \|\diag(c(M) \circ \wt{f}(M) \cdot {\bf 1}_n)\wt{f}(M) - \diag(c(\wt{M}) \circ \wt{f}(\wt{M}) \cdot {\bf 1}_n)\wt{f}(\wt{M})\|_F) \notag\\
        \leq & ~ R^3  \cdot (24 dn^{7/2}R^3 \|M - \wt{M}\|_F + 6dn^{3/2}R^3 \|M - \wt{M}\|_F) \notag\\
        \leq & ~ R^3  \cdot (30 dn^{7/2}R^3 \|M - \wt{M}\|_F) \notag \\
        = & ~ 30 d n^{7/2}R^6 \|M - \wt{M}\|_F
    \end{align}
    where the first step follows from Lemma~\ref{lem:lip_f_circ_c} and Lemma~\ref{lem:lip_diag_f},
    the second step follows from $n \geq 1$.
    
    For the second term, we have
    \begin{align}\label{eq:lip_l_2}
        \|\lambda  (M - \wt{M}) \|_F = \lambda \| M - \wt{M} \|_F
    \end{align}
    which follows from Fact~\ref{fac:norm_bound}.

    Finally, we have
    \begin{align*}
        \| \nabla_M \mathcal{L}(M) - \nabla_M \mathcal{L}(\wt{M}) \|_F
        \leq (\lambda+30 dn^{7/2}R^6) \cdot \| M - \wt{M} \|_F
    \end{align*}
    which follows from Eq.~\eqref{eq:lip_l_0}, Eq.~\eqref{eq:lip_l_1}, and Eq.~\eqref{eq:lip_l_2}.
\end{proof}
\section{Convergence of Gradient Descent}\label{sec:app:converge}

\subsection{Helpful Statements}
Here, we present useful facts that we use to prove our convergence result.

\begin{fact}\label{fac:abs_inequality}
We can show that for $a,b \in \R$ 
\begin{itemize}
\item Part 1.
\begin{align*}
    \sqrt{a^2 + b^2} \geq \frac{|a|+|b|}{ \sqrt{2} }
\end{align*}
\item Part 2. Suppose $|a| > |b|$
\begin{align*}
    \sqrt{ |a| - |b|} \geq \sqrt{|a|} - \sqrt{|b|}
\end{align*}
\end{itemize}
\end{fact}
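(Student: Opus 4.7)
The plan is to prove both inequalities by squaring both sides, since in each case both sides are manifestly non-negative and the square-root function is monotone on $[0,\infty)$. For Part 1, squaring the claimed inequality $\sqrt{a^2+b^2} \geq (|a|+|b|)/\sqrt{2}$ reduces it to $2(a^2+b^2) \geq (|a|+|b|)^2$. Expanding the right-hand side as $a^2 + 2|a||b| + b^2$ and subtracting, this is equivalent to $a^2 + b^2 \geq 2|a||b|$, which is just the AM-GM inequality $(|a|-|b|)^2 \geq 0$ rearranged. So Part 1 follows from one squaring step and a completed square.

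For Part 2, under the hypothesis $|a| > |b|$, both sides are non-negative: the left-hand side since $|a|-|b| > 0$, and the right-hand side because $\sqrt{|a|} > \sqrt{|b|}$ by monotonicity of $\sqrt{\cdot}$. Squaring the inequality $\sqrt{|a|-|b|} \geq \sqrt{|a|} - \sqrt{|b|}$ then reduces it to
\begin{align*}
|a| - |b| \geq |a| - 2\sqrt{|a|\,|b|} + |b|,
\end{align*}
which simplifies to $2\sqrt{|a|\,|b|} \geq 2|b|$. If $|b| = 0$, this is immediate; if $|b| > 0$, dividing both sides by $2\sqrt{|b|}$ gives $\sqrt{|a|} \geq \sqrt{|b|}$, which holds by the hypothesis $|a| > |b|$.

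There is essentially no main obstacle here: both parts reduce to elementary manipulations after squaring, with the only subtlety being to verify non-negativity of both sides before squaring (automatic for Part 1, and guaranteed by $|a|>|b|$ for Part 2). I will present each part as a short chain of equivalences terminating in a trivially true inequality, taking care to note the sign condition that justifies the squaring step in Part 2.
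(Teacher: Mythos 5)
Your proposal is correct and follows essentially the same route as the paper: both parts are proved by squaring and reducing to $(|a|-|b|)^2 \geq 0$ for Part 1 and to $\sqrt{|a|}\geq\sqrt{|b|}$ for Part 2. Your explicit check of non-negativity before squaring (and the separate $|b|=0$ case) is slightly more careful than the paper's write-up, but the substance is identical.
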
 

\begin{proof}
{\bf Proof of Part 1.}
    Square both side of the inequality in {\bf Part 1.}, we have
    \begin{align*}
        \LHS = a^2 + b^2
    \end{align*}
    and
    \begin{align*}
        \RHS = \frac{a^2 + 2|a|\cdot|b| + b^2}{2}.
    \end{align*}
    So we just need to prove
    \begin{align*}
        \LHS - \RHS = & ~ a^2 + b^2 - \frac{a^2 + 2|a|\cdot|b| + b^2}{2}\\
        = & ~ \frac{a^2 + b^2 -2|a| \cdot |b| }{2} \\
        = & ~ \frac{(|a| - |b|)^2}{2}\\
        \geq & 0
    \end{align*}
    which is hold because for any $x \in \R$, $x^2 \geq 0$.

    {\bf Proof of Part 2.}
    Square both side of the inequality in {\bf Part 2.}, we have
    \begin{align*}
        \LHS = |a| - |b|
    \end{align*}
    and
    \begin{align*}
        \RHS = |a| + |b| -2 \sqrt{|a| |b|}
    \end{align*}
    So we just need to prove
    \begin{align*}
        \LHS - \RHS 
        = & ~ |a| - |b| - |a| - |b| + 2 \sqrt{|a| |b|}\\
        = & ~ 2 \sqrt{|a| |b|} -2|b|\\
        = & ~ 2\sqrt{|b|} (\sqrt{|a|} - \sqrt{|b|})\\
        \geq & ~ 0
    \end{align*}
    which is hold because $|a| > |b|$ and $|b| \geq 0$.
\end{proof}

\subsection{Lower Bound on Frobenius Norm}
In this section, we present the lemma for the lower bound on the Frobenius norm.
\begin{lemma}\label{lem:lower_bound_B_plus_M}
If the following conditions hold
\begin{itemize}
    \item Let $B \in \R^{d \times d}$.
    \item Let $M \in [0,1]^{d \times d}$.
    \item Let $\lambda \in [0,1]$ be some constant.
    \item Suppose that $\| B \|_F \leq R$.
\end{itemize}

Then, we can show 
\begin{itemize}
\item Part 1.
\begin{align*}
    \| B + \lambda M \|_F^2 \geq  \|B\|_F^2 + \lambda^2\|M\|_F^2 - 2 R \lambda d
\end{align*}
\item Part 2.
\begin{align*}
    \| B + \lambda M \|_F \geq \frac{1}{\sqrt{2}} ( \| B \|_F + \lambda \| M \|_F ) - \sqrt{2 R \lambda d}
\end{align*}
\end{itemize}
 
\end{lemma}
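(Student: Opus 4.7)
}

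The plan is to prove Part 1 by an expansion-plus-Cauchy-Schwarz argument, and then derive Part 2 from Part 1 by taking square roots using both parts of Fact~\ref{fac:abs_inequality}.

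For Part 1, I would expand the squared Frobenius norm using Fact~\ref{fac:F_norm_A_plus_B} to get
\begin{align*}
    \| B + \lambda M \|_F^2 = \|B\|_F^2 + \lambda^2 \|M\|_F^2 + 2\lambda \langle B, M \rangle.
\end{align*}
The cross term is the only obstacle: I would lower bound $\langle B, M\rangle$ via Cauchy--Schwarz (Fact~\ref{fac:norm_bound}), giving $\langle B, M\rangle \geq -\|B\|_F \cdot \|M\|_F \geq -R \cdot d$. Here I use $\|B\|_F \leq R$ by hypothesis and $\|M\|_F \leq d$ since every entry of $M$ lies in $[0,1]$ (and hence $\|M\|_F^2 \leq d^2$, as already observed in Section~\ref{sec:app:bound_func}). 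Substituting yields Part 1.

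For Part 2, I would take square roots of the Part 1 inequality. The strategy has two cases. In the "nontrivial" case where $\|B\|_F^2 + \lambda^2 \|M\|_F^2 \geq 2R\lambda d$, I would apply Part 2 of Fact~\ref{fac:abs_inequality} with $|a| = \|B\|_F^2 + \lambda^2 \|M\|_F^2$ and $|b| = 2R\lambda d$ to obtain
\begin{align*}
    \sqrt{\|B\|_F^2 + \lambda^2 \|M\|_F^2 - 2R\lambda d} \geq \sqrt{\|B\|_F^2 + \lambda^2 \|M\|_F^2} - \sqrt{2R\lambda d},
\end{align*}
and then apply Part 1 of Fact~\ref{fac:abs_inequality} to the remaining square root:
\begin{align*}
    \sqrt{\|B\|_F^2 + \lambda^2 \|M\|_F^2} \geq \tfrac{1}{\sqrt{2}}(\|B\|_F + \lambda \|M\|_F).
\end{align*}
Chaining these with Part 1 of the lemma gives the desired bound.

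The only subtle point---and the one I would flag as the "hard part," though it is still routine---is the degenerate case $\|B\|_F^2 + \lambda^2\|M\|_F^2 < 2R\lambda d$, where the square root in the intermediate step may not even be real. In that case the AM--QM-style inequality $\tfrac{1}{\sqrt{2}}(\|B\|_F + \lambda\|M\|_F) \leq \sqrt{\|B\|_F^2 + \lambda^2\|M\|_F^2} < \sqrt{2R\lambda d}$ implies the claimed right-hand side is nonpositive, while the left-hand side $\|B + \lambda M\|_F$ is always nonnegative, so the bound holds trivially. Packaging the two cases together completes the proof.
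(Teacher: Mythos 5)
Your proposal is correct and follows essentially the same route as the paper: Part 1 via the expansion in Fact~\ref{fac:F_norm_A_plus_B} plus Cauchy--Schwarz with $\|B\|_F \leq R$ and $\|M\|_F \leq d$, and Part 2 by chaining the two parts of Fact~\ref{fac:abs_inequality}. Your explicit treatment of the degenerate case $\|B\|_F^2 + \lambda^2\|M\|_F^2 < 2R\lambda d$ is actually more careful than the paper's proof, which applies Part 2 of Fact~\ref{fac:abs_inequality} without checking its hypothesis $|a| > |b|$; your observation that the right-hand side is then nonpositive correctly closes that gap.
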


\begin{proof}
{\bf Proof of Part 1.}
We can show that
\begin{align}\label{eq:convergence:add_squre}
    \| B+ \lambda M \|_F^2 
    = & ~ \| B \|_F^2 + \lambda^2 \| M \|_F^2 + 2 \langle B, \lambda M \rangle \notag\\
    \geq & ~  \| B \|_F^2 + \lambda^2 \| M \|_F^2 - 2 \| B \|_F \cdot \| \lambda M \|_F  \notag\\
    \geq & ~  \| B \|_F^2 + \lambda^2 \| M \|_F^2 - 2 R \lambda d
\end{align}
where the first step follows from Fact~\ref{fac:F_norm_A_plus_B},
the second step follows from Fact~\ref{fac:norm_bound},
the third step follows from the upper bound of $\| B \|_F$ and $\|M\|_F$.

{\bf Proof of Part 2.}
Taking the square root on both sides, we get 
\begin{align*}
\| B+ \lambda M \|_F
\geq & ~ \sqrt{ \| B \|_F^2 + \lambda^2 \| M \|_F^2 - 2 R \lambda d } \\
\geq & ~ \sqrt{ \| B \|_F^2 + \lambda^2 \| M \|_F^2 } - \sqrt{2 R \lambda d} \\
\geq & ~ \frac{1}{\sqrt{2}} ( \| B \|_F + \lambda \| M \|_F ) - \sqrt{2 R \lambda d}
\end{align*}
where the first step follows from Eq.~\eqref{eq:convergence:add_squre},
the second step follows from {\bf Part 2.} of Fact~\ref{fac:abs_inequality},
and the third step follows from {\bf Part 1.} of Fact~\ref{fac:abs_inequality}.

\end{proof}

\subsection{Sandwich Lower Bound on Frobenius Norm}
Here, we introduce a sandwich trace fact.
\begin{fact}\label{fac:trace_BAB}
If $A \succeq \beta I$, then $\tr[B^\top A B] \geq \beta \tr[B^\top B]$.
\end{fact}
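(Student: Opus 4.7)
The plan is to reduce the claimed inequality to the statement that $\tr[B^\top C B] \geq 0$ whenever $C \succeq 0$, and then expand the trace as a sum of columnwise quadratic forms. First I would subtract the two sides and use linearity of the trace to write
\[
\tr[B^\top A B] - \beta \tr[B^\top B] = \tr[B^\top (A - \beta I) B].
\]
By hypothesis $A \succeq \beta I$, so $C := A - \beta I$ is positive semidefinite in the sense of Section~\ref{sec:app:notation}, i.e.\ $x^\top C x \geq 0$ for every vector $x$.

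Next I would expand the remaining trace columnwise. Writing $B \in \R^{n \times m}$ with columns $B_{*,1}, \ldots, B_{*,m}$, the identity $\tr[X^\top Y] = \sum_{j} X_{*,j}^\top Y_{*,j}$ applied with $X = B$ and $Y = CB$ gives
\[
\tr[B^\top C B] = \sum_{j=1}^{m} (B_{*,j})^\top C\, B_{*,j}.
\]
Each term on the right is non-negative by the PSD property of $C$, so the sum is non-negative, and rearranging recovers $\tr[B^\top A B] \geq \beta \tr[B^\top B]$.

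I do not expect any real obstacle: the argument is a two-step manipulation using only linearity of trace and the definition of PSD already recalled in the preliminaries. As an internal consistency check one can also argue spectrally by writing $A - \beta I = U \Lambda U^\top$ with $\Lambda \succeq 0$, setting $Y := U^\top B$, and observing $\tr[B^\top (A - \beta I) B] = \tr[Y^\top \Lambda Y] = \sum_i \Lambda_{ii}\, \|Y_{i,*}\|_2^2 \geq 0$; this yields the same conclusion but the columnwise expansion is the cleaner route to include in the paper, since it avoids introducing a diagonalization and matches the notation already used in the appendix.
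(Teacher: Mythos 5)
Your proposal is correct and follows essentially the same route as the paper: both reduce the claim to $\tr[B^\top (A-\beta I)B]\geq 0$ using the positive semidefiniteness of $A-\beta I$. The only cosmetic difference is that you justify the non-negativity of this trace by expanding it as a sum of quadratic forms $\sum_j B_{*,j}^\top (A-\beta I) B_{*,j}$, whereas the paper invokes that congruence preserves PSD-ness and that a PSD matrix has non-negative trace; both justifications are valid.
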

\begin{proof}
As $A \succeq \beta I $, we have
$A - \beta I \succeq 0$. Multiplying both sides by $ B^\top$ on the left and \( B \) on the right (noting that these operations preserve the positive semidefiniteness), we have
\begin{align*}
    B^\top (A - \beta I) B \succeq 0.
\end{align*}

Taking the trace and utilizing the property that the trace of a positive semidefinite matrix is non-negative, we have
\begin{align*}
    \tr[B^\top A B - \beta B^\top B] \geq 0, 
\end{align*}
which simplifies to
\begin{align*}
    \tr[B^\top A B] - \beta \tr[B^\top B] \geq 0.
\end{align*}
This concludes the proof.
\end{proof}
We establish a sandwich lower bound on the Frobenius norm.
\begin{lemma}[Formal version of Lemma~\ref{lem:lower_bound_XBX_informal}]\label{lem:lower_bound_XBX}
If the following conditions hold
\begin{itemize}
    \item Let $B \in \R^{n \times n}$ and $X \in \R^{n \times d}$.
    \item Assume that $XX^\top \succeq \beta I$.
\end{itemize}

Then, we have 
\begin{align*}
    \| X^\top B X \|_F \geq \beta \| B \|_F
\end{align*}
\end{lemma}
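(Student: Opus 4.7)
The plan is to reduce the inequality to two applications of Fact~\ref{fac:trace_BAB}, one for each copy of $X$ that appears in $X^\top B X$. The idea is that the assumption $XX^\top \succeq \beta I$ gives, for any matrix $M$ of compatible shape, a lower bound of the form $\tr[M^\top (XX^\top) M] \geq \beta \tr[M^\top M]$, so applying this once handles the outer $X$ and once handles the inner $X$.

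First I would square the Frobenius norm and rewrite it using the trace:
\begin{align*}
\|X^\top B X\|_F^2 = \tr[(X^\top B X)^\top (X^\top B X)] = \tr[(BX)^\top (XX^\top)(BX)],
\end{align*}
using the cyclic property of the trace. Setting $A := XX^\top$ and $M := BX$, Fact~\ref{fac:trace_BAB} applied to the PSD hypothesis $A \succeq \beta I$ yields
\begin{align*}
\|X^\top B X\|_F^2 \;\geq\; \beta \,\tr[(BX)^\top (BX)] \;=\; \beta \|BX\|_F^2.
\end{align*}

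Next I would lower-bound $\|BX\|_F^2$ by the same token. Using the cyclic property again,
\begin{align*}
\|BX\|_F^2 = \tr[(BX)(BX)^\top] = \tr[B\,(XX^\top)\,B^\top] = \tr[(B^\top)^\top (XX^\top)(B^\top)].
\end{align*}
A second application of Fact~\ref{fac:trace_BAB} with $M := B^\top$ gives $\|BX\|_F^2 \geq \beta \|B^\top\|_F^2 = \beta \|B\|_F^2$. Chaining the two bounds produces $\|X^\top B X\|_F^2 \geq \beta^2 \|B\|_F^2$, and taking square roots (noting $\beta \geq 0$ since $\beta I \preceq XX^\top$) gives the claim.

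\textbf{Expected obstacle.} There is no serious obstacle: the only point requiring care is keeping track of shapes and the cyclic rearrangements so that each trace expression is presented in the form $\tr[M^\top A M]$ demanded by Fact~\ref{fac:trace_BAB}. The doubling of $\beta$ in the intermediate bound (which collapses back to a single $\beta$ after the square root) is a sanity check that two factors of $X$ are being peeled off.
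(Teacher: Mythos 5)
Your proposal is correct and follows essentially the same route as the paper: both square the Frobenius norm, express it as a trace, and apply the sandwich fact $\tr[M^\top (XX^\top) M] \geq \beta \tr[M^\top M]$ twice to peel off the two copies of $X$, then take square roots. The only difference is cosmetic (which copy of $X$ you peel first and whether you write $A A^\top$ or $A^\top A$), so no further comparison is needed.
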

\begin{proof}

We can show that
\begin{align*}
    \| X^\top B X \|_F^2 
    = & ~ \tr[ X^\top B X X^\top B^\top X ] \\
    \geq & ~ \beta \cdot \tr[X^\top B B^\top X] \\
    = & ~ \beta \cdot \tr[B^\top XX^\top B] \\
    \geq & ~ \beta^2 \cdot \tr[B^\top B] \\
    = & ~ \beta^2 \cdot \| B \|_F^2
\end{align*}
where the first step, the third step, and the fifth step follow from Fact~\ref{fac:algebra},
the second step and the fourth step follows from Fact~\ref{fac:trace_BAB} and $XX^\top \succeq \beta I$.

Taking the square root of both sides, we finish the proof.
\end{proof}

\subsection{Lower Bound on Hadamard Product Between Two Matrices}
We present the lemma for the lower bound on the Hadamard product between two matrices in this section.

\begin{lemma}\label{lem:lower_bound_hadamard}
If the following conditions hold
\begin{itemize}
    \item Let $B, W \in \R^{d \times d}$.
\end{itemize}
Then, we have 
\begin{align*}
    \max_{i,j \in [d]} \{|W_{i,j}|\} \cdot \|B\|_F \ge \| W \circ B \|_F \geq  \min_{i,j \in [d]} \{|W_{i,j}|\} \cdot \|B\|_F.
\end{align*}
\end{lemma}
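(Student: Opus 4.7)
The plan is to unpack the Frobenius norm of the Hadamard product entrywise and then bound each squared entry of $W$ by the extremal values $\max_{i,j}|W_{i,j}|^2$ and $\min_{i,j}|W_{i,j}|^2$ respectively. Concretely, I would start from the identity
\begin{align*}
    \| W \circ B \|_F^2 = \sum_{i=1}^{d} \sum_{j=1}^{d} W_{i,j}^2 \, B_{i,j}^2,
\end{align*}
which follows from the entrywise definition of the Hadamard product together with the definition of the Frobenius norm (see Fact~\ref{fac:algebra} for $(W \circ B)_{i,j} = W_{i,j} B_{i,j}$).

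Next, I would derive the upper bound by replacing each $W_{i,j}^2$ by $\max_{i,j \in [d]}\{|W_{i,j}|\}^2$ and factoring it out of the sum, leaving exactly $\sum_{i,j} B_{i,j}^2 = \|B\|_F^2$. Taking square roots then gives $\|W \circ B\|_F \le \max_{i,j}|W_{i,j}| \cdot \|B\|_F$. Symmetrically, bounding each $W_{i,j}^2$ from below by $\min_{i,j \in [d]}\{|W_{i,j}|\}^2$ and factoring out gives the lower bound $\|W \circ B\|_F \ge \min_{i,j}|W_{i,j}| \cdot \|B\|_F$ after taking square roots. Combining the two inequalities yields the claimed sandwich.

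There is no real obstacle here: the argument is a direct entrywise comparison and does not require any of the structural properties of $W$ or $B$ beyond what is assumed. The only mild care needed is to note that $|W_{i,j}| \ge 0$ so that the square roots behave monotonically, and that when $\min_{i,j}|W_{i,j}| = 0$ the lower bound still holds trivially. This lemma will be used in the PL-inequality argument of Lemma~\ref{lem:PL_ineq_informal} to relate $\|W \circ (X^\top p X)\|_F$ to $\|X^\top p X\|_F$, which in turn is bounded below via Lemma~\ref{lem:lower_bound_XBX}.
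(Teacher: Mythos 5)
Your proof is correct and is exactly the entrywise argument the paper invokes when it says the result "directly follows from the definition of the Frobenius norm": expand $\|W\circ B\|_F^2=\sum_{i,j}W_{i,j}^2B_{i,j}^2$, bound each $W_{i,j}^2$ by the extremal squared magnitudes, and take square roots. No gaps; the paper simply omits these details.
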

\begin{proof}
The proof directly follows from the definition of the Frobenius norm.
\end{proof}

\subsection{Final Bound}

We introduce some useful lemmas that we use to prove the final bound.
\begin{lemma}\label{lem:lower_bound_p_row}
If the following conditions hold
\begin{itemize}
    \item Let $b \in \R^{n}$ and $\langle b, {\bf 1}_n \rangle = 0$. 
    \item Let $f \in [\delta,1]^n$ and $\langle f, {\bf 1}_n \rangle = 1$. 
\end{itemize}
Then we have
\begin{align*}
\| (b - \langle b , f \rangle {\bf 1}_n ) \circ f \|_2 \geq \delta \| b \|_2.
\end{align*}

\end{lemma}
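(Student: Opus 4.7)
\textbf{Proof plan for Lemma~\ref{lem:lower_bound_p_row}.} The plan is to introduce the shorthand $c := b - \langle b, f \rangle {\bf 1}_n$ and exploit the two orthogonality-type conditions $\langle b, {\bf 1}_n\rangle = 0$ and $\langle f, {\bf 1}_n\rangle = 1$ to show that subtracting $\langle b, f\rangle {\bf 1}_n$ only \emph{increases} the $\ell_2$ norm, not decreases it. The entrywise lower bound $f_i \geq \delta$ then transfers this norm estimate from $c$ to $c \circ f$.

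\textbf{Step 1: Expand $\|c\|_2^2$.} First I would compute
\begin{align*}
\| c \|_2^2 = \| b \|_2^2 - 2 \langle b, f \rangle \langle b, {\bf 1}_n \rangle + \langle b, f \rangle^2 \, \| {\bf 1}_n \|_2^2.
\end{align*}
Using the assumption $\langle b, {\bf 1}_n \rangle = 0$ and $\| {\bf 1}_n \|_2^2 = n$, the cross term vanishes and this simplifies to $\| c \|_2^2 = \| b \|_2^2 + n \langle b, f \rangle^2 \geq \| b \|_2^2$. Hence $\| c \|_2 \geq \| b \|_2$.

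\textbf{Step 2: Transfer to $c \circ f$ via the pointwise lower bound on $f$.} Since $f_i \geq \delta$ for every $i \in [n]$, I have
\begin{align*}
\| c \circ f \|_2^2 = \sum_{i=1}^n c_i^2 f_i^2 \geq \delta^2 \sum_{i=1}^n c_i^2 = \delta^2 \| c \|_2^2.
\end{align*}
Taking square roots and chaining with Step 1 gives $\| c \circ f \|_2 \geq \delta \| c \|_2 \geq \delta \| b \|_2$, which is exactly the claim.

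\textbf{Main obstacle.} There is no serious obstacle here; the whole argument hinges on recognizing that the $\langle b, f \rangle {\bf 1}_n$ correction acts like a projection onto $\mathrm{span}({\bf 1}_n)^{\perp}$ with respect to $b$, and because $b$ is already in that subspace, the correction only pushes $c$ \emph{further} from the origin rather than closer. The only place one has to be careful is not to accidentally conflate $\langle b, f \rangle$ with $\langle b, {\bf 1}_n \rangle$; keeping the two inner products distinct in the expansion of Step 1 makes the cancellation of the cross term transparent. Everything else is a direct entrywise bound, so the proof should be only a few lines.
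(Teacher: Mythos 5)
Your proposal is correct and follows essentially the same route as the paper: the paper applies the entrywise bound $f_i \geq \delta$ first and then invokes the Pythagorean theorem (your explicit expansion of $\|b - \langle b,f\rangle {\bf 1}_n\|_2^2$ with the cross term vanishing is exactly that), whereas you do the two steps in the opposite order, which is immaterial. Both arguments, like yours, never actually need the hypothesis $\langle f, {\bf 1}_n\rangle = 1$.
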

\begin{proof}
Note that $\langle b, {\bf 1}_n \rangle = 0$  so that $b$ and ${\bf 1}_n$ are orthogonal with each other. Then, we have
\begin{align*}
\| (b - \langle b , f \rangle {\bf 1}_n ) \circ f \|_2 \ge & ~ \delta \| b - \langle b , f \rangle {\bf 1}_n \|_2 \\
= & ~ \delta \sqrt{\| b\|_2^2 + \| \langle b , f \rangle {\bf 1}_n \|_2^2  } \\
\ge & ~ \delta \|b\|_2,
\end{align*}
where the second step is from the Pythagorean theorem.
\end{proof}

We present our final bound for proving the PL inequality.
\begin{lemma}[Formal version of Lemma~\ref{lem:lower_bound_p_informal}]\label{lem:lower_bound_p}

If the following conditions hold
\begin{itemize}
    \item Let $B \in \R^{n \times n}$ and each row summation is zero, i.e., $B \cdot {\bf 1}_n = {\bf 0}_n$. 
    \item Let $\wt{f}(M) \in [0,1]^{n \times n}$ and each row summation is 1, i.e., $\wt{f}(M) \cdot {\bf 1}_n = {\bf 1}_n$. 
    \item Assume that $\min_{i,j \in [n]} \wt{f}(M)_{i,j} \ge \delta > 0$.
\end{itemize}

Then, we can show
\begin{align*}
    \| B \circ \wt{f}(M) - \diag((B \circ \wt{f}(M)) \cdot {\bf 1}_n) \wt{f}(M) \|_F \geq \delta \cdot \|B\|_F 
\end{align*}

\end{lemma}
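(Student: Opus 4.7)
The plan is to reduce the matrix inequality to a row-by-row statement and then apply the already-proved one-row bound in Lemma~\ref{lem:lower_bound_p_row}. Writing $b_i := B_i \in \R^n$ and $f_i := \wt{f}(M)_i \in \R^n$ for the $i$-th rows, the hypotheses $B \cdot {\bf 1}_n = {\bf 0}_n$ and $\wt{f}(M) \cdot {\bf 1}_n = {\bf 1}_n$ translate exactly into the per-row conditions $\langle b_i, {\bf 1}_n \rangle = 0$ and $\langle f_i, {\bf 1}_n \rangle = 1$, and the entrywise lower bound $f_{i,j} \geq \delta$ carries over directly.

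Next I would compute the $i$-th row of the matrix appearing on the left-hand side. The $i$-th row of $B \circ \wt{f}(M)$ is $b_i \circ f_i$, and the $i$-th entry of the column vector $(B \circ \wt{f}(M)) \cdot {\bf 1}_n$ is $\langle b_i \circ f_i, {\bf 1}_n \rangle = \langle b_i, f_i \rangle$ by Fact~\ref{fac:algebra}. Consequently the $i$-th row of $\diag((B \circ \wt{f}(M)) \cdot {\bf 1}_n) \wt{f}(M)$ is $\langle b_i, f_i \rangle \cdot f_i$, and so the $i$-th row of the full expression is
\begin{align*}
    b_i \circ f_i - \langle b_i, f_i \rangle f_i = (b_i - \langle b_i, f_i \rangle {\bf 1}_n) \circ f_i,
\end{align*}
using ${\bf 1}_n \circ f_i = f_i$.

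Now I invoke Lemma~\ref{lem:lower_bound_p_row} on each row to obtain
\begin{align*}
    \|(b_i - \langle b_i, f_i \rangle {\bf 1}_n) \circ f_i\|_2 \geq \delta \|b_i\|_2.
\end{align*}
Squaring, summing over $i \in [n]$, and using the fact that the squared Frobenius norm of a matrix equals the sum of squared row $\ell_2$-norms gives
\begin{align*}
    \| B \circ \wt{f}(M) - \diag((B \circ \wt{f}(M)) \cdot {\bf 1}_n) \wt{f}(M) \|_F^2 \geq \delta^2 \sum_{i=1}^n \|b_i\|_2^2 = \delta^2 \|B\|_F^2.
\end{align*}
Taking square roots yields the claim.

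There is no real obstacle beyond the bookkeeping of unpacking the diagonal-matrix-times-matrix expression into a per-row form; the Pythagorean-style step that makes the row-level inequality tight is already handled inside Lemma~\ref{lem:lower_bound_p_row}, where the orthogonality $\langle b_i, {\bf 1}_n \rangle = 0$ is what allows one to discard the $\langle b_i, f_i \rangle {\bf 1}_n$ correction without loss. Thus the only care needed here is confirming that the row-wise hypotheses of Lemma~\ref{lem:lower_bound_p_row} are inherited verbatim from the global hypotheses on $B$ and $\wt{f}(M)$.
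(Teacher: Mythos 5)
Your proposal is correct and follows essentially the same route as the paper's own proof: identify the $i$-th row of the left-hand-side matrix as $(B_i - \langle B_i, \wt{f}(M)_i\rangle {\bf 1}_n)\circ \wt{f}(M)_i$, apply Lemma~\ref{lem:lower_bound_p_row} row by row, and sum the squared $\ell_2$ norms. Your explicit unpacking of the $\diag((B\circ\wt{f}(M))\cdot{\bf 1}_n)\wt{f}(M)$ term is a slightly more detailed account of a step the paper leaves implicit, but the argument is the same.
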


\begin{proof}
For any $i \in [n]$, let $B_i \in \R^n$ be the $i$-th row of $B$, and we have $\langle B_i, {\bf 1}_n \rangle = 0$ by the first condition. 

For any $i \in [n]$, let $\wt{f}(M)_i \in \R^n$ be the $i$-th row of $\wt{f}(M)$, and we have $\langle \wt{f}(M)_i, {\bf 1}_n \rangle = 1$ by the second condition and $\wt{f}(M)_{i,j} \in [\delta, 1]$ by the third condition. 

By Lemma~\ref{lem:lower_bound_p_row},  for any $i \in [n]$, we have
\begin{align*}
    \| (B_i - \langle B_i , \wt{f}(M)_i \rangle {\bf 1}_n ) \circ \wt{f}(M)_i \|_2 \geq \delta \| B_i \|_2.
\end{align*}
Then, we have
\begin{align*}
   & ~ \| B \circ \wt{f}(M) - \diag((B \circ \wt{f}(M)) \cdot {\bf 1}_n) \wt{f}(M) \|_F^2 \\
   = & ~ \sum_{i\in [n]} \| (B_i - \langle B_i , \wt{f}(M)_i \rangle {\bf 1}_n ) \circ \wt{f}(M)_i \|_2^2 \\
   \ge & ~ \sum_{i\in [n]} \delta^2 \| B_i \|_2^2\\
   = & ~ \delta^2 \|B\|_F^2. 
\end{align*}

\end{proof}

\subsection{PL Inequality}
Here, we present the bound for one unit loss function.
\begin{lemma}\label{lem:bound_c}
    If the following conditions hold
    \begin{itemize}
        \item Let $c(M)$ be defined in Definition~\ref{def:one_unit_loss}.
    \end{itemize}
    We have
    \begin{align*}
        \|c(M)\|_F \le ~ 2 \sqrt{n}.
    \end{align*}
\end{lemma}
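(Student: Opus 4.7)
The plan is to expand $c(M)$ via its definition, then apply the triangle inequality together with the fact that both $\wt{f}(M)$ and $f$ are row-stochastic matrices (i.e., each row is a softmax probability vector). Concretely, by Definition~\ref{def:one_unit_loss}, we have $c(M) = \wt{f}(M) - f$, so the triangle inequality in Fact~\ref{fac:norm_bound} yields
\begin{align*}
    \|c(M)\|_F \le \|\wt{f}(M)\|_F + \|f\|_F.
\end{align*}

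The main step is to show that $\|\wt{f}(M)\|_F \le \sqrt{n}$ and $\|f\|_F \le \sqrt{n}$. I would argue as follows: by Definition~\ref{def:soft_prob_func}, each row of $\wt{f}(M)$ is a softmax distribution masked by the causal pattern, so for each $i_0 \in [n]$ the entries $\wt{f}(M)_{i_0, j_0}$ are nonnegative and satisfy $\sum_{j_0=1}^n \wt{f}(M)_{i_0, j_0} = 1$. Since each entry lies in $[0,1]$, we have $\wt{f}(M)_{i_0,j_0}^2 \le \wt{f}(M)_{i_0,j_0}$, which gives $\sum_{j_0=1}^n \wt{f}(M)_{i_0,j_0}^2 \le 1$ and hence $\|\wt{f}(M)\|_F^2 \le n$. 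The same reasoning applies to $f$, since $f$ has the identical softmax structure (just with $W$ in place of $M \circ W$).

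Combining the two bounds, we obtain $\|c(M)\|_F \le \sqrt{n} + \sqrt{n} = 2\sqrt{n}$, which is the desired conclusion.

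No step here is a real obstacle: the lemma is essentially a restatement of Part 2 of Lemma~\ref{lem:bound_basic_func}, which has already been proved in the excerpt by exactly the argument outlined above. The only minor care needed is to note that the presence of the causal mask $M_c$ in Definition~\ref{def:soft_prob_func} does not alter the fact that each row sums to $1$, because the normalizing factor $\wt{\alpha}(M)_{i_0}$ is defined precisely as the sum of the unnormalized masked entries in row $i_0$.
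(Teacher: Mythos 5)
Your proof is correct and follows essentially the same route as the paper: the paper also writes $\|c(M)\|_F \le \|\wt{f}(M)\|_F + \|f\|_F \le 2\sqrt{n}$ via the triangle inequality and the row-stochasticity of the (masked) softmax matrices, exactly as established in Parts 1--2 of Lemma~\ref{lem:bound_basic_func}. Your remark that the causal mask does not break the row-sum-equals-one property is a correct and worthwhile observation.
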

\begin{proof}
    We have
    \begin{align*}
        \|c(M)\|_F \le & ~ \|\wt{f}(M)\|_F + \|f\|_F \\
        \le & ~ 2 \sqrt{n}, 
    \end{align*}
    where the first step follows from Definition~\ref{def:one_unit_loss} and triangle inequality,
    the second step follows $x_1^2 + \dots + x_n^2 \le (x_1 + \dots + x_n)^2$ when $x_i \ge 0$ for any $i \in [n]$.
\end{proof}

We present the lemma to prove the PL inequality.
\begin{lemma}\label{lem:bound_diag_f}
    If the following conditions hold
    \begin{itemize}
        \item Let $\wt{f}(M)$ be defined in Definition~\ref{def:soft_prob_func}.
        \item Let $c(M)$ be defined in Definition~\ref{def:one_unit_loss}.
    \end{itemize}
    We have
    \begin{align*}
        \| \diag((c(M) \circ \wt{f}(M)) \cdot {\bf 1}_n) \wt{f}(M)\|_F \le \sqrt{n}.
    \end{align*}
\end{lemma}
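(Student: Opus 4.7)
The plan is to split $\| \diag((c(M) \circ \wt{f}(M)) \cdot {\bf 1}_n) \wt{f}(M)\|_F$ as the product of an operator norm and a Frobenius norm, then bound each factor separately. Concretely, I will use the sub-multiplicativity estimate
\begin{align*}
\| \diag(v) \wt{f}(M) \|_F \le \|\diag(v)\| \cdot \|\wt{f}(M)\|_F = \|v\|_\infty \cdot \|\wt{f}(M)\|_F,
\end{align*}
with $v := (c(M) \circ \wt{f}(M)) \cdot {\bf 1}_n \in \R^n$. The second factor is already at our disposal: Part 1 of Lemma~\ref{lem:bound_basic_func} gives $\|\wt{f}(M)\|_F \le \sqrt{n}$.

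The core step is therefore to show $\|v\|_\infty \le 1$. For each $i \in [n]$, I would write
\begin{align*}
v_i = \sum_{j=1}^n c(M)_{i,j} \wt{f}(M)_{i,j} = \sum_{j=1}^n \wt{f}(M)_{i,j}^2 - \sum_{j=1}^n f_{i,j}\wt{f}(M)_{i,j},
\end{align*}
using $c(M) = \wt{f}(M) - f$ from Definition~\ref{def:one_unit_loss}. Now both $f$ and $\wt{f}(M)$ have non-negative entries and each row sums to $1$ (from the Softmax definitions together with the causal mask), so $\wt{f}(M)_{i,j} \in [0,1]$ and $f_{i,j} \in [0,1]$. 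This immediately yields
\begin{align*}
0 \le \sum_{j} \wt{f}(M)_{i,j}^2 \le \max_j \wt{f}(M)_{i,j} \cdot \sum_j \wt{f}(M)_{i,j} \le 1,
\end{align*}
and by the analogous argument $0 \le \sum_j f_{i,j}\wt{f}(M)_{i,j} \le 1$. Subtracting, we obtain $v_i \in [-1,1]$, hence $\|v\|_\infty \le 1$.

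Combining the two bounds finishes the proof: $\| \diag(v) \wt{f}(M) \|_F \le 1 \cdot \sqrt{n} = \sqrt{n}$. I do not expect any real obstacle here; the only subtle point is recognizing that the cancellation from $c(M)$ being a difference of two row-stochastic matrices keeps the row sums $v_i$ in $[-1,1]$ rather than the cruder $[-2,2]$ one would get from $\|c(M)\|_\infty \le 2$ alone. A slightly less sharp but equally valid route would be to expand $\|\diag(v)\wt f(M)\|_F^2 = \sum_i v_i^2 \sum_j \wt f(M)_{i,j}^2$ and apply $\sum_j \wt f(M)_{i,j}^2 \le 1$ and $v_i^2 \le 1$ directly, avoiding the operator-norm step.
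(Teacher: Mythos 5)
Your proposal is correct and follows essentially the same route as the paper: bound the product by $\|v\|_\infty\cdot\|\wt f(M)\|_F$ with $v=(c(M)\circ\wt f(M))\cdot{\bf 1}_n$, show $\|v\|_\infty\le 1$ using that $\wt f(M)$ is row-stochastic and $c(M)_{i,j}\in[-1,1]$, and invoke $\|\wt f(M)\|_F\le\sqrt n$. The paper gets $|v_i|\le 1$ slightly more directly from $\max_j|c(M)_{i,j}|\le 1$ and $\sum_j\wt f(M)_{i,j}=1$, but this is the same argument.
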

\begin{proof}
    We have
    \begin{align*}
        \| \diag((c(M) \circ \wt{f}(M)) \cdot {\bf 1}_n) \wt{f}(M)\|_F \le &  \max_{i\in {n}}\{ |(c(M)_i \circ \wt{f}(M)_i) \cdot {\bf 1}_n|\} \cdot \|\wt{f}(M)\|_F\\
        \le & ~ \|\wt{f}(M)\|_F\\
        \le & ~ \sqrt{n}, 
    \end{align*}
    where the first step is by Frobenius norm definition and the second step follows from $\langle \wt{f}(M)_i, {\bf 1}_n \rangle = 1$ and $c(M)_i \in [-1,1]^n$ for any $i \in [n]$.
\end{proof}

Finally, we can show the lemma for PL inequality.
\begin{lemma}[PL inequality, formal version of~\ref{lem:PL_ineq_informal}]\label{lem:PL_ineq}
If the following conditions hold,
\begin{itemize}
    \item Let $M \in [0,1]^{d\times d}$ .
    \item Let $\lambda \in [0,1]$ be some constant.
    \item Assume that $XX^\top \succeq \beta I$.
    \item Assume that $\min_{i,j \in [n]} \wt{f}(M)_{i,j} \ge \delta > 0$.
    \item Let $\mathcal{L}(M)$ be defined in Definition~\ref{def:loss_func}.
\end{itemize}
Furthermore,
\begin{itemize}
    \item Let $\alpha=2$.
    \item Let $\mu = 2 \min_{i,j \in [d]} \{|W_{i,j}|\} \cdot \beta \cdot \delta$.
    \item Let $\xi = 12 \sqrt{n} \max_{i,j \in [d]} \{|W_{i,j}|\} \cdot \|X\|_F^2 \cdot  \lambda d / \mu$.
\end{itemize}
We have
\begin{align*}
    \|\nabla_M \mathcal{L}(M)\|_F^\alpha \geq \frac{1}{2} \mu ( \|c(M)\|_F^2 +  \frac{2\lambda^2}{\mu} \|M\|_F^2 -\xi).
\end{align*}
\end{lemma}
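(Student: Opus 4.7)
The plan is to lower bound $\|\nabla_M \mathcal{L}(M)\|_F^2$ by decomposing the gradient using the closed form from Theorem~\ref{thm:mat_view_l}, namely $\nabla_M \mathcal{L}(M) = W \circ (X^\top p X) + \lambda M$, and then propagating a chain of Frobenius-norm lower bounds from $W \circ (X^\top p X)$ all the way down to $c(M)$. The error introduced by the regularization term $\lambda M$ will be shown to be exactly the quantity absorbed by $\mu \xi / 2$ on the right-hand side.

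\textbf{Step 1: Separate the gradient and the regularization.} Using Lemma~\ref{lem:lower_bound_B_plus_M} (Part 1) with $B := W \circ (X^\top p X)$, write
\begin{align*}
    \|\nabla_M \mathcal{L}(M)\|_F^2 \;\geq\; \|W \circ (X^\top p X)\|_F^2 + \lambda^2 \|M\|_F^2 - 2 R \lambda d,
\end{align*}
provided we can exhibit an upper bound $R \geq \|W \circ (X^\top p X)\|_F$. By Lemma~\ref{lem:bound_c} and Lemma~\ref{lem:bound_diag_f} together with the definition of $p$ in Definition~\ref{def:p} and a triangle inequality, $\|p\|_F \leq 3\sqrt{n}$, which via Fact~\ref{fac:norm_bound} yields $\|W \circ (X^\top p X)\|_F \leq \max_{i,j}|W_{i,j}| \cdot \|X\|_F^2 \cdot \|p\|_F \leq 3\sqrt{n}\,\max_{i,j}|W_{i,j}|\,\|X\|_F^2$. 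Choosing $R$ to be this bound produces the error term $2R\lambda d = 6\sqrt{n}\,\max_{i,j}|W_{i,j}|\,\|X\|_F^2\,\lambda d$, which matches $\mu \xi / 2$ by the definition of $\xi$.

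\textbf{Step 2: Cascade the lower bound down to $c(M)$.} For the leading term, apply three lemmas in sequence. First, Lemma~\ref{lem:lower_bound_hadamard} gives $\|W \circ (X^\top p X)\|_F \geq \min_{i,j}|W_{i,j}| \cdot \|X^\top p X\|_F$. Next, since $XX^\top \succeq \beta I$, Lemma~\ref{lem:lower_bound_XBX} gives $\|X^\top p X\|_F \geq \beta \|p\|_F$. Finally, to invoke Lemma~\ref{lem:lower_bound_p} on $p = c(M) \circ \wt{f}(M) - \diag((c(M) \circ \wt{f}(M)) \cdot {\bf 1}_n) \wt{f}(M)$, verify its two hypotheses: each row of $c(M) = \wt{f}(M) - f$ sums to zero because both $\wt{f}(M)$ and $f$ are row-stochastic by Definition~\ref{def:soft_prob_func}, and each row of $\wt{f}(M)$ sums to $1$ with entries bounded below by $\delta$ by assumption. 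This yields $\|p\|_F \geq \delta \|c(M)\|_F$. Composing the three inequalities gives $\|W \circ (X^\top p X)\|_F \geq \min_{i,j}|W_{i,j}| \cdot \beta \cdot \delta \cdot \|c(M)\|_F = (\mu/2)\,\|c(M)\|_F$.

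\textbf{Step 3: Assemble and match the $(g,\xi,2,\mu)$-proxy PL form.} Combining Steps 1--2 gives
\begin{align*}
    \|\nabla_M \mathcal{L}(M)\|_F^2 \;\geq\; \tfrac{\mu}{2}\,\|c(M)\|_F^2 + \lambda^2 \|M\|_F^2 - \tfrac{\mu \xi}{2} \;=\; \tfrac{\mu}{2}\Big(\|c(M)\|_F^2 + \tfrac{2\lambda^2}{\mu}\|M\|_F^2 - \xi\Big),
\end{align*}
which is precisely the claimed inequality for $\alpha = 2$.

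\textbf{Main obstacle.} The delicate part is Step~2: establishing Lemma~\ref{lem:lower_bound_p}'s hypothesis on the vanishing row sums of $c(M)$ and then propagating a clean lower bound through three successive inequalities without losing more than a factor of $\min_{i,j}|W_{i,j}|\,\beta\,\delta$. A secondary bookkeeping challenge is choosing the uniform upper bound $R$ for $\|W \circ (X^\top p X)\|_F$ tightly enough so that the error $2R\lambda d$ exactly equals $\mu\xi/2$, which pins down the constant $12$ in the definition of $\xi$.
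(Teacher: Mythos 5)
Your proposal follows essentially the same route as the paper's own proof: the same split of gradient and regularizer via Lemma~\ref{lem:lower_bound_B_plus_M} with the same upper bound $R = 3\sqrt{n}\,\max_{i,j}|W_{i,j}|\,\|X\|_F^2$ matching the error to $\mu\xi/2$, and the same three-lemma cascade (Lemmas~\ref{lem:lower_bound_hadamard}, \ref{lem:lower_bound_XBX}, \ref{lem:lower_bound_p}) down to $\|c(M)\|_F$ after checking that $c(M)$ has zero row sums. The one wrinkle, which your write-up shares with the paper's, is that squaring the unsquared chain $\|W\circ(X^\top pX)\|_F \ge (\mu/2)\|c(M)\|_F$ gives the constant $(\mu/2)^2$ in front of $\|c(M)\|_F^2$, not $\mu/2$, so the stated form only follows if the cascaded constants are carried through squared (or $\mu/2\ge 1$).
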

\begin{proof}

We have $\wt{f}(M) \cdot {\bf 1}_n = {\bf 1}_n$  and $f \cdot {\bf 1}_n = {\bf 1}_n$ by Definition~\ref{def:soft_prob_func}.
Note that $c(M) = \wt{f}(M) - f $ by Definition~\ref{def:one_unit_loss}. 
Thus, we have $c(M) \cdot {\bf 1}_n = {\bf 0}_n$. 

On the other hand, we have
\begin{align*}
    & ~ \| W \circ (X^\top (c(M) \circ \wt{f}(M) - \diag((c(M) \circ \wt{f}(M)) \cdot {\bf 1}_n) \wt{f}(M)) X)  \|_F \\
    \le & ~ \max_{i,j \in [d]} \{|W_{i,j}|\} \cdot \| X^\top (c(M) \circ \wt{f}(M) - \diag((c(M) \circ \wt{f}(M)) \cdot {\bf 1}_n) \wt{f}(M)) X \|_F\\
    \le & ~ \max_{i,j \in [d]} \{|W_{i,j}|\} \cdot \|X\|_F^2 \cdot  \| c(M) \circ \wt{f}(M) - \diag((c(M) \circ \wt{f}(M)) \cdot {\bf 1}_n) \wt{f}(M)\|_F\\
    \le & ~ \max_{i,j \in [d]} \{|W_{i,j}|\} \cdot \|X\|_F^2 \cdot  (\| c(M) \circ \wt{f}(M)\|_F + \| \diag((c(M) \circ \wt{f}(M)) \cdot {\bf 1}_n) \wt{f}(M)\|_F)
    \\
    \le & ~ \max_{i,j \in [d]} \{|W_{i,j}|\} \cdot \|X\|_F^2 \cdot  (\| c(M) \|_F + \| \diag((c(M) \circ \wt{f}(M)) \cdot {\bf 1}_n) \wt{f}(M)\|_F)\\
    \le & ~ \max_{i,j \in [d]} \{|W_{i,j}|\} \cdot \|X\|_F^2 \cdot  (2\sqrt{n} + \sqrt{n})\\
    = & ~ \max_{i,j \in [d]} \{|W_{i,j}|\} \cdot \|X\|_F^2 \cdot  3\sqrt{n} 
\end{align*}
where the first and fourth steps follow Lemma~\ref{lem:lower_bound_hadamard},
the second step follows from Frobenius norm property, the third step follows from triangle inequality, 
the fifth step follows from Lemma~\ref{lem:bound_c} and Lemma~\ref{lem:bound_diag_f}.

Let $\alpha = 2$. We have the following
\begin{align*}
    & ~ \|\nabla_M \mathcal{L}(M)\|_F^2 \\
    = & ~ \|W \circ (X^\top (c(M) \circ \wt{f}(M) - \diag((c(M) \circ \wt{f}(M)) \cdot {\bf 1}_n) \wt{f}(M)) X) + \lambda M\|_F^2 \\
    \geq & ~ \| W \circ (X^\top (c(M) \circ \wt{f}(M) - \diag((c(M) \circ \wt{f}(M)) \cdot {\bf 1}_n) \wt{f}(M)) X)  \|_F^2 + \lambda^2 \| M \|_F^2 - \alpha_1 \\
    \geq & ~ \alpha_2 \cdot  \| X^\top (c(M) \circ \wt{f}(M) - \diag((c(M) \circ \wt{f}(M)) \cdot {\bf 1}_n) \wt{f}(M)) X  \|_F^2 + \lambda^2 \| M \|_F^2 - \alpha_1  \\
    \geq & ~ \alpha_2 \cdot \alpha_3 \cdot \| c(M) \circ \wt{f}(M) - \diag((c(M) \circ \wt{f}(M)) \cdot {\bf 1}_n) \wt{f}(M)  \|_F^2 + \lambda^2 \| M \|_F^2 - \alpha_1 \\
    \geq & ~ \alpha_2 \cdot \alpha_3 \cdot \alpha_4 \cdot \| c(M) \|_F^2 + \lambda^2 \| M \|_F^2 - \alpha_1 \\
    = & ~ \frac{1}{2} \mu ( \|c(M)\|_F^2 +  \frac{2\lambda^2}{\mu} \|M\|_F^2 -\xi),
\end{align*}
where the second step follows from Lemma~\ref{lem:lower_bound_B_plus_M} and $\alpha_1= 6 \sqrt{n} \max_{i,j \in [d]} \{|W_{i,j}|\} \cdot \|X\|_F^2 \cdot  \lambda d$, the third step follows from Lemma~\ref{lem:lower_bound_hadamard} and $\alpha_2=\min_{i,j \in [d]} \{|W_{i,j}|\}$, the fourth step follows from Lemma~\ref{lem:lower_bound_XBX} and $\alpha_3=\beta$, the fifth step follows from Lemma~\ref{lem:lower_bound_p} and $\alpha_4=\delta$, and the last step follows from $\mu = 2 \alpha_2 \cdot \alpha_3 \cdot \alpha_4$ and $\xi = 2\alpha_1 / \mu$.
\end{proof}




\end{document}